\newcommand{\tcite}[1]{\citet{#1}}
\newcommand{\pcite}[1]{\citep{#1}}
\def\ps@pprintTitle{%
 \let\@oddhead\@empty
 \let\@evenhead\@empty
 \def\@oddfoot{ A preprint. \hfill \today }%
 \def\@evenfoot{\thepage\hfill}}
\newcolumntype{.}{D{.}{.}{-1}}
\newcolumntype{B}{>{\boldmath\DC@{.}{.}{-1}}c<{\DC@end}}
\newcolumntype{E}{>{\centering\DC@{.}{-}{-1}}c<{\DC@end}}
\newcommand\mc[1]{\multicolumn{1}{c}{#1}}
\newcommand\bft[1]{\multicolumn{1}{B}{#1}}
\tikzset{Variable/.style={circle,thick,draw}}
\tikzset{Factor/.style={rectangle,thick,draw}}
\tikzset{WeirdArrow/.style={dashed}}
\tikzset{every edge/.style={draw=black,very thick}}
\theoremstyle{definition}
\newdefinition{defff}{Definition}
\newcommand{\Appendix}{}
\newenvironment{deff}[1]{%
    \begin{defff}#1}{
    \end{defff}%
}
\theoremstyle{definition}
\newtheorem{exmp}{Example}
\newtheorem{prop}{Proposition}
\newcommand{\predicates}{\mathcal{P}}
\newcommand{\constants}{C}
\newcommand{\fol}{\mathcal{L}}
\newcommand{\objects}{O}
\newcommand{\dataset}{\mathcal{D}}
\newcommand{\btheta}{{\boldsymbol{\theta}}}
\newcommand{\luk}{\L{}ukasiewicz}
\newcommand{\loss}{\mathcal{L}}
\newcommand{\corpus}{\mathcal{K}}
\newcommand{\interpretationfol}{\eta}
\newcommand{\interpretation}{\eta_{\btheta}}
\newcommand{\val}{e_{\btheta}}
\newcommand{\batch}{b}
\newcommand{\deri}{d}
\newcommand{\dmp}[2]{\dmpa{I}{#1}{#2}}
\newcommand{\dmpa}[3]{\frac{\partial #1(#2, #3)}{\partial #3}}
\newcommand{\dmt}[2]{\dmta{I}{#1}{#2}}
\newcommand{\dmta}[3]{-\frac{\partial #1(#2, #3)}{\partial #2}}
\newcommand{\dmtp}[2]{\dmtap{I}{#1}{#2}}
\newcommand{\dmtap}[3]{-\frac{\partial #1(#2, #3)}{\partial #2}}
\newcommand{\instantiations}{M}
\newcommand{\instantiation}{\mu}
\newcommand{\bx}{{\boldsymbol{x}}}
\newcommand{\world}{{\boldsymbol{w}}}
\newcommand{\graphwidth}{0.35\linewidth}
\newcommand{\cons}{\mathrm{cons}}
\newcommand{\ant}{\mathrm{ant}}
\newcommand{\mpmag}{|\cons|}
\newcommand{\mtmag}{|\ant|}
\newcommand{\mpratio}{\cons\%}
\newcommand{\mpcorupdate}{{cu_{\cons}}}
\newcommand{\mtcorupdate}{{cu_{\ant}}}
\newcommand{\mpupdateratio}{cu_\cons\%}
\newcommand{\mtupdateratio}{cu_\ant\%}
\newcommand{\dl}{DL\xspace}
\newcommand{\dfl}{DFL\xspace}
\newcommand{\dlogic}{Differentiable Logics\xspace}
\newcommand{\dfuzz}{Differentiable Fuzzy Logics\xspace}
\newcommand{\productlogic}{Differentiable Product Fuzzy Logic\xspace}
\newcommand{\dprob}{Differentiable Probabilistic Logics\xspace}
\newcommand{\dpfl}{DPFL\xspace}
\newcommand{\logprod}{A_{\log T_P}}
\newcommand{\predP}{\pred{P}}
\newcommand{\predQ}{\pred{Q}}
\newcommand{\indic}[1]{\mathbf{1}_{#1}}
\newcommand{\dualfigure}[4]{\begin{figure}[h]
\centering
\begin{subfigure}[b]{\graphwidth}
\includegraphics[width=\linewidth]{#1}
\end{subfigure}
\begin{subfigure}[b]{\graphwidth}
\includegraphics[width=\linewidth]{#2}
\end{subfigure}%
\caption{#3}
\label{#4}
\end{figure}}
\newcommand{\imgT}{images/theory/}
\newcommand{\imgE}{images/experiments/}
\newcommand{\pred}[1]{\textsf{#1}}
\newcommand{\argmax}{\text{argmax}}
\newcommand{\argmin}{\text{argmin}}
\DeclareMathOperator*{\aggregate}{\scalerel*{\mathrm{A}}{\sum}}
\DeclareMathOperator*{\Eaggregate}{\scalerel*{\mathrm{E}}{\sum}}
\begin{document}
\title{Analyzing Differentiable Fuzzy Logic Operators}

\begin{abstract}
The AI community is increasingly putting its attention towards combining symbolic and neural approaches, as it is often argued that the strengths and weaknesses of these approaches are complementary. 
One recent trend in the literature are weakly supervised learning techniques that employ operators from fuzzy logics. 
In particular, these use prior background knowledge described in such logics to help the training of a neural network from unlabeled and noisy data. 
By interpreting logical symbols using neural networks, this background knowledge can be added to regular loss functions, hence making reasoning a part of learning.

We study, both formally and empirically, how a large collection of logical operators from the fuzzy logic literature behave in a differentiable learning setting. 
We find that many of these operators, including some of the most well-known, are highly unsuitable in this setting.
A further finding concerns the treatment of implication in these fuzzy logics, and shows a strong imbalance between gradients driven by the antecedent and the consequent of the implication. 
Furthermore, we introduce a new family of fuzzy implications (called sigmoidal implications) to tackle this phenomenon. 
Finally, we empirically show that it is possible to use \dfuzz for semi-supervised learning, and compare how different operators behave in practice. We find that, to achieve the largest performance improvement over a supervised baseline, we have to resort to non-standard combinations of logical operators which perform well in learning, but no longer satisfy the usual logical laws. 
\end{abstract}

\begin{keyword}
Fuzzy logic \sep Neural-symbolic AI\sep Learning with constraints
\end{keyword}

\date{}

\author[add1]
  {Emile van Krieken\corref{cor1}}
\ead{e.van.krieken@vu.nl}

\author[add1,add2]
  {Erman Acar}
\ead{erman.acar@vu.nl}

\author[add1]
  {Frank van Harmelen}
\ead{Frank.van.Harmelen@vu.nl}
   
\cortext[cor1]{Corresponding author}
\address[add1]{Vrije Universiteit Amsterdam}
\address[add2]{Civic AI Lab, Amsterdam}
\maketitle





\section{Introduction}
In recent years, integrating symbolic and statistical 
approaches to Artificial Intelligence (AI) gained considerable attention \pcite{garcez2012neural,Besold2017a}. This research line has gained further traction due to recent influential critiques on purely statistical deep learning \pcite{marcus2018deep,pearl2018theoretical}, which has been the focus of the AI community in the last decade. While deep learning has brought many important breakthroughs in computer vision \pcite{brock2018large}, natural language processing \pcite{radford2019language} and reinforcement learning \pcite{silver2017mastering}, the concern is that progress will be halted if its shortcomings are not dealt with. Among these is the massive amounts of data that deep learning models need to learn even a simple concept. In contrast, symbolic AI can easily reuse concepts and can express domain knowledge using only a single logical statement.
Finally, it is much easier to integrate background knowledge using symbolic AI. 

However, symbolic AI has issues with scalability: dealing with large amounts of data while performing complex reasoning task, and is not able to deal with the noise and ambiguity of e.g. sensory data. The latter is related to the well-known \textit{symbol grounding problem} which \tcite{harnad1990symbol} defines as how \say{the semantic interpretation of a formal symbol system can be made intrinsic to the system, rather than just parasitic on the meanings in our heads}. In particular, symbols refer to concepts that have an intrinsic meaning to us humans, but computers manipulating these symbols cannot \emph{understand} (or \emph{ground}) this meaning. On the other hand, a properly trained deep learning model excels at modeling complex sensory data. These models could bridge the gap between symbolic systems and the real world. Therefore, several recent approaches \pcite{diligenti2017,garnelo2016towards,Serafini2016,DBLP:conf/nips/2018,Evans2018} aim at interpreting symbols that are used in logic-based systems using deep learning models. These are among the first systems to implement \say{a hybrid nonsymbolic/symbolic system (...) in which the elementary symbols are grounded in (...) non-symbolic representations that pick out, from their proximal sensory projections, the distal object categories to which the elementary symbols refer.} \tcite{harnad1990symbol}.

\subsection{Reasoning and Learning using Gradient Descent}

We introduce \textit{\dfuzz} (\dfl) which aims to integrate reasoning and learning by using logical formulas expressing background knowledge. The symbols in these formulas are interpreted using a deep learning model of which the parameters are to be learned. \dfl constructs differentiable loss functions based on these formulas that can be minimized using gradient descent. This ensures that the deep learning model acts in a manner that is consistent with the background knowledge as we can backpropagate towards the parameters of the deep learning model.

To ensure loss functions are differentiable, \dfl uses fuzzy logic semantics \pcite{klir1995fuzzy}. 
Predicates, functions and constants are interpreted using the deep learning model. 
By maximizing the degree of truth of the background knowledge using gradient descent, both learning and reasoning are performed in parallel. 
We can apply the loss functions constructed using \dfl for more challenging machine learning tasks than purely supervised learning. 
These methods fall under the umbrella of weakly supervised learning \pcite{zhou2017}. 
For example, it can be used for semi-supervised learning \pcite{pmlr-v80-xu18h,P16-1228} or to detect noisy or inaccurate supervision \pcite{Donadello2017}.
For such problems, \dfl corrects the predictions of the deep learning model when it is logically inconsistent with the background knowledge.  


To further our understanding of such losses, we present in this paper an analysis of the choice of operators used to compute the logical connectives in \dfl. 
For example, functions called \textit{t-norms} are used to connect two fuzzy propositions \pcite{klir1995fuzzy}. 
Because they return the degree of truth of the event that both propositions are true, such t-norms generalize the classical conjunction. 
Similarly, a fuzzy implication generalizes the classical implication. 
Most of these operators are differentiable, which enable their use in \dfl. 
Interestingly, the derivatives of these operators determine how \dfl corrects the deep learning model when its predictions are inconsistent with the background knowledge. 
We show that the qualitative properties of these derivatives are integral to both the theory and practice of \dfl. 
We approach this problem both from the view of symbolic and of statistical approaches to AI, to bridge the conceptual gap between those views. 
This provides insights that otherwise would be overlooked.  


\subsection{Contributions}
 The main contribution of this article is to answer the following question: \emph{``What fuzzy logic operators for existential quantification, universal quantification, conjunction, disjunction and implication have convenient theoretical properties when using them in gradient descent?''}
We analyze both theoretically and empirically the effect of the choice of operators used to compute the logical connectives in \dfuzz on the learning behaviour of a \dfl system. To this end, 
\begin{itemize}
    \item We introduce \dfuzz (Section \ref{sec:reallogic}) which combines fuzzy logic and gradient-based learning, and analyze its behaviour over different choices of fuzzy logic operators  (Section \ref{sec:background}).
    \item We analyze the theoretical properties of aggregation functions, which are used to compute the universal quantifier $\forall$ and the existential quantifier $\exists$, t-norms and t-conorms which are used to compute the connectives $\wedge$ and $\vee$, and fuzzy implications which are used to compute the connective $\rightarrow$.
    \item We introduce a new family of fuzzy implications called sigmoidal implications (Section \ref{sec:connectives}) using the insights from these analyses.
    \item We perform experiments to compare fuzzy logic operators in a semi-supervised experiment (Section \ref{chapter:experiments}).
    \item We give several recommendations for choices of operators.
\end{itemize}

\section{\dlogic}
\label{sec:ldr}

Loss functions are real-valued functions that represent a cost and must be minimized.
\dlogic (\dl) are logics for which differentiable loss functions are constructed that compute the truth value of given formulas using the semantics of the logic. 
These logics use background knowledge to deduce the truth value of statements in unlabeled or poorly labeled data, allowing us to use such data during learning, possibly together with normal labeled data. 
This can be beneficial as unlabeled, poorly labeled and partially labeled data is cheaper and easier to come by. 
This approach differs from Inductive Logic Programming \pcite{MUGGLETON1994629} which derives formulas from data. 
\dl instead informs what the data could have been. 

We motivate the use of \dl with the following classification scenario we consider throughout our analysis. 
Assume we have an agent $A$ whose goal is to describe the scene on an image. 
It gets feedback from a supervisor $S$, 
who does not have an exact description of these images available. However, $S$ does have a background knowledge base $\corpus$ about the concepts contained on the images. The intuition behind \dlogic is that $S$ can correct $A$'s descriptions of scenes when they are not consistent with the knowledge base $\corpus$.
\begin{figure}
    \centering
    \includegraphics[width=0.33\textwidth]{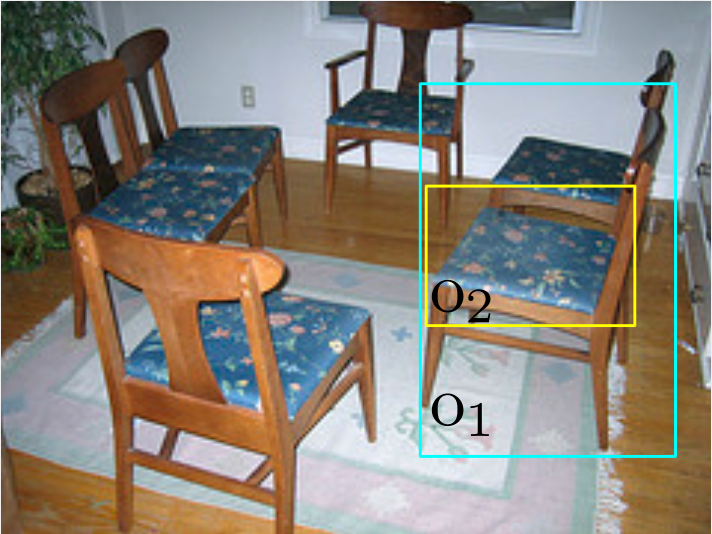}
    \caption{In this running example, we have an image with two objects on it, $o_1$ and $o_2$. }
    \label{fig:chair2}
\end{figure}
\begin{exmp}
\label{exmp:diffreason}
We illustrate this idea with the following example. Agent $A$ has to describe image $I$ in Figure \ref{fig:chair2} that contains objects $o_1$ and $o_2$. 
$A$ and the supervisor $S$ consider the unary class predicates $\{\pred{chair}, \pred{cushion}, \pred{armRest}\}$ and the binary predicate $\{\pred{partOf}\}$. 
Since $S$ does not have a description of $I$, it will have to correct $A$ based on the knowledge base $\corpus$. 
$A$ describes the image by using a confidence value in $[0, 1]$ for each observation. 
For instance, $p(\pred{chair}(o_1))$ indicates the confidence $A$ assigns to $\pred{chair}(o_1)$, i.e., whether $o_1$ is a chair or not.  
\begin{align*}
p(\pred{chair}(o_1))&=0.9  &p(\pred{chair}(o_2))&=0.4\\
p(\pred{cushion}(o_1))&=0.05 & p(\pred{cushion}(o_2))&=0.5\\
p(\pred{armRest}(o_1))&=0.05 & p(\pred{armRest}(o_2))&=0.1\\
p(\pred{partOf}(o_1, o_1)) &= 0.001 & p(\pred{partOf}(o_2, o_2)) &= 0.001\\
p(\pred{partOf}(o_1, o_2)) &= 0.01 & p(\pred{partOf}(o_2, o_1)) &= 0.95
\end{align*}
Suppose that $\corpus$ contains the following logic formula which says parts of a chair are either cushions or armrests:
\begin{equation*}
    \forall x, y\ \pred{chair}(x) \wedge \pred{partOf}(y, x) \rightarrow \pred{cushion}(y) \vee \pred{armRest}(y).
\end{equation*}
$S$ might reason that since $A$ is relatively confident of $\pred{chair}(o_1)$ and $\pred{partOf}(o_2, o_1)$ that the antecedent of this formula is satisfied, and either $\pred{cushion}(o_2)$ or $\pred{armRest}(o_2)$ has to hold. Since $p(\pred{cushion}(o_2)) > p(\pred{armRest}(o_2))$, a possible correction would be to tell $A$ to increase its degree of belief in $\pred{cushion}(o_2)$.
\end{exmp}

We would like to automate the kind of supervision $S$ performs in the previous example. 
To this end, we study what we call as \textit{\dfuzz} (\dfl), a family of \dlogic in the literature based on fuzzy logic. 
Here, the term ``\dlogic'' refers to a logic together with a translation scheme from logical expressions to differentiable loss functions. Then, ``\dfuzz''  stands for the case where the logic is a fuzzy logic and the translation scheme applies to logical expressions which include fuzzy operators. 
Note that due to its numerical character, fuzzy logic is an obvious candidate for a differentiable logic. 
Therefore, in \dfl, truth values of ground atoms are numbers in $[0, 1]$, and logical connectives are interpreted using some function over these truth values. 
Examples of logics in this family are Logic Tensor Networks \pcite{Serafini2016}, the similarly named Deep Fuzzy Logic \pcite{marra2019learning}, and the logics underlying Semantic Based Regularization \pcite{diligenti2017}, LYRICS \pcite{marra2018} and KALE \pcite{guo2016}, which we compare in Section \ref{sec:rel-dfuzz}. This family stands orthogonal to the well-studied mathematical  classification of the fuzzy logics landscape  \citep{cintula2011handbook}. Instead, in our analysis, we use a variety of individual T-norms with different properties 
combined with a variety of aggregation functions.   

\section{Background}
\label{sec:background}

We assume basic familiarity on the syntax and  the semantics of first-order logic. We shall denote predicates using the sans serif font (e.g., $\pred{cushion}$), a set $V$ of variables denoted by $x, y, z, \ldots$ or $x_1, x_2, x_3, \ldots$, and a set $O$ of domain objects denoted by $o_1, o_2, \ldots$, and constants $a, b, c, \ldots$. 
We limit ourselves to function-free formulas in \textit{prenex normal form} that start with quantifiers followed by a quantifier-free subformula. An example of a formula in prenex form is $\forall x, y\ \pred{P}(x, y) \wedge \pred{Q}(x) \rightarrow \pred{R}(y) $. An \textit{atom} is $\predP(t_1, ..., t_m)$ where $t_1, ..., t_m$ are terms. If $t_1, ..., t_m$ are all constants, we say it is a \textit{ground atom}.

Fuzzy logic is a many-valued logic where truth values are real numbers in $[0, 1]$ where 0 denotes completely false and 1 denotes completely true. 
It is often used to model reasoning in the presence of \emph{vagueness} i.e., without sharp boundaries or to imprecisely classify concepts such as a \emph{tall person} or a \emph{small number} \pcite{hajek1998metamathematics,Nov_k_1999}.
We will look at predicate fuzzy logics in particular, which extend propositional fuzzy logics with universal and existential quantification. 
For a brief background on fuzzy logic operators, see \ref{sec:background-operators}, and for an extensive treatment on mathematical fuzzy logic we refer the reader to standard textbooks which include \pcite{hajek1998metamathematics}, \pcite{Nov_k_1999} and \pcite{cintula2011handbook} .

\begin{table}[]
    \centering
    \begin{tabular}{lll}
    \hline
    Name          &  T-norm & Properties\\ \hline 
    Gödel (minimum) & $T_G(a, b) = \min(a, b)$ & idempotent, continuous \\ 
    Product       & $T_P(a, b) = a\cdot b$ & strict \\ 
    \luk          & $T_{LK}(a, b) = \max(a + b - 1, 0)$ & continuous \\ 
    Drastic product & $T_D(a, b) = \begin{cases}
        \min(a, b), & \text{if } a =1 \text{ or } b=1 \\
        0, & \text{otherwise}
        \end{cases}$ & \\
    Nilpotent minimum & $T_{nM}(a, b) = \begin{cases}
        0, & \text{if } a + b \leq 1 \\
        \min(a, b), & \text{otherwise}
      \end{cases}$ & left-continuous \\
    Yager & $T_Y(a, b) = \max(1 - ((1-a)^p+(1-b)^p)^{\frac{1}{p}}, 0), p \geq 1$ & continuous \\ \hline
    \end{tabular}
    \caption{The t-norms of interest.}
    \label{tab:tnorms}
\end{table}
\begin{table}[]
    \centering
    \begin{tabular}{lll}
    \hline
    Name          &  T-conorm & Properties\\ \hline 
    Gödel (maximum) & $S_G(a, b) = \max(a, b)$ & idempotent, continuous \\ 
    Product (probabilistic sum)       & $S_P(a, b) = a + b - a \cdot b$ & strict \\ 
    \luk          & $S_{LK}(a, b) = \min(a + b, 1)$ & continuous \\ 
    Drastic sum & $S_D(a, b) = \begin{cases}
        \max(a, b), & \text{if } a =0 \text{ or } b=0 \\
        1, & \text{otherwise}
        \end{cases}$ & \\ 
    Nilpotent maximum & $S_{nM}(a, b) = \begin{cases}
        1, & \text{if } a + b \geq 1 \\
        \max(a, b), & \text{otherwise}
      \end{cases}$ & right-continuous \\ 
   Yager & $S_Y(a, b) = \min((a^p+b^p)^{\frac{1}{p}}, 1), p \geq 1$ & continuous \\ \hline
    \end{tabular}
    \caption{The t-conorms of interest.}
    \label{tab:snorms}
\end{table}
In this paper we exclusively use the \emph{strong negation} $N(a) = 1 - a$.
Conjunctions $a\wedge b$ are generalized using \emph{T-norms}, which are any function $T:[0, 1]^2 \rightarrow  [0, 1]$ that is commutative, associative, monotonic, and has $T(1, a)=a$. 
In fuzzy logic, conjunctions using the t-norm are often denoted on the logical formula level as $a \otimes b$.
However, on the semantic level (i.e. $T(a, b)$), $a$ and $b$ will refer to the truth values of the corresponding logical formulas $a$ and $b$. 
An overview of the most common T-norms is given in Table \ref{tab:tnorms} alongside common properties that are defined in \ref{appendix:t-norms}. 
\emph{T-conorms} generalize disjunctions and are often denoted $a\oplus b$ in fuzzy logic literature. They are formed from a t-norm $T$ using $S(a, b) = 1 - T(1 - a, 1 - b)$, following DeMorgans law $a\oplus b = \neg (\neg a \otimes \neg b)$. The most common t-conorms are given in Table \ref{tab:snorms}. 

\sloppy Universal and existential quantification are generalized using increasing \emph{aggregation operators} $A, E\in \bigcup_{n\in\mathbb{N}} [0, 1]^n \rightarrow [0,1]$, where $A(1, ..., 1)=E(1, ..., 1)=1$ and $A(0, ..., 0)=E(0, ..., 0)=0$ \pcite{calvoAggregationOperatorsProperties2002}. 
Universal aggregators can be constructed from a t-norm $T$ and existential aggregators from a t-conorm $S$:
\begin{align}
A_T()&= 1 \quad & E_S() &= 0 \\
A_T(x_1, ..., x_n) &= T(x_1, A_T(x_2, ..., x_n)) \quad & E_S(x_1, ..., x_n) &= S(x_1, E_S(x_2, ..., x_n))
\end{align}
An overview of common aggregation operators is given in Table \ref{tab:aggregation}. 
A formal introduction is in \ref{appendix:aggregation}.
\begin{table}[]
    \centering
    \begin{tabular}{llll}
    \hline
    Name          &  Generalizes & Aggregation operator \\ \hline 
    Minimum & $T_G$ & $A_{T_G}(x_1, ..., x_n) = \min(x_1, ..., x_n)$  \\ 
    Product & $T_P$ & $A_{T_P}(x_1, ..., x_n) = \prod_{i=1}^n x_i$  \\ 
    \luk  & $T_{LK}$ & $A_{T_{LK}}(x_1, ..., x_n) = \max(\sum_{i=1}^n x_i - (n - 1), 0)$ \\ 
    Maximum & $S_G$
    & $E_{S_G}(x_1, ..., x_n) = \max(x_1, ..., x_n)$ 
     \\ 
    Probabilistic sum & $S_G$ & $E_{S_P}(x_1, ..., x_n) = 1 - \prod_{i=1}^n(1 - x_i)$  \\ 
    Bounded sum & $S_{LK}$ & $E_{S_{LK}}(x_1, ..., x_n) = \min\left(\sum_{i=1}^n x_i, 1\right)$ \\ \hline
    \end{tabular}
    \caption{Some common aggregation operators.}
    \label{tab:aggregation}
\end{table}
\begin{table}[]
    \centering
    \begin{tabular}{lllll}
    \hline
    Name          &  T-conorm & S-implication & Properties\\ \hline 
    Gödel (Kleene-Dienes) & $S_G$ & $I_{KD}(a, c) = \max(1-a, c)$ & All but IP \\
    Product (Reichenbach)      & $S_P$ & $I_{RC}(a, c) = 1 - a + a\cdot c$ & All but IP \\ 
    \luk                  & $S_{LK}$ & $I_{LK}(a, c) = \min(1-a+c, 1)$ & All\\ 
    Dubouis-Prade & $S_D$ & $I_{DP}(a, c) = \begin{cases}
        c, & \text{if } a = 1 \\
        1-a, & \text{if } c = 0 \\
        1, & \text{otherwise}
    \end{cases}$ & All \\ 
    Nilpotent (Fodor)     & $S_{Nm}$ & $I_{FD}(a, c) = \begin{cases}
        1, & \text{if } a \leq c \\
        \max(1 - a, c), & \text{otherwise}
      \end{cases}$ & All \\ \hline
    
    \end{tabular}
    \caption{S-implications formed from $\neg a\oplus c$ with the common t-conorms from Table \ref{tab:snorms}.}
    \label{tab:simplications}
\end{table}

Finally, implications are generalized using \emph{fuzzy implications} \pcite{Jayaram2008}, which are functions $I: [0, 1]^2\rightarrow [0, 1]$ that are increasing with respect to the first argument, and decreasing with respect to the second. 
Furthermore, we assume boundary conditions $I(0, 0)=I(1, 1)=1$, and $I(1, 0)=0$, which follows from the classical implication.
Fuzzy implications can have additional properties that we discuss in \ref{sec:fuzz_imp}. 
We will discuss two classes of fuzzy implications. 
The first is \emph{S-implications} (\ref{appendix:s-implications}, examples in Table \ref{tab:simplications}), which are formed using a t-conorm by generalizing the material implication $a\rightarrow c = \neg a \oplus c$ using $I_S(a, c) = S(N(a), c)$.
The second is \emph{R-implications}, which is the standard choice in t-norm fuzzy logics (\ref{appendix:r-implications}, examples in Table \ref{tab:rimplications}).
These are constructed from t-norms using $I_T(a, c)=\sup_{b\in [0, 1]} T(a, b) \leq c$.

\begin{table}[]
    \centering
    \begin{tabular}{lllll}
    \hline
    Name          &  T-norm & R-implication & Properties\\ \hline 
    Gödel  & $T_G$ & $I_G(a, c) =\begin{cases}
        1, & \text{if } a \leq c \\
        c, & \text{otherwise}
      \end{cases}$ & LN, EP, IP \\
    product (Goguen) & $T_P$ & $I_{GG}(a, c) =\begin{cases}
        1, & \text{if } a \leq c \\
        \frac{c}{a}, & \text{otherwise}
      \end{cases}$ & LN, EP, IP \\
    \luk           & $T_{LK}$ & $I_{LK}(a, c) = \min(1-a+c, 1)$ & All\\
    Weber & $T_D$ & $I_{WB}(a, c) = \begin{cases}
        1, & \text{if } a < 1 \\
        c, & \text{otherwise}
    \end{cases}$ & LN, EP, IP \\ 
    Nilpotent (Fodor) & $T_{Nm}$ & $I_{FD}(a, c) = \begin{cases}
        1, & \text{if } a \leq c \\
        \max(1 - a, c), & \text{otherwise}
      \end{cases}$ & All \\ \hline
    \end{tabular}
    \caption{The R-implications constructed using the t-norms from Table \ref{tab:tnorms}.}
    \label{tab:rimplications}
\end{table}

\label{chapter:theory}

\section{Differentiable Fuzzy Logics}


\label{sec:reallogic}
As mentioned earlier,  \textit{\dfuzz} (\dfl) are \dlogic based on fuzzy logic. Truth values of ground atoms are continuous, and logical connectives are interpreted using differentiable fuzzy operators. In principle, \dfl can handle both predicates and functions. To ease the discussion, we will not analyze functions and constants and leave them out of the discussion.\footnote{Functions and constants are modelled in \tcite{Serafini2016} and \tcite{marra2018}.} 

\subsection{Semantics}
\dfl defines a new semantics using vector embeddings and functions on such vectors in place of classical semantics. In classical logic, a \textit{structure} consists of a domain of discourse and an interpretation function, and is used to give meaning to the predicates. \dfl defines \textit{structures} using \textit{embedded interpretations}\footnote{\tcite{Serafini2016} uses the term \say{(semantic) grounding} or \say{symbol grounding} \pcite{Mayo2003} instead of `embedded interpretation', \say{to emphasize the fact that $\fol$ is interpreted in a `real world'} but we find this potentially confusing as this could also refer to groundings in Herbrand semantics. Furthermore, by using the word `interpretation' we highlight the parallel with classical logical interpretations.} instead:

\begin{deff}
\label{deff:distr_inter}

    A \textit{\dfuzz structure} is a tuple $\mathcal{S}=\langle \objects, \eta, \btheta \rangle$, where 
    $\objects$ is a finite but unbounded set called \emph{domain of discourse} and every $o\in \objects$ is a $d$-dimensional\footnote{Without loss of generality we fix the dimensionality of the vectors representing the objects. Extensions to a varying number of dimensions are straightforward by introducing types, such as done in \cite{badreddineLogicTensorNetworks2020}.} vector, 
    $\eta: \predicates \times \mathbb{R}^W \rightarrow ( \objects^m \rightarrow [0, 1])$ is an (\emph{embedded}) \emph{interpretation}, and $\btheta \in \mathbb{R}^W$ are \emph{parameters}.
    $\eta$ maps predicate symbols $\pred{P} \in \predicates$ with arity $m$ to a function of $m$ objects to a truth value $[0, 1]$. That is, $\eta(\pred{P}, \theta): \objects^m\rightarrow [0, 1]$. We will use the notation $\interpretation(\pred{P})$ to denote $\eta(\pred{P}, \btheta)$. 
\end{deff}

To address the \textit{symbol grounding problem} \pcite{harnad1990symbol}, objects in the domain of discourse are $d$-dimensional vectors of reals. 
Their semantics come from the underlying semantics of the vector space as terms are interpreted in a real (valued) world \pcite{Serafini2016}. 
Predicates are interpreted as functions mapping these vectors to a fuzzy truth value. 
Embedded interpretations can be implemented using neural network models\footnote{We use `models' to refer to deep learning models like neural networks, and not to models from model theory. } with trainable network parameters $\btheta$.
Note that different values for the parameters $\btheta$ will produce different \dfl structures. 
Next, we define the truth values of formulas in \dfl.%

\begin{deff}
\label{deff:val}


Let $\mathcal{S}=\langle \objects, \eta, \btheta\rangle$ be a \dfl structure, $\instantiation: V \rightarrow O$ a variable assignment, $N$ a fuzzy negation, $T$ a t-norm, $S$ a t-conorm, $I$ a fuzzy implication and $A$ and $E$ are aggregators. Then  we say that $\mathcal{S}$ satisfies the formula $\varphi \in \fol$ w.r.t. $\instantiation$ (i.e., $\mathcal{S}, \mu\models \varphi$)  in the degree of $\val(\varphi)$ (i.e., the truth value of $\varphi$) where  $\val: (V\rightarrow O)\times \mathcal{L}\rightarrow [0, 1]$ is the \textit{valuation function} defined inductively on the structure of $\varphi$ as follows:
\begin{align}
    \label{eq:rlpred}
    \val\left(\instantiation, \pred{P}(x_1, ..., x_m) \right) &= \interpretation(\pred{P})\left(\instantiation(x_1 ), ..., \instantiation(x_m )\right)\\
    \label{eq:rlneg}
    \val(\instantiation, \neg \phi ) &= N(\val(\instantiation, \phi ))\\
    \label{eq:rlconj}
    \val(\instantiation, \phi \otimes \psi ) &= T(\val(\instantiation, \phi ), \val(\instantiation, \psi ))\\
    \val(\instantiation, \phi\oplus \psi ) &= S(\val(\instantiation, \phi ), \val(\instantiation, \psi ))\\
    \label{eq:rlimp}
    \val(\instantiation, \phi\rightarrow\psi ) &= I(\val(\instantiation, \phi ), \val(\instantiation, \psi ))\\
    \label{eq:rlaggr}
    \val(\instantiation, \forall x\ \phi ) &= \aggregate_{o\in \objects} \val(\instantiation\cup \{(x, o)\}, \phi) \\
    \label{eq:rleaggr}
    \val(\instantiation, \exists x\ \phi ) &= \Eaggregate_{o\in \objects} \val(\instantiation\cup \{(x, o)\}, \phi)
\end{align}
\end{deff}


Equation \ref{eq:rlpred} defines the fuzzy truth value of an atomic formula. $\instantiation$   assigns objects to the terms $x_1, ..., x_m$ resulting in a list of $d$-dimensional vectors. These are the inputs to the interpretation $\interpretation$ of the predicate symbol $\pred{P}$ (i.e., $\interpretation(\pred{P})$) to get a fuzzy truth value. Equations \ref{eq:rlneg} - \ref{eq:rlimp} define the truth values of the connectives using the operators $N, T, S$ and $I$.
Equation \ref{eq:rlaggr} and \ref{eq:rleaggr} define the truth value of universally quantified formulas $\forall x\ \phi$ and existentially quantified formulas $\exists x\ \phi$. This is done by enumerating the domain of discourse $o\in\objects$, evaluating the truth value of $\phi$ with $o$ assigned to $x$ in $\mu$, and combining the truth values using an aggregation operator $A$. 

Note that our assumption on the finiteness of the domain is pragmatic: It reflects the finiteness of the data in machine learning settings. 
Hence, many fundamental results in the realm of mathematical (fuzzy) logic will not hold in general for the logic we defined \pcite{cintula2011handbook}. 

\subsection{Relaxing Quantifiers}
For infinite domains, or for domains that are so large that we cannot compute the full semantics of the $\forall$ and $\exists$ quantifiers, we can choose to sample a batch  $\batch$  of objects from $\objects$ to approximate the computation of the valuation. This can be done by replacing Equation \ref{eq:rlaggr} with 

\begin{equation}
    \label{eq:aggrsample}
    \val(\mu, \forall x\ \phi) = \aggregate_{i=1}^\batch \val(\mu \cup \{(x, o_i)\}, \phi),\quad o_1, ..., o_\batch \text{ chosen from } \objects.
\end{equation}

Choosing the batch of objects can be done in several ways. 
One approach would be to sample from a real-world distribution over the domain of discourse $\objects$, if available. 
For example, the domain of discourse might be the natural images, and the real-world distribution would be the distribution over natural images.
A more common approach is to assume access to a dataset $\dataset$ of independent samples from such a distribution \cite{goodfellow2016deep}(p.109) and to choose minibatches from this dataset.
Note that by relaxing quantifiers using sampling we lose the soundness of our  computation, as different batches will have different truth values for the formulas. 

\subsection{Learning using Fuzzy Maximum Satisfiability}
In \dfl, \textit{fuzzy maximum satisfiability} \pcite{Donadello2017} is the problem of finding parameters $\btheta$ that maximize the valuation of the knowledge base $\corpus$.

\begin{deff}
Let $\corpus$ be a knowledge base of formulas, $\mathcal{S}$ a \dfl structure for $\corpus$ and $\val$  a valuation function. 
Then the \textit{\dfuzz loss} $\loss_{\dfl}$ of a knowledge base of formulas $\corpus$ is computed using
\begin{equation}
\label{eq:lossrl}
    \loss_{\dfl}(\mathcal{S}, \corpus) = \loss_{\dfl}(\langle \mathcal{O}, \eta, \btheta \rangle, \corpus)  = -\sum_{\varphi\in\corpus} \val(\{\}, \varphi).
\end{equation}
The \textit{fuzzy maximum satisfiability problem} is the problem of finding parameters $\btheta^*$ that minimize Equation \ref{eq:lossrl}:
\begin{equation}
\label{eq:bestsatproblem}
    \btheta^* = \text{argmin}_{\btheta}\ \loss_{\dfl}(\mathcal{S}, \corpus).
\end{equation}
\end{deff}

This optimization problem can be solved using a gradient descent method. If the operators $N, T, S, I, A$ and $E$ are all differentiable, we can repeatedly apply the chain rule, i.e. reverse-mode differentiation, on the \dfl loss $\loss_{\dfl}(\mathcal{S}, \corpus)$. 
This procedure finds the derivative with respect to the truth values of the ground atoms $\frac{\partial \loss_{\dfl}(\mathcal{S}, \corpus)}{\partial \interpretationfol_{\btheta}(\predP)(o_1, ..., o_m)}$. 
We can use these partial derivatives to update the parameter $\btheta_n$ at iteration $n$ of the optimization process again using the chain rule, resulting in a different embedded interpretation $\interpretationfol_{\btheta_{n+1}}$.
This procedure is computed as follows for the $i$th parameter:

\begin{equation}
\label{eq:grad_desc}
    \btheta_{n+1, i} = \btheta_{n, i} - \epsilon \cdot \frac{\partial \loss_{\dfl}(\mathcal{S}_n, \corpus)}{\partial \btheta_{n, i}} = \btheta_{n, i} - \epsilon \cdot \sum_{\predP(o_1, ..., o_m)}\frac{\partial \loss_{\dfl}(\mathcal{S}_n, \corpus)}{\partial \interpretation(\predP)(o_1, ..., o_m)} \cdot \frac{\partial \interpretation(\predP)(o_1, ..., o_m)}{\partial \btheta_{n, i}},
\end{equation}

where $\epsilon$ is the learning rate. Note that the parameters $\btheta_n$ are implicitly passed to $\loss$ through the structure $\mathcal{S}_n=\langle \objects, \eta, \btheta_n\rangle$. We refer for implementation details to \Appendix \ref{sec:implementation}.

\begin{exmp}
\label{exmp:reallogic}
To illustrate the computation of the valuation function $\val$, we return to the problem in Example \ref{exmp:diffreason}. 
The \textit{domain of discourse} is the set of objects on natural images. 
We have access to a dataset of two objects $\dataset=\{o_1, o_2\}$. 
The valuation of the formula $\varphi = \forall x, y\ \pred{chair}(x) \otimes \pred{partOf}(y, x) \rightarrow \pred{cushion}(y) \oplus \pred{armRest}(y)$ is
\begin{align*}
    \val(\mu, \varphi) = A(A(I(&T(\interpretation(\pred{chair})(o_1), \interpretation(\pred{partOf})(o_1, o_1)), S(\interpretation(\pred{cushion})(o_1), \interpretation(\pred{armRest})(o_1))),\\ I(&T(\interpretation(\pred{chair})(o_1), \interpretation(\pred{partOf})(o_2, o_1)), S(\interpretation(\pred{cushion})(o_2), \interpretation(\pred{armRest})(o_2)))),\\
    A(I(&T(\interpretation(\pred{chair})(o_2), \interpretation(\pred{partOf})(o_1, o_2)), S(\interpretation(\pred{cushion})(o_1), \interpretation(\pred{armRest})(o_1))),\\ I(&T(\interpretation(\pred{chair})(o_2), \interpretation(\pred{partOf})(o_2, o_2)), S(\interpretation(\pred{cushion})(o_2), \interpretation(\pred{armRest})(o_2))))).
\end{align*}
Next, we choose the operators as $T=T_P$, $S = S_P$,  $A=A_{T_P}$ and $I=I_{RC}$, such that
\begin{align}
    \val(\mu, \varphi) =
    \label{eq:exmp:reallogic}
    & \prod_{x, y\in \constants} 1 - \interpretation(\pred{chair})(x) \cdot \interpretation(\pred{partOf})(y, x) \cdot (1 - \interpretation(\pred{cushion})(y))(1 - \interpretation(\pred{armRest})(y)) \notag.
\end{align}
If we interpret the predicate functions using the confidence values from Example \ref{exmp:diffreason} so that $\interpretation(\pred{P}(x)) = p(\pred{P}(x))$, we find that $\val(\varphi) = 0.612$. Taking $\corpus = \{\varphi\}$, we find using the chain rule that 
\begin{align*}
\frac{\partial \loss_{\dfl}( \mathcal{S}, \corpus)}{\partial \interpretation(\pred{chair})(o_1)}&= -0.4261 &\frac{\partial \loss_{\dfl}( \mathcal{S}, \corpus)}{\partial \interpretation(\pred{chair})(o_2)}&=-0.0058\\
\frac{\partial \loss_{\dfl}( \mathcal{S}, \corpus)}{\partial \interpretation(\pred{cushion})(o_1)}&=0.0029 & \frac{\partial \loss_{\dfl}( \mathcal{S}, \corpus)}{\partial \interpretation(\pred{cushion})(o_2)}&=0.7662\\
\frac{\partial \loss_{\dfl}( \mathcal{S}, \corpus)}{\partial \interpretation(\pred{armRest})(o_1)}&=0.0029 & \frac{\partial \loss_{\dfl}( \mathcal{S}, \corpus)}{\partial \interpretation(\pred{armRest})(o_2)}&=0.4257\\
\frac{\partial \loss_{\dfl}( \mathcal{S}, \corpus)}{\partial \interpretation(\pred{partOf})(o_1, o_1)} &= -0.4978 & \frac{\partial \loss_{\dfl}( \mathcal{S}, \corpus)}{\partial \interpretation(\pred{partOf})(o_2, o_2)} &= -0.1103\\
\frac{\partial \loss_{\dfl}( \mathcal{S}, \corpus)}{\partial \interpretation(\pred{partOf})(o_1, o_2)} &= -0.2219 & \frac{\partial \loss_{\dfl}( \mathcal{S}, \corpus)}{\partial \interpretation(\pred{partOf})(o_2, o_1)} &= -0.4031.
\end{align*}
We can now do a gradient update step to update the confidence values from Example \ref{exmp:diffreason}, or find what the partial derivative of the parameters $\btheta$ of some deep learning model $p_{\btheta}$ should be using Equation \ref{eq:grad_desc}.

One particularly interesting property of \dfuzz is that the partial derivatives of the subformulas with respect to the satisfaction of the knowledge base have a somewhat explainable meaning. For example, as hypothesized in Example \ref{exmp:diffreason}, the computed partial derivatives reflect whether we should increase $p(\pred{cushion}(o_2))$, as it is indeed the (\textbf{absolute}) largest partial derivative. 

\end{exmp}

\section{Derivatives of Operators}
\label{sec:connectives}


We will now show that the choice of operators that are used for the logical connectives actually determines the inferences that are done when using \dfl. If we used a different set of operators in Example \ref{exmp:reallogic}, we would have gotten very different derivatives. These could in some cases make more sense, and in some other cases less. Furthermore, it is much easier to find a global minimum of the fuzzy maximum satisfiability problem (Equation \ref{eq:bestsatproblem}) for some operators than for others. This is often because of the smoothness of the operators. In this section, we analyze a wide variety of functions that can be used for logical reasoning and present some of their properties that determine how useful they are in inferences such as those illustrated above.

We will not discuss any varieties of fuzzy negations since the strong negation $N_C(a) = 1-a$ is already continuous, intuitive and has simple derivatives. 

\begin{deff}
A function $f: \mathbb{R}\rightarrow \mathbb{R}$ is said to be \emph{vanishing} if there are $a, b\in \mathbb{R}$, $a < b$ such that for all $c\in (a, b)$, $f(c) = 0$, i.e. there is an interval for which the function is 0. Otherwise, the function is \emph{nonvanishing}.\\
A function $f: \mathbb{R}^n \rightarrow \mathbb{R}$ has a \emph{vanishing derivative} if for all $a_1, ..., a_n\in \mathbb{R}$ there is some $1\leq i\leq n$ such that $\frac{\partial f(a_1, ..., a_n)}{\partial a_i}$ is vanishing.
\end{deff}

Whenever the derivative of an operator vanishes, it loses its learning signal. This definition does not include functions that only pass through 0, such as when using the product t-conorm for $a\oplus \neg a$, where we find that the derivative of $S_P(a, 1-a)$ is 0 only at $\frac{1}{2}$. 
Furthermore, all the partial derivatives of the connectives used in the backward pass from the valuation function to the ground atoms have to be multiplied. 
If the partial derivatives are less than 1, their product will also approach 0. 
This can happen for instance with a large sequence of conjunctions using the product t-norm.

The drastic product $T_D$ and operators derived from it such as the drastic sum $S_D$ and the Dubois-Prade and Weber implications ($I_{DP}$ and $I_{WB}$) have vanishing derivatives almost everywhere. 
The output confidence values of deep learning models are the result of transformations on real numbers using functions like the sigmoid or softmax that result in truth values in $(0, 1)$. The operators derived from $T_D$ only have nonvanishing derivatives when their inputs are exactly $0$ or $1$, invalidating their use in this application

\begin{deff}
A function $f: \mathbb{R}^n\rightarrow \mathbb{R}$ is said to be \textit{single-passing} if it has nonzero derivatives on at most one input argument. That is, for all $x_1, ..., x_n\in[0, 1]$ it holds that $\left|\left\{i\middle|\frac{\partial f(x_1, ..., x_n)}{\partial x_i}\neq 0, i \in \{1, ..., n\}\right\}\right| \leq 1$.
\end{deff}

Using just single-passing Fuzzy Logic operators can be inefficient, since then at most one input will have a nonzero derivative (i.e. a learning signal), yet the complete forward pass still has to be computed to find this input. 
In particular, this will hold when choosing operators based on the Gödel t-norm.
\begin{prop}
Any composition of single-passing functions is also single-passing.
\end{prop}
For the proof, see \ref{appendix:singlepassing}.

Concluding, for any logical operator to be usable in the learning task, it will need to have a nonvanishing derivative at the majority of the input signals, so it can contribute to the learning signal at all, and ideally not be single-passing so that it can contribute effectively to the learning signal.

\section{Aggregation}
\label{sec:aggr}


After the global considerations from the previous section, we next analyze in detail aggregation operators for universal and existential quantification separately and outline their benefits and disadvantages in \dfl.

\subsection{Minimum and Maximum Aggregators}
\label{sec:aggr_min}
The minimum aggregator is given as $A_{T_G}(x_1, ..., x_n) = \min(x_1, ..., x_n)$. The partial derivatives are 
\begin{equation}
    \frac{\partial A_{T_G}(x_1, ..., x_n)}{\partial x_i} = \begin{cases}
    1 & \text{ if } i = \argmin_{j\in\{1, ..., n\}} x_j\\
    0 & \text{ otherwise}.
    \end{cases}
\end{equation}

It is single-passing with the only nonzero gradient being on the input with the lowest truth value. Many practical formulas have exceptions. An exception to a formula like $\forall x\ \pred{Raven}(x)\rightarrow \pred{Black}(x)$ would be a raven over which a bucket of red paint is thrown. The minimum aggregator would have a derivative on that exception when `red raven' is correctly predicted. Additionally, it is inefficient, as we still have to compute the forward pass for inputs that do not get a feedback signal.

The partial derivatives of the maximum aggregator $E_{S_G}(x_1, ..., x_n)=\max(x_1, ..., x_n)$ are similar, but increase the input with the highest truth value instead. 
This can be a reasonable aggregator for existential quantification, as it will reinforce the belief that the input we are most confident about is correct will make the existential quantifier true.
A downside is that it can only consider one such input, despite the fact that the condition might hold for multiple inputs.

\subsection{\luk\ Aggregator}
\label{sec:aggrluk}
The \luk \ aggregator is given as $A_{T_{LU}}(x_1, ..., x_n) = \max\left(\sum_{i=1}^n x_i - (n-1), 0\right).$ The partial derivatives are given by
\begin{equation}
    \frac{\partial A_{T_{LU}}(x_1, ..., x_n) }{\partial x_i} = \begin{cases}
    1 & \text{ if } \sum_{i=1}^n x_i > n-1\\
    0 & \text{ otherwise. }
    \end{cases}
\end{equation}
The gradient is nonvanishing only when $\sum_{i=1}^n x_i > n-1$, i.e., when the average value of $x_i$ is larger than $\frac{n-1}{n}$ \pcite{PallJonsson2018}. As $\lim_{n\rightarrow\infty} \frac{n-1}{n} = 1$, for larger values of $n$, all inputs have to be high for this to hold. 

For the next proposition, we refer to the \textit{fraction of inputs} for which some condition holds. The probability that the condition holds for a point uniformly sampled from $[0, 1]^n$ is this fraction.
\begin{prop}
\label{prop:luk}
The fraction of inputs $x_1, ..., x_n\in [0,1]$ for which the derivative of $A_{T_{LU}}$ is nonvanishing is $\frac{1}{n!}$.
\end{prop}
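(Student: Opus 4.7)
The plan is to interpret the ``fraction of inputs'' as the Lebesgue volume of the favorable region inside the unit hypercube $[0,1]^n$, and then reduce that volume to a well-known simplex volume.

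First I would recall from the preceding paragraph that $\frac{\partial A_{T_{LU}}}{\partial x_i}$ is nonvanishing (in fact equal to $1$) precisely when $\sum_{i=1}^n x_i > n-1$, and is $0$ otherwise. So what I must compute is
\begin{equation*}
\mathrm{Vol}\bigl(\{(x_1,\dots,x_n)\in[0,1]^n : x_1+\cdots+x_n > n-1\}\bigr),
\end{equation*}
since the cube has total volume $1$ and uniform sampling makes the fraction equal to this volume.

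Next I would apply the affine change of variables $y_i = 1 - x_i$, which is a volume-preserving bijection $[0,1]^n \to [0,1]^n$. Under this substitution, the condition $\sum x_i > n-1$ becomes $n - \sum y_i > n-1$, i.e., $\sum_{i=1}^n y_i < 1$, with each $y_i \in [0,1]$. Since $\sum y_i < 1$ already forces each $y_i < 1$, the upper bound $y_i \le 1$ is automatic and the region is exactly the open standard $n$-simplex
\begin{equation*}
\Delta_n = \bigl\{(y_1,\dots,y_n)\in[0,\infty)^n : y_1+\cdots+y_n < 1\bigr\}.
\end{equation*}

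Finally, I would invoke the standard fact that $\mathrm{Vol}(\Delta_n) = \frac{1}{n!}$. If I wanted to make the proof fully self-contained I would prove this by induction on $n$: for $n=1$ the interval $[0,1)$ has length $1 = 1/1!$, and for the inductive step I would write $\mathrm{Vol}(\Delta_n) = \int_0^1 \mathrm{Vol}(\Delta_{n-1}(t))\,dt$, where $\Delta_{n-1}(t) = \{(y_1,\dots,y_{n-1}) : \sum y_i < 1-t\}$ is a rescaled $(n-1)$-simplex of volume $(1-t)^{n-1}/(n-1)!$, giving $\int_0^1 (1-t)^{n-1}/(n-1)!\,dt = 1/n!$. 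Combined with the substitution above, this yields the claimed fraction $\frac{1}{n!}$.

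I do not anticipate any genuine obstacle here; the only conceptual step is recognising that the complement of the nonvanishing region in the hypercube is the union of $n$ copies of a scaled simplex based at the corner $(1,\dots,1)$, and that a single substitution reduces it to the familiar unit simplex. The rest is bookkeeping of a known volume formula.
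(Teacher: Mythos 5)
Your proof is correct, but it takes a different route from the paper's. The paper treats $x_1,\dots,x_n$ as independent standard uniform random variables, observes that their sum $Y$ follows the Irwin--Hall distribution, uses the symmetry of that distribution to replace the event $Y>n-1$ by $Y<1$, and then evaluates the cited closed-form Irwin--Hall CDF at $1$ to get $\frac{1}{n!}$. You perform essentially the same reflection --- your substitution $y_i=1-x_i$ is the deterministic counterpart of the paper's symmetry step --- but then you finish by computing the volume of the standard simplex $\{\sum y_i<1\}$ directly, with a self-contained induction, instead of invoking the Irwin--Hall CDF as an external fact. Your version is therefore more elementary and does not depend on a citation, at the cost of a few extra lines; the paper's version is shorter given the imported formula but leaves the reader to trust (or unpack) the CDF expression. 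Both arguments correctly ignore the measure-zero boundary $\sum x_i=n-1$ where $A_{T_{LU}}$ is not differentiable. One small caution: your closing remark that ``the complement of the nonvanishing region is the union of $n$ copies of a scaled simplex based at the corner $(1,\dots,1)$'' mischaracterizes the geometry --- it is the nonvanishing region itself that is a single simplex anchored at $(1,\dots,1)$, and its complement in the cube is not a union of simplices of that form --- but this aside plays no role in the actual argument, which stands as written.
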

For proof, see \ref{appendix:lukfrac}. Clearly, for the majority of inputs there is a vanishing gradient, implying that this universal aggregator would not be useful in a \dfl learning setting. 

\sloppy A similar argument can be made for the existential \luk\ aggregator, the bounded sum ${E_{S_{LK}} (x_1, ..., x_n)} = \min(\sum_{i=1}^n x_i, 1)$, which will only have nonvanishing derivatives of 1 to all argument when the average value of $x_i$ is smaller than $\frac{1}{n}$. Like the \luk\ aggregator, it also has nonvanishing derivatives only on a fraction  $\frac{1}{n!}$ of its domain. 
This is therefore not a useful existential aggregator: The agent will learn nothing unless all inputs are close to 0. 

\subsection{Yager Aggregator}
\label{sec:aggr_yager}
The Yager universal aggregator is given by
\begin{equation}
    A_{T_Y}(x_1, ..., x_n) = \max\left(1-\left(\sum_{i=1}^n (1 - x_i)^p\right)^{\frac{1}{p}}, 0\right), \quad p> 0
\end{equation}
Here, $p=1$ corresponds to the \luk\ aggregator, $p\rightarrow \infty$ corresponds to the minimum aggregator, and $p\rightarrow 0$ corresponds to the aggregator formed from the drastic product $A_{T_D}$. The derivative of the Yager aggregator is
\begin{equation}
\label{eq:aggr_yager_deriv}
    \frac{\partial A_{T_Y}(x_1, ..., x_n)}{\partial x_i} = \begin{cases}
    \left(\sum_{j=1}^n (1 - x_j)^p\right)^{1-\frac{1}{p}}\cdot (1-x_i)^{p-1} & \text{ if } \left(\sum_{j=1}^n (1 - x_j)^p\right)^{\frac{1}{p}} < 1\\
    0 & \text{ if } \left(\sum_{j=1}^n (1 - x_j)^p\right)^{\frac{1}{p}} > 1.
    \end{cases}
\end{equation}
This derivative vanishes whenever  $\sum_{j=1}^n (1 - x_j)^p \geq 1$. As $1 - x_i \in [0, 1]$, $(1 - x_i)^p$ is a decreasing function with respect to $p$. Therefore, $\sum_{i=1}^n (1 - x_i)^p < 1$ holds for a larger fraction of inputs when $p$ increases, with the fraction being 0 for $p=0$ as it corresponds to the drastic aggregator, and 1 for $p=\infty$.

The exact fraction of inputs with a nonvanishing derivative is hard to express in the general case.\footnote{\label{footnote:yager_aggregator}Assume that $x_1, ..., x_n\sim U(0, 1)$ are independently and standard uniformly distributed. Note that $z_i=1-x_i$ is also standard uniformly distributed. $z_i^p$ is distributed by the beta distribution $Beta(1/p, 1)$ \pcite{gupta2004handbook}. $Y=\sum_{i=1}^n z_i$ is the sum of $n$ such beta-distributed variables. Unfortunately, there is no closed-form expression for the probability density function of sums of independent beta random variables \pcite{Pham1994a}. A suitable approximation would be to use the central limit theorem as $z_1, ..., z_n$ are identically and independently distributed.} However, we can find a closed-form expression for the Euclidean case $p=2$.
 \begin{prop}
 \label{prop:vol_yager}
 The fraction of inputs $x_1, ..., x_n\in[0, 1]$ for which the derivative of $A_{T_Y}$ with $p=2$ is nonvanishing is $\frac{\pi^{\frac{n}{2}}}{2^{n}\cdot \Gamma(\frac{1}{2}n + \frac{1}{2})}$, where $\Gamma$ is the Gamma function.
 \end{prop}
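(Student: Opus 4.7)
The plan is to reduce the problem to computing the Lebesgue measure of a familiar geometric region. Starting from Equation \ref{eq:aggr_yager_deriv} with $p = 2$, the derivative is nonvanishing precisely when
\begin{equation*}
    \sum_{i=1}^n (1 - x_i)^2 < 1.
\end{equation*}
Applying the measure-preserving change of variables $z_i = 1 - x_i$ (which maps $[0,1]^n$ bijectively onto itself), the fraction in question equals the $n$-dimensional Lebesgue measure of
\begin{equation*}
    R = \left\{z \in [0,1]^n \;\middle|\; \sum_{i=1}^n z_i^2 < 1\right\}.
\end{equation*}

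The key observation, and the only step that requires a small argument, is that the upper box constraints $z_i \leq 1$ are automatically implied by the sphere constraint: if $\sum_j z_j^2 < 1$ then for each $i$, $z_i^2 \leq \sum_j z_j^2 < 1$, so $z_i < 1$. Hence $R$ coincides with the intersection of the open unit ball $B^n = \{z \in \mathbb{R}^n : \|z\|_2 < 1\}$ with the nonnegative orthant $[0, \infty)^n$. By symmetry of $B^n$ under the $2^n$ coordinate sign reflections $z_i \mapsto \pm z_i$, the nonnegative orthant accounts for exactly a $2^{-n}$ fraction of $B^n$, so
\begin{equation*}
    \mathrm{vol}(R) = \frac{\mathrm{vol}(B^n)}{2^n}.
\end{equation*}

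Finally I would invoke the standard closed-form expression for the volume of the unit $n$-ball in terms of the Gamma function to obtain the stated expression. Since the argument really reduces to a classical volume computation once the geometric identification is made, there is no substantial obstacle; the only nontrivial point is the automatic redundancy of the upper box constraints, which is what allows the answer to be expressed purely in terms of the unit ball rather than requiring integration over a spherical cap chopped by hyperplanes (as would arise for $p \neq 2$ or, equivalently, for the general Yager case discussed in footnote \ref{footnote:yager_aggregator}).
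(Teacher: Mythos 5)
Your proposal is correct and follows essentially the same route as the paper's own proof: identify the nonvanishing region as one orthant of the unit $n$-ball, use the $2^n$-fold reflection symmetry to get a factor of $2^{-n}$, and plug in the standard Gamma-function formula for the ball's volume. Your explicit remark that the box constraints $z_i \leq 1$ are redundant given $\sum_j z_j^2 < 1$ is a small point the paper leaves implicit, but it does not change the argument.
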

 \begin{figure}
     \centering
     \includegraphics[width=0.6\linewidth]{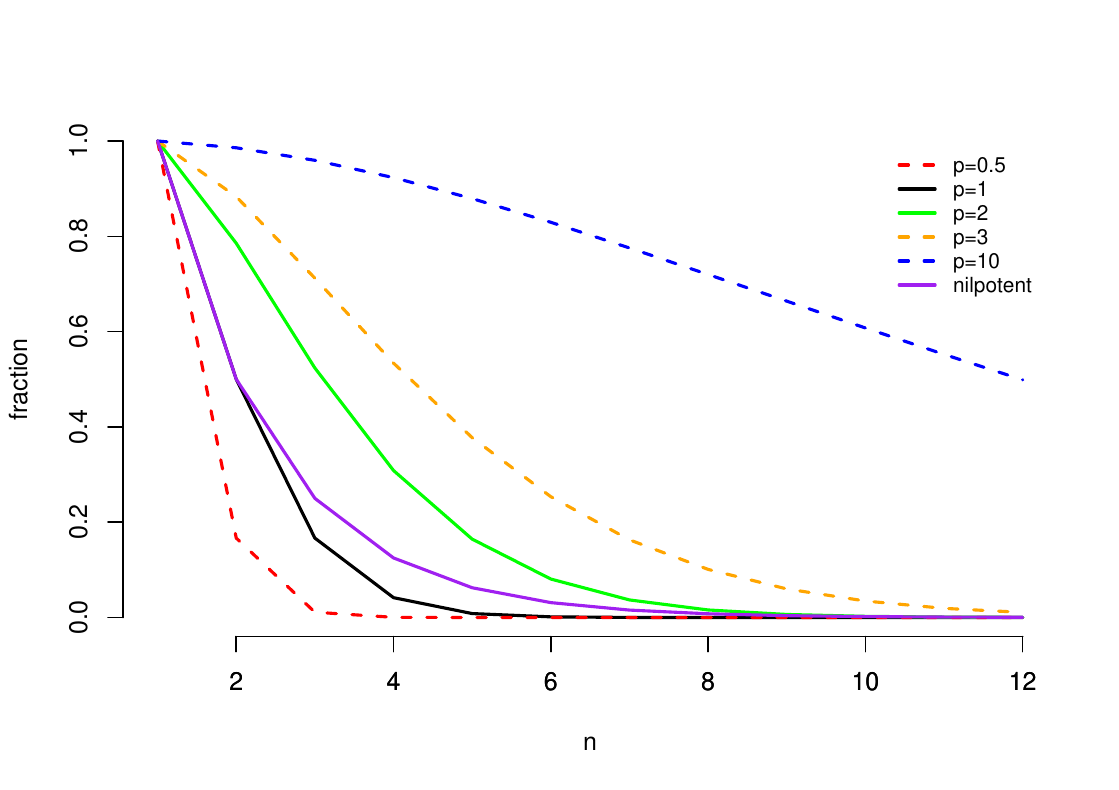}
     \caption{The fraction of inputs for which the Yager aggregator $A_{T_Y}$ for several values of $p$ and the nilpotent minimum aggregator $A_{T_{nM}}$ have nonvanishing derivatives. The values for dotted lines are estimated using Monte Carlo simulation.}
     \label{fig:ratio_2_yager}
 \end{figure}
 See \ref{appendix:yagerfrac} for the proof. We plot the fraction of nonvanishing derivatives for several values of $p$ in Figure \ref{fig:ratio_2_yager}. For fairly small $p$, the vast majority of the inputs will have a vanishing derivative, and similar for high $n$, showing that this aggregator is also of little use in a learning context.

 Similarly, the derivatives of the Yager existential aggregator $E_{S_Y}(x_1, ..., x_n) = \min\left(\left(\sum_{i=1}^n  x_i^p\right)^{\frac{1}{p}}, 1\right)$  are nonvanishing in the same fraction of inputs as the Yager universal aggregator.
\subsection{Generalized Mean and Generalized Mean Error}
If we are concerned only in maximizing the truth value of $A_{T_Y}$, we can simply remove the \textit{max} constraint, resulting in an aggregator that has a nonvanishing derivative everywhere.
However, then the co-domain of the function is no longer $[0, 1]$. We can do an affine transformation on this function to ensure this is the case (\ref{appendix:yager}), after which we obtain the \emph{Generalized Mean Error}. 

\begin{deff}
\label{deff:p-mean-error}
For any $p> 0$, the Generalized Mean Error $A_{GME}$ is defined as
\begin{equation}
    A_{GME}(x_1, ..., x_n) =1 - \left(\frac{1}{n}\sum_{i=1}^n(1 - x_i)^p\right)^{\frac{1}{p}}.
\end{equation}
\end{deff}
 The `error' here the is difference between the predicted value $x_i$ and the `ground truth' value, a truth value of 1. This function has the following derivative:
 \begin{equation}
 \label{eq:deriv_apme}
    \frac{\partial A_{GME}(x_1, ..., x_n)}{\partial x_i} =
    (1-x_i)^{p-1}\frac{1}{n}^{\frac{1}{p}}\left(\sum_{j=1}^n (1 - x_j)^p\right)^{\frac{1}{p} - 1}  .
\end{equation}
When $p>1$, this derivative is greatest for the inputs that are lowest, which can speed up the optimization by being sensitive to outliers. For $p < 1$, the opposite is true. A special case is $p=1$:
  \begin{equation}
     \label{eq:aggr_sos}
     A_{MAE}(x_1, ..., x_n) = 1-\frac{1}{n}\sum_{i=1}^n (1 - x_i)
 \end{equation}
 having the simple derivative $\frac{\partial A_{MAE}(x_1, ..., x_n)}{\partial x_i} = \frac{1}{n}$. This measure is equal to 1 minus the \textit{mean absolute error (MAE}) and is associated with the \luk\ t-norm.
 Another special case is $p=2$:
 \begin{equation}
     \label{eq:aggr_rmse}
     A_{RMSE}(x_1, ..., x_n) = 1-\sqrt{\frac{1}{n}\sum_{i=1}^n (1 - x_i)^2}.
 \end{equation}
This function is equal to 1 minus the \textit{root-mean-square error} (RMSE) which is commonly used for regression tasks and heavily weights outliers. We can do the same for the Yager existential aggregator  (\ref{appendix:yager}):
\begin{deff}
\label{deff:p-mean}
For any $p > 0$, the Generalized Mean is defined as
\begin{equation}
\label{eq:p-mean}
    E_{GM}(x_1, ..., x_n) = \left(\frac{1}{n}\sum_{i=1}^n x_i^p\right)^{\frac{1}{p}}.
\end{equation}
\end{deff}
$p=1$ corresponds to the arithmetic mean and $p=2$ to the geometric mean. 
In contrast to the Generalized Mean Error, its derivative $\frac{1}{n}\left(\frac{1}{n} \sum_{j=1}^n x_j^p \right)^{\frac{1}{p}-1}x_i^{p-1}$ has greater values for smaller inputs when $p<1$, and lower values when $p > 1$. 
Since we want to ensure the derivative is high only for inputs with high truth values to reinforce that those are likely the inputs that confirm the formula, we will want to use $p>1$. 
Note that the arithmetic mean $E_{GM}$ has the same derivative as the mean absolute error $A_{MAE}$, meaning that with $p=1$ universal and existential quantification cannot be distinguished. 
Furthermore, increasing all inputs equally is not a great idea for the existential quantifier, as there are likely only a few inputs for which the formula holds. 
Also note that, unlike existential aggregators directly formed from t-conorms, the only maximum of this aggregator is $x_1, ..., x_n=1$. 
However, it will take long until one can reach this optimum for low inputs.

\subsection{Product Aggregator and Probabilistic Sum}
The product aggregator is given as $A_{T_P}(x_1, ..., x_n) = \prod_{i=1}^n x_i$. This is also the probability of the intersection of $n$ independent events. 
It has the following partial derivatives:
\begin{equation}
\label{eq:derivprod}
    \frac{\partial A_{T_P}(x_1, ..., x_n) }{\partial x_i} = \prod_{j=1, i\neq j}^n x_j.
\end{equation}
This derivative vanishes if there are at least two $i$ so that $x_i=0$. Furthermore, the derivative for $x_i$ will be decreased if some other input $x_j$ is low. Finally, we cannot compute this aggregator in practice due to numerical underflow when multiplying many small numbers. Noting that $\argmax\ f(x) = \argmax\ \log(f(x))$, we observe that the \textit{log-product aggregator}
\begin{equation}
    \logprod(x_1, ..., x_n)=(\log \circ A_{T_P})(x_1, ..., x_n) = \sum_{i=1}^n\log(x_i)
\end{equation} 
can be used for formulas in prenex normal form, as then the truth value of the universal quantifiers is not used for another connective. Unlike the other aggregators, its codomain is the non-positive numbers instead of $[0, 1]$. Furthermore, the log-product aggregator can be seen as the log-likelihood function where we take the correct label to be 1, and thus this is similar to cross-entropy minimization. The partial derivatives are
\begin{equation}
    \frac{\partial \logprod(x_1, ..., x_n) }{\partial x_i} = \frac{1}{x_i}.
\end{equation}
In contrast to Equation \ref{eq:derivprod}, the values of the other inputs are irrelevant, and derivatives with respect to lower-valued inputs will be far greater as there is a singularity at $x=0$ (i.e. the value becomes infinite). We can conclude therefore that the product aggregator is particularly promising as it is nonvanishing and can handle outliers.  
The log-product aggregator also combines well with the generalized mean aggregator (Equation \ref{deff:p-mean}) for formulas of the form $\forall x \exists y$. The logarithm reduces the outer exponentiation, resulting in the derivative $\frac{x_i^{p-1}}{\sum_i x_i^p}$. 

The probabilistic sum aggregator $E_{S_P}(x_1, ..., x_n)=1-\prod_{i=1}^n (1-x_i)$ is trickier. Its derivatives are
\begin{equation}
    \frac{\partial E_{S_P}}{\partial x_i} = \prod_{j=1, j\neq i}^n (1-x_j).
\end{equation}
 This derivative is quite intuitive: It increases $x_i$ if the other inputs $x_j$ are low. 
However, this does not take into account the value of $x_i$ itself. If all inputs are low, this will increase all inputs equally.
Since the logarithm does not distribute over addition, we cannot use the same trick here as for the product aggregator, 
so care has to be taken when computing $\log \circ E_{S_P}$ to prevent numerical underflow errors.
\subsection{Nilpotent Aggregators}
The Nilpotent t-norm is given by $T_{nM}(a, b) = \begin{cases}
        \min(a, b), & \text{if } a + b > 1\\
        0, & \text{otherwise.}
    \end{cases}$
In \Appendix \ref{appendix:nilpotent} we show that the Nilpotent aggregator $A_{T_{nM}}$ is equal to
\begin{equation}
\label{eq:aggrnilpmin}
    A_{T_{nM}}(x_1, ..., x_n) = \begin{cases}
        \min(x_1, ..., x_n), &\text{if } x_i + x_j > 1;\ x_i \text{ and } x_j \text{ are the two lowest values in } x_1, ..., x_n   \\
        0, &\text{otherwise}
    \end{cases}.
\end{equation}

The derivative is found as follows: 
\begin{equation}
\label{eq:derivnilp}
    \frac{\partial A_{T_{nM}}(x_1, ..., x_n) }{\partial x_i} = \begin{cases}
        1, &\text{if } i=\argmin_j x_j  \text{ and } x_i + x_j > 1 \text{ where } x_j \text{ is the second lowest value in } x_1, ..., x_n    \\
        0, &\text{otherwise}
    \end{cases}.
\end{equation}
Like the minimum aggregator it is single-passing, and like the \luk\ aggregator it has a derivative that vanishes for the majority of the input space, namely  when the sum of the two smallest values is lower than 1. 
\begin{prop}
The fraction of inputs for which the derivative of $A_{T_{nM}}$ is nonvanishing is $\frac{1}{2^{n-1}}$.
\end{prop}
For the proof, see \ref{appendix:nilpfrac}. The fraction of inputs for which there is a nonvanishing derivative is plotted in Figure \ref{fig:ratio_2_yager}. Again, this means that for larger numbers of inputs $n$, this aggregator will vanish on almost every input and is not a useful construction in a learning context.

Similarly, the Nilpotent existential aggregator is given as 
\begin{equation}
    E_{S_{nM}}(x_1, ..., x_n) = \begin{cases}
        \max(x_1, ..., x_n), &\text{if } x_i + x_j < 1;\ x_i \text{ and } x_j \text{ are the two largest values in } x_1, ..., x_n   \\
        1, &\text{otherwise.}
    \end{cases}
\end{equation}
It has a similar derivative that increases the largest input if the two largest inputs together are lower than 1. This is somewhat similar to how the maximum aggregator behaves, but with an additional condition that will stop increasing the largest input if there is another that is also quite high.  

\subsection{Summary}
The minimum aggregator is computationally inefficient and cannot handle exceptions well.  
Universal aggregation operators that vanish when receiving a large amount of inputs will not scale well, and these include operators based on the Yager family of t-norms and the nilpotent aggregator. 
Removing the bounds from the Yager aggregators introduces interesting connections to loss functions from the classical machine learning literature. 
This is also the case for the logarithmic version of the product aggregator, which corresponds to the cross-entropy loss function. 
They have natural means for dealing with outliers, and thus are promising for practical use.  
We have more options for existential quantification, as problems with vanishing gradients are not as important since we only care about ensuring the formula is true for at least one input, instead of all of them. 


\section{Conjunction and Disjunction}
\label{sec:conj_disj}



Next, we analyze the partial derivatives of t-norms and t-conorms, which are used as conjunction and disjunction in Fuzzy Logics. In t-norm Fuzzy Logics, the weak disjunction $\max(a, b)$, or the Gödel t-conorm is used instead of the dual t-conorm.

Suppose that we have a t-norm $T$ and a t-conorm $S$. We define the following two quantities, where the choice of taking the partial derivative to $a$ is without loss of generality, since $T$ and $S$ are commutative by definition:
\begin{align}
    \deri_T(a, b) = \frac{\partial T(a, b)}{\partial a},\quad
    \deri_S(a, b) = \frac{\partial S(a, b)}{\partial a} 
\end{align}

It should be noted that by Definition \ref{deff:tnorm}, $\frac{\partial T(a, 1)}{\partial a} = 1$ as $T(a, 1) = a$ for any t-norm $T$, and by Definition \ref{deff:snorm}, $\frac{\partial S(a, 0)}{\partial a} = 1$ as $S(a, 0) = a$ for any t-conorm $S$. Furthermore, we note that if $S$ is a t-conorm and the $N_C$-dual of the t-norm $T$, then $\frac{\partial 1-T(1-a, 1-b)}{\partial a} =-\frac{\partial 1-T(1-a, 1-b)}{\partial 1- a} = \frac{\partial T(1-a, 1-b)}{\partial 1- a} $. 

The main difference in analyzing t-norms and t-conorms is that the maximimum of $T(a, b)$ (namely 1) is when both arguments $a$ and $b$ are 1. In contrast, in t-conorms, an infinite number of maxima exist. Some of these maxima might be more desirable than others. Referring back to the formula in Example  \ref{exmp:diffreason}, we showed that it is preferable to increase the truth value of $\pred{cushion}(y)$ and not of $\pred{armRest}(y)$. Similarly, when a conjunct is negated, or when it appears in the antecedent of an implication (like in the aforementioned formula) we have to choose which of the two conjuncts to decrease. By noting that $\frac{\partial T(a, b)}{\partial a} =\frac{\partial S(1-a, 1-b)}{\partial 1-a}$, we find that the t-norm \say{chooses} in the same way its dual t-conorm would \say{choose}. Similarly, if a disjunction is negated, it will minimize both its arguments in the way that its dual t-norm would maximize its arguments.

\begin{exmp}
We introduce a running example to analyze the behavior of different t-norms. We will optimize $(a\otimes b)\oplus (c\otimes \neg a)$ using gradient descent. The truth value of this expression is computed using $f(a, b, c) = S(T(a, b), T(1-a, c))$. Using the boundary conditions from Definition \ref{deff:tnorm} and \ref{deff:snorm}, we find the global optima $a=1.0$, $b=1.0$ and $a=0.0$, $c=1.0$. The derivative to this function is, using the chain rule,
\begin{align}
    \frac{\partial f(a, b, c)}{\partial a} &= \frac{\partial S(T(a, b), T(1-a, c))}{\partial T(a, b)} \cdot \frac{\partial T(a, b)}{\partial a} + \frac{\partial S(T(a, b), T(1-a, c))}{\partial T(1-a, c)}\cdot \frac{\partial T(1-a, c)}{\partial a}.
    \label{eq:exmp_tsnorm}
\end{align}
\end{exmp}

\subsection{Gödel T-Norm}
The Gödel t-norm is $T_G(a, b) = \min(a, b)$ and the Gödel t-conorm is $S_G(a, b) = \max(a, b)$. We find
\begin{align}
    \frac{\partial T_G(a, b)}{\partial a}=  \begin{cases}
        1, & \text{if } a < b \\
        0, & \text{if } a > b
      \end{cases}, \quad
      \frac{\partial S_G(a, b)}{\partial a}=  \begin{cases}
        1, & \text{if } a > b \\
        0, & \text{if } a < b
      \end{cases}.
\end{align}

Both $T_G$ and $S_G$ are single-passing, but their derivatives are not defined $a = b$. 
A benefit of the magnitude of the derivative nearly always being 1 is that there will not be any exploding or vanishing gradients caused by multiple repeated applications of the chain rule.

\begin{exmp}
Filling in Equation \ref{eq:exmp_tsnorm} representing $(a\otimes b) \oplus (c\otimes \neg a)$ with $T_G$ and $S_G$ gives 
\begin{align*}
     \frac{\partial f(a, b, c)}{\partial a} &= \indic{\min(a, b)> \min(1-a, c)}\cdot \indic{a< b} - \indic{\min(1-a, c)> \min(a, b)} \cdot \indic{1-a < c}\\
     &= \indic{a > \min(1-a, c)\wedge a < b} - \indic{1-a > \min(a, b) \wedge 1-a < c}
\end{align*}
where $\indic{c}$ is the indicator function. This corresponds to the decision tree in Figure \ref{fig:decision_tree}. 
The value of $a$ can be modified to increase the truth of either one of the conjunctions. In order to choose which of the two should be true, it compares $a$ with $1-a$. If $1-a < a$, it will increase the first conjunct by increasing $a$. Gradient ascent always finds a global optimum for this formula.

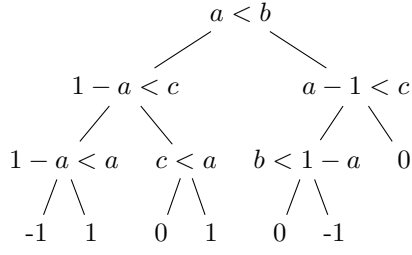
\begin{figure}
    \centering
    \begin{forest} 
    [$a < b$, 
        [$1-a < c$,
            [$1-a < a$,
                [-1]
                [1]
            ] 
            [$c < a$,
                [0]
                [1]
            ] 
        ]   
        [$a-1 < c$,
            [$b < 1-a$,
                [0] 
                [-1] 
            ]   
            [0] 
        ]   
    ] 
    \end{forest}
    \caption{Decision tree for the derivative of $S_G(T_G(a, b), T_G(1-a, c))$ with respect to $a$.}
    \label{fig:decision_tree}
\end{figure}

A small perturbation in the truth values of the inputs can flip the derivative around. For instance, if $a < b$ and $1-a < c$, then it will increase $a$ if its value is $0.501$ and decrease it if it is $0.499$. Furthermore, it can cause gradient ascent to get stuck in local optima. For instance, if $\varphi=(a\oplus b)\otimes(\neg a \oplus c)$ and $a=0.4, b=0.2$ and $c=0.1$, gradient ascent increases $a$ until $a > 0.5$, at which point the gradient flips and it decreases $a$ until $a < 0.5$. Experiments with optimizing this through gradient descent show that we can only find a global optimum in 88.8\% of random initializations of $a$, $b$ and $c$.
\end{exmp}


\subsection{\luk\ T-Norm}
The \luk \ t-norm is $T_{LK}(a, b) = \max(a + b - 1, 0)$ and the \luk \ t-conorm is $S_{LK}(a, b) = \min(a + b, 1)$. The partial derivatives are:
\begin{equation}
        \frac{\partial T_{LK}(a, b)}{\partial a} =  \begin{cases}
        1, & \text{if } a + b > 1 \\
        0, & \text{if } a + b < 1
      \end{cases}, \quad
      \frac{\partial S_{LK}(a, b)}{\partial a} =  \begin{cases}
        1, & \text{if } a + b < 1 \\
        0, & \text{if } a + b > 1
      \end{cases}
\end{equation}
These derivatives vanish on as much as half of their domain (Proposition \ref{prop:luk}). 
However, like the Gödel t-norm, when there is a gradient, it is large and will not cause vanishing or exploding gradients.

\begin{exmp}
Using the \luk\ t-norm and t-conorm in Equation \ref{eq:exmp_tsnorm} gives rise to the following computation
\begin{align*}
     \frac{\partial f(a, b, c)}{\partial a} &= \indic{\max(a + b - 1, 0) + \max(c - a, 0) < 1}\cdot\left(\indic{a+b > 1} - \indic{c-a > 0}\right).
\end{align*}
Choosing random values to initialize $a, b$ and $c$, gradient descent is able to find a (global) optimum in about 83.5\% of the initializations. 
\end{exmp}

\subsection{Yager T-Norm}
\label{sec:norm_yager}

\dualfigure{\imgT T_yager2.pdf}{\imgT S_yager2.pdf}{Left: The Yager t-norm. Right: The Yager t-conorm. For both, $p=2$.}{fig:yager2_norm}

The family of Yager t-norms \pcite{Yager1980} is $T_Y(a, b)=\max(1 - \left((1-a)^p+ (1 - b)^p\right)^{\frac{1}{p}}, 0)$ and the family of Yager t-conorms is $S_Y(a, b) = \min( \left(a^p + b^p\right)^{\frac{1}{p}}, 1)$ for $p\geq 0$. We plot these for $p=2$ in Figure \ref{fig:yager2_norm}. The derivatives are given by
\begin{align}
        \frac{\partial T_{Y}(a, b)}{\partial a} &=  \begin{cases}
        \left((1 - a)^p+ (1 - b)^p\right)^{\frac{1}{p}-1}\cdot(1 - a)^{p-1} & \text{if } (1-a)^p + (1-b)^p < 1, \\
        0 & \text{if }(1-a)^p + (1-b)^p > 1,
      \end{cases} \\
      \frac{\partial S_{Y}(a, b)}{\partial a}&=  \begin{cases}
        \left(a^p + b^p\right)^{\frac{1}{p}-1}\cdot a^{p-1} \quad \quad \quad \quad \quad \quad \quad \ \ & \text{if } a^p + b^p < 1, \\
        0 &\text{if } a^p + b^p > 1
      \end{cases}
      \label{eq:deriv_s_yager}
\end{align}

\dualfigure{\imgT dT_yager2.pdf}{\imgT dS_yager2.pdf}{Left: The derivative of the Yager t-norm. Right: The derivative of the Yager s-norm. For both, $p=2$.}{fig:yager2_deriv}

We plot these derivatives in Figure \ref{fig:yager2_deriv}, showing for each a vanishing derivative on a non-negligible section of the domain. Using the method described in footnote \ref{footnote:yager_aggregator} (Section \ref{sec:aggr_yager}), Mathematica finds a closed form expression for the fraction of inputs for which the Yager t-norm is nonvanishing as $\frac{\sqrt{\pi } 4^{-1/p} \Gamma \left(\frac{1}{p}\right)}{p \Gamma
   \left(\frac{1}{2}+\frac{1}{p}\right)}$. 
Observe that when $p\neq 1$, the derivative of $T_Y$ is undefined at $a=b=1$ and the derivative of $S_Y$ is undefined at $a=b=0$. 
This requires care in the implementation to prevent numerical issues. 
For $p>1$, the lower of the two truth values has a higher derivative for the t-norm, while for the t-conorm, the higher of the two truth values has a higher derivative. As $p$ increases, $T_Y$ and $S_Y$ will behave more like $T_G$ and $S_G$. Note that when $p< 1$, the t-norm will have higher derivatives for higher inputs as the derivative has a singularity at $\lim_{a\rightarrow 1}=\infty$ ($b<1$).

\dualfigure{\imgT T_product.pdf}{\imgT S_product.pdf}{Left: The product t-norm. Right: The product t-conorm.}{fig:prod_norm}

\subsection{Product T-Norm}
\label{sec:product_real_logic}
The product t-norm and t-conorm, visualized in Figure \ref{fig:prod_norm}, are $T_P(a, b) = a\cdot b$ and $S_P(a, b) = a + b - a\cdot b$. Their derivatives are
\begin{equation}
    \frac{\partial T_P(a, b)}{\partial a} = b, \quad \frac{\partial S_P(a, b)}{\partial a} = 1-b.
\end{equation}
The derivative of the t-norm is 0 only when $a=b=0$, and similarly when $a=b=1$ for the t-conorm. The derivative of the t-norm can be interpreted as follows: `If we wish to increase $a\otimes b$, $a$ should be increased in proportion to $b$.' This is not a sensible learning strategy: If both $a$ and $b$ are small, in which case the conjunction is most certainly not satisfied, the derivative will be low instead of high. The derivative of the t-conorm is more intuitive, as it says `If we wish to increase $a\oplus b$, $a$ should be increased in proportion to $1-b$'. If $b$ is not yet true, we definitely want at least $a$ to be true.

\begin{exmp}
By using the product t-norm and t-conorm in Equation \ref{eq:exmp_tsnorm}, we get
\begin{equation*}
    \frac{\partial f(a, b, c)}{\partial a} = (1-(1-a)\cdot c) \cdot b - (1 - a\cdot b)\cdot c 
\end{equation*}
As explained, increase $a$ in proportion to $b$ if it is not true that $c$ and $\neg a$ are true, and decrease $a$ in proportion to $c$ if it is not true that $a$ and $b$ are true. 
\end{exmp}

\subsection{Summary}
The Gödel t-norm and t-conorm are simple and effective, having strong derivatives almost everywhere. However, they can be quite brittle by making very binary choices. The \luk t-norm and t-conorm also have strong derivatives, but vanish on half of the domain. The Yager family of t-norms and t-conorms also vanish on a significant part of its domain. The derivative of the t-norm is larger for lower values, which is a sensible learning strategy. This is not the case for the product t-norm, where the derivative is dependent on the other input value. However, the product t-conorm is intuitive, and corresponds to the intuition that if one input is not true, the other one should be. 



\section{Implication}


Finally, we consider what functions are suitable for modelling the implication. We will start by discussing the particular challenges associated with the implication operator.

\subsection{Challenges of the Material Implication}
\label{sec:implication_challenges}
A significant proportion of background knowledge is written as universally quantified implications. Examples of such statements are `all humans are mortal', `laptops consist of a screen, a processor and a keyboard' and `only humans wear clothes'. These formulas are of the form  $\forall x\ \phi(x) \rightarrow \psi(x)$, where we call $\phi(x)$ the antecedent and $\psi(x)$ the consequent. 

The implication is used in two well known rules of inference from classical logic. 
\emph{Modus ponens} inference says that if $\forall x\ \phi(x) \rightarrow \psi(x)$ and we know that $\phi(x)$ is true, then $\psi(x)$ should also be true. 
\emph{Modus tollens} inference, or \emph{contraposition}, says that if $\forall x\ \phi(x) \rightarrow \psi(x)$ and we know that $\psi(x)$ is false, then $\phi(x)$ should also be false, as otherwise $\psi(x)$ should also have been.

Unlike sequences of conjunctions where each of the formulas should be true, when the agent predicts a scene in which an implication is false, the supervisor has multiple choices.
Consider the implication `all ravens are black'. 
There are 4 categories for this formula: \textit{black ravens} (BR), \textit{non-black non-ravens} (NBNR), \textit{black non-ravens} (BNR) and \textit{non-black ravens} (NBR). 
Assume our agent observes an NBR, then there are four options to consider.
\begin{enumerate}
    \item \textit{Modus Ponens} (MP): The antecedent is true, so by modus ponens, the consequent is also true. We trust the agent's observation of a raven and believe it was a black raven (BR).
    \item \textit{Modus Tollens} (MT): The consequent is false, so by modus tollens, the antecedent is also false. We trust the agent's observation of a non-black object and believe that it was not a raven (NBNR).
    \item \textit{Distrust}: We think the agent is wrong in both observations, and conclude it was a black non-raven (BNR).
    \item \textit{Exception}: We trust the agent in observing a non-black raven (NBR) and ignore the fact that its observation goes against the background knowledge that ravens are black.\footnote{This option is not completely ludicrous as white ravens do in fact exist. However, they are rare.} 
\end{enumerate}
The distrust option seems somewhat useless. The exception option can be correct, but we cannot know when there is an exception from the agent's observations alone. 

We can assume there are far more non-black objects which are not ravens, than there are ravens. 
Thus, from a statistical perspective, it is most likely that the agent observed an NBNR. 
This shows the imbalance associated with the implication, which was first noted in \tcite{vankrieken2019ravens} for the Reichenbach implication. 
It is quite similar to the \emph{class imbalance problem} in Machine Learning \pcite{japkowicz2002class} in the sense that one has far more \textit{contrapositive} examples than positive examples of the background knowledge.

This problem is closely related to the Raven paradox \pcite{Hempel1945,Vranas2004,vankrieken2019ravens} from the field of confirmation theory which ponders what evidence can confirm a statement like `ravens are black'. 
It is usually stated as follows:
\begin{enumerate}
    \item Premise 1: Observing examples of a statement contributes positive evidence towards that statement.
    \item Premise 2: Evidence for some statement is also evidence for all logically equivalent statements.
    \item Conclusion: Observing examples of non-black non-ravens is evidence for `all ravens are black'.
\end{enumerate}
The conclusion follows since `non-black objects are non-ravens' is logically equivalent to `ravens are black'. 
For \dfl a similar thing happens. 
When we correct the observation of an NBR to a BR, the difference in truth value is equal to when we correct it to NBNR. 
More precisely, representing `ravens are black' as $I(a, b)$, where, for example, $I(1, 1)$ corresponds to BR:
$$A(x_1, ..., I(1, 0), ..., x_n) - A(x_1, ..., I(1, 1), ..., x_n) = A(x_1, ..., I(1, 0), ..., x_n) - A(x_1, ..., I(0, 0), ..., x_n)$$ 
as $I(0, 0) = I(1, 1) = 1$.  
When one agent observes a thousand BR's and a single NBR, and another agent observes a thousand NBNR's and a single NBR, their truth value for `ravens are black' is equal. 
The first agent has seen many ravens of which only a single was not black. 
The second only observed non ravens, and a single raven that was not black. 
Intuitively, the first agent's beliefs seem more in line with the background knowledge. 
We will now proceed to analyze a number of implication operators in light of this discussion.



\subsection{Analyzing the Implication Operators}
In this section, we choose to take the negation of the derivative with respect to the antecedent as it makes it easier to compare them: all fuzzy implications are monotonically decreasing with respect to the antecedent. 

\begin{deff}
A fuzzy implication $I$ is 
\textit{contrapositive differentiable symmetric} if $\dmp{a}{c} = \dmtp{1-c}{1-a}$ for all $a, c \in [0, 1]$.

\end{deff}
A consequence of contrapositive differentiable symmetry is that if $c=1-a$, then derivatives with respect to the antecedent and consequent are each other's negation since $\dmp{a}{c}=\dmtp{1 - c}{1-a} = \dmtp{1 -(1 - a)}{c} = \dmt{a}{c}$. This could be seen as the `distrust' option which increases the consequent and negated antecedent equally.

\begin{prop}
If a fuzzy implication $I$ is $N$-contrapositive symmetric (that is, for all $a, c\in [0,1]$, $I(a, I(b, c))=I(b, I(a, c))$), where $N$ is the classical negation, it is also contrapositive differentiable symmetric. 
\end{prop}
By this proposition all S-implications are contrapositive differentiable symmetric. This property says there is no difference in how the implication handles the derivatives with respect to the consequent and antecedent.
\begin{prop}
\label{prop:diff_left_neutral}
If an implication $I$ is left-neutral (that is, for all $c\in [0, 1]$, $I(1, c)=c$), then $\dmp{1}{c} = 1$. If, in addition, $I$ is contrapositive differentiable symmetric, then $\dmt{a}{0} = 1$.
\end{prop}

The proofs of these two propositions are in \ref{appendix:implication-proofs}. 
All S-implications and R-implications are left-neutral, but only S-implications are all also contrapositive differentiable symmetric.
The derivatives of R-implications vanish when $a\leq c$, that is, on no less than half of the domain. 
This is not necessarily a bad property, although this depends highly on the sort of application we use DFL for. 
If, for example, $a=0.499$ and $c=0.5$ for the implication `ravens are black', this expresses a state of uncertainty, and there is probably more that can be learned! However, this will not be possible since the implication vanishes.


\subsection{Gödel-based Implications}
\label{sec:godel_implications}
\dualfigure{\imgT I_kleene.pdf}{\imgT I_godel.pdf}{Left: The Kleene Dienes implication. Right: The Gödel implication. Plots in this section are rotated so that the smallest value is in the front to help understand the shape of the functions. In particular, plots of the derivatives of the implications are rotated 180 degrees compared to the implications themselves.}{fig:kd_godel}

Implications based on the Gödel t-norm make strong discrete choices and are single-passing. As $I_{KD}(a, c) = \max(1-a, c)$, the derivatives are
\begin{equation}
    \dmpa{I_{KD}}{a}{c} =  \begin{cases}
        1, & \text{if } 1 - a < c \\
        0, & \text{if } 1 - a > c
      \end{cases}, \quad
      \dmta{I_{KD}}{a}{b}=  \begin{cases}
        1, & \text{if } 1-a > c \\
        0, & \text{if } 1-a < c
      \end{cases}   .
\end{equation}
Or, simply put, if we are more confident in the truth of the consequent than in the truth of the negated antecedent, increase the truth of the consequent. Otherwise, decrease the truth of the antecedent. This decision can be somewhat arbitrary and does not take into account the imbalance of modus ponens and modus tollens.

The Gödel implication is a simple R-implication: $I_G(a, c) =\begin{cases}
        1, & \text{if } a \leq c \\
        c, & \text{otherwise}
      \end{cases}$. 
Its derivatives are:
\begin{equation}
\label{eq:god_imp_deriv}
    \dmpa{I_G}{a}{c} =  \begin{cases}
        1, & \text{if } a > c \\
        0, & \text{otherwise}
      \end{cases}, \quad
      \dmta{I_G}{a}{b}= 0.
\end{equation}
These two implications are shown in Figure \ref{fig:kd_godel}. The Gödel implication increases the consequent whenever $a > c$, and the antecedent is never changed. This makes it a poorly performing implication in practice. For example, consider $a=0.1$ and $c=0$. Then the Gödel implication increases the consequent, even if the agent is fairly certain that neither is true. Furthermore, as the derivative with respect to the negated antecedent is always 0, it can never choose the modus tollens correction, which, as we argued, is actually often the best choice. 



\subsection{\luk\ and Yager-based Implications}
The \luk\ implication is both an S- and an R-implication. It is given by $I_{LK}(a, c) = \min(1-a + c, 1)$ and has the simple derivatives 
\begin{equation}
\label{eq:impl_deriv_luk}
    \dmpa{I_{LK}}{a}{c} = \dmta{I_{LK}}{a}{c} =  \begin{cases}
        1, & \text{if } a > c \\
        0, & \text{otherwise.}
      \end{cases}
\end{equation}
Whenever the implication is not satisfied because the antecedent is higher than the consequent, simply increase the negated antecedent and the consequent until it is lower. This could be seen as the `distrust' choice as both observations of the agent are equally corrected, and so does not take into account the imbalance between modus ponens and modus tollens cases. The derivatives of the Gödel implication $I_G$ are equal to those of $I_{LK}$ except that $I_G$ always has a zero derivative for the negated antecedent.

\dualfigure{\imgT I_yager_s_2.pdf}{\imgT I_yager_r_2.pdf}{Left: The Yager S-implication. Right: The Yager R-implication. For both, $p=2$.}{fig:yager-r-s-impl} 

The Yager S-implication is given as $I_Y(a, c) = \min\left(\left((1 - a)^p + c^p\right)^{\frac{1}{p}}, 1\right),\ p> 0$.
We plot $I_Y$ for $p=2$ in Figure \ref{fig:yager-r-s-impl}. $p=1$ is $I_{LK}$,  $p=0$ is $I_{DP}$, and $p=\infty$ is $I_{KD}$. 
The derivatives are computed as 
\begin{align}
    \dmpa{I_Y}{a}{c} &=  \begin{cases}
        \left((1-a)^p + c^p\right)^{\frac{1}{p}-1}\cdot c^{p-1}, \quad \quad \ \ & \text{if } (1-a)^p + c^p \leq 1, \\
        0, & \text{otherwise}
      \end{cases}\\
    \dmta{I_Y}{a}{c}&=  \begin{cases}
        \left((1-a)^p + c^p\right)^{\frac{1}{p}-1}\cdot (1-a)^{p-1},  & \text{if } (1-a)^p + c^p \leq 1, \\
        0, &\text{otherwise.}
      \end{cases}
\end{align}
\dualfigure{\imgT dMP_yager_s_2.pdf}{\imgT dMT_yager_s_2.pdf}{Plots of the derivatives of the Yager S-implication for $p=2$.}{fig:deriv_yager_sI2}
We plot these derivatives for $p=2$ in Figure \ref{fig:deriv_yager_sI2}. 
For all $p$, $\lim_{c\rightarrow 0}\dmpa{I_Y}{1}{c}=1$. 
Furthermore, for $p>1$, $\lim_{a\rightarrow 1} \left. \dmpa{I_Y}{a}{c}\right\rvert_{c=0}=0$ and for $p < 1$, $\lim_{a\rightarrow 1}\left. \dmpa{I_Y}{a}{0}\right\rvert_{c=0}=\infty$. 
For $p>1$, $I_Y$ can be understood as an increasingly less smooth version of the Kleene-Dienes implication $I_{KD}$. Lastly, this derivative, like those for $T_Y$ and $S_Y$ (Section \ref{sec:norm_yager}), is nonvanishing for a fraction of $\frac{\sqrt{\pi } 4^{-1/p} \Gamma \left(\frac{1}{p}\right)}{p \Gamma
   \left(\frac{1}{2}+\frac{1}{p}\right)}$ of the input space.

The Yager R-implication is found (\ref{appendix:yagerrimpl}) as $
    I_{T_Y}(a, c) = \begin{cases}
        1, & \text{if } a \leq c \\
        1 - \left((1 - c)^p - (1 - a)^p\right)^{\frac{1}{p}}, & \text{otherwise.}
      \end{cases}$
We plot $I_{T_Y}$ for $p=2$ in Figure \ref{fig:yager-r-s-impl}. 
As expected, $p=1$ reduces to $I_{LK}$, $p=0$ reduces to $I_{WB}$ and $p=\infty$ reduces to $I_G$. It is contrapositive symmetric only for $p=1$. The derivatives of this implication are 
\begin{align}
    \dmpa{I_{T_Y}}{a}{c} &= \begin{cases}
    ((1 - c)^p - (1 - a)^p)^{\frac{1}{p}-1}\cdot (1 - c)^{p-1} , & \text{if } a > c \\
    0, &\text{otherwise,}
    \end{cases}\\
    \dmta{I_{T_Y}}{a}{c}&= \begin{cases}
    ((1 - c)^p - (1 - a)^p)^{\frac{1}{p}-1}\cdot (1 - a)^{p-1}, & \text{if } a > c \\
    0, &\text{otherwise.}
    \end{cases}
\end{align}

We plot these in Figure \ref{fig:yager-r-deriv}. Note that if $p> 1$, for all $c<1$ it holds that $\lim_{a\downarrow c}\dmpa{I_{T_Y}}{a}{c} = \lim_{a\downarrow c}\dmta{I_{T_Y}}{a}{c} = \infty$ as when $a$ approaches $c$ from above, $(1 - c)^p - (1 - a)^p$ approaches 0, giving a singularity as $0^{\frac{1}{p} - 1}$ is undefined. This collection of singularities makes the training unstable in practice. 

\dualfigure{\imgT dMP_yager_r_2.pdf}{\imgT dMT_yager_r_2.pdf}{Plots of the derivatives of the Yager R-implication for $p=2$.}{fig:yager-r-deriv}

\subsection{Product-based Implications}
\label{sec:prod-implication}
\dualfigure{\imgT I_reichenbach.pdf}{\imgT I_goguen.pdf}{Left: The Reichenbach implication. Right: The Goguen implication.}{fig:goguen-i}

\dualfigure{\imgT dMT_logged_reichenbach.pdf}{\imgT dMT_RMSE1_reichenbach.pdf}{Left: The antecedent derivative of the Reichenbach implication with the log-product aggregator. Right: The antecedent derivative of the Reichenbach implication with the RMSE aggregator.}{fig:reichenbach-aggregators}

The product S-implication, also known as the Reichenbach implication, is given by $I_{RC}(a, c) = 1 - a + a \cdot c$. We plot it in Figure \ref{fig:goguen-i}. Its derivatives are given by:
\begin{equation}
    \dmpa{I_{RC}}{a}{c} = a, \quad \dmta{I_{RC}}{a}{c} = 1-c.
\end{equation}
These derivatives closely follow the modus ponens and modus tollens rules. When the antecedent is high, increase the consequent, and when the consequent is low, decrease the antecedent. However, around $(1-a)=c$, the derivative is equal and the `distrust' option is chosen. This can result in counter-intuitive behaviour. For example, if the agent predicts 0.6 for $\pred{raven}$ and 0.5 for $\pred{black}$ and we use gradient descent until we find a maximum, we could end up at 0.3 for $\pred{raven}$ and 1 for $\pred{black}$. We would end up increasing our confidence in $\pred{black}$ as $\pred{raven}$ was high. However, because of additional modus tollens reasoning, $\pred{raven}$ is barely true. 

Furthermore, if the agent most of the time predicts values around $a=0,\ c=0$ as a result of the modus tollens case being the most common, then a majority of the gradient decreases the antecedent as $\left.\dmta{I_{RC}}{a}{0}\right\rvert_{a=0}= 1$. We identify two methods that counteract this behavior. We introduce the second method in Section \ref{sec:sigm_implication}. 
The first method for counteracting the `corner' behavior notes that different aggregators change how the derivatives of the implications behave when their truth value is high.
For instance, we find that the derivatives with respect to the negated antecedent when using the log-product aggregator and RMSE aggregator are
\begin{alignat}{3}
    \frac{\partial\log \circ A_P(I_{RC}(a_1, c_1), ..., I_{RC}(a_n, c_n))}{\partial 1- a_i} &= \frac{1-c_i}{1 - a_i + a_i \cdot c_i}&&=\frac{\neg c}{a\rightarrow c}, \\
    \frac{\partial A_{RMSE}(I_{RC}(a_1, c_1), ...,I_{RC}(a_n, c_n))}{\partial 1 - a_i} &= \frac{(1 - c_i)(a_i - a_i\cdot c_i)}{\sqrt{n\sum_{j=1}^n (a_j - a_j\cdot c_j)^2}}&&=\frac{\neg c_i(\neg(a_i\rightarrow c_i))}{\sqrt{n
    \sum_{j=1}^n (\neg(a_j\rightarrow c_j))^2}}.
\end{alignat}
We plot these functions in Figure \ref{fig:reichenbach-aggregators}. 
For the RMSE aggregator 
we choose $n=2$ and $a_1,\ c_1$ so that $(a_1 - a_1 \cdot c_1)^2=0.9$. 
Note that the derivative with respect to the negated antecedent using the RMSE aggregator is 0 in $a_i=0$, $c_i=0$ as then $a_i - a_i \cdot c_i = 0$, and using the log-product aggregator, the derivative is 1. By differentiable contrapositive symmetry, the consequent derivative is 0 when using both aggregators. This shows that when using the RMSE aggregator, the derivatives will vanish at the corners $a=0,\ c=0$ and $a=1,\ c=1$, while when using the log-product aggregator, one of $a$ and $c$ will still have a gradient.

\dualfigure{\imgT log_dMP_goguen.pdf}{\imgT log_dMT_goguen.pdf}{The derivatives of the Goguen implication. Note that we plot these in log scale.}{fig:deriv_goguen}

The R-implication of the product t-norm is the Goguen implication and given by $I_{GG}(a, c) = \begin{cases} 1, &\text{if } a\leq c \\ \frac{c}{a}, &\text{otherwise.} \end{cases}$.
We plot this implication in Figure \ref{fig:goguen-i}. 
The derivatives of $I_{GG}$ are
\begin{equation}
    \dmpa{I_{GG}}{a}{c} = \begin{cases}
    0, &\text{if } a\leq c \\
    \frac{1}{a}, &\text{otherwise}
    \end{cases}, \quad \dmta{I_{GG}}{a}{c} = \begin{cases}
    0, &\text{if } a\leq c \\
    \frac{c}{a^2}, &\text{otherwise}
    \end{cases}.
\end{equation}

We plot these in Figure \ref{fig:deriv_goguen}. This derivative is not very useful. First of all, both the modus ponens and modus tollens derivatives increase with $\neg a$. This is opposite of the modus ponens rule as when the antecedent is \textit{low}, it increases the consequent most. For example, if $\pred{raven}$ is 0.1 and $\pred{black}$ is 0, then the derivative with respect to $\pred{black}$ is 10, because of the singularity when $a$ approaches 0.

\subsubsection{Sigmoidal Implications}
\label{sec:sigm_implication}
For the second method for tackling the corner problem, we introduce a new class of fuzzy implications formed by transforming other fuzzy implications using the sigmoid function and translating it so that the boundary conditions still hold. The derivation, along with several proofs of properties, can be found in \ref{appendix:sigm}.

\begin{deff}
\label{deff:sigmoidal}
If $I$ is a fuzzy implication, then the $I$-sigmoidal implication $\sigma_I$ is given for $s>0$ and $b_0\in \mathbb{R}$ as
\begin{equation}
    \sigma_I(a, c) = \frac{1 + e^{-s(1+b_0)}}{e^{-b_0 s}-e^{-s(1+b_0)}}\cdot 
   \left(\left(1 + e^{-b_0s}\right) \cdot \sigma\left(s\cdot\left(I(a, c) + b_0 \right)\right) - 1\right) 
\end{equation}
    where $\sigma(x)=\frac{1}{1+e^x}$ denotes the sigmoid function.
\end{deff}
Here $b_0$ is a parameter that controls the position of the sigmoidal curve and $s$ controls the `spread' of the curve. $\sigma_I$ is the function $ \sigma\left(s\cdot\left(I(a, c) + b_0\right)\right)$ linearly transformed so that its codomain is the closed interval $[0, 1]$. For the common value of $b_0=-\frac{1}{2}$, a simpler form exists: 
\begin{equation}
    \sigma_I(a, c)=\frac{1}{e^{\frac{s}{2}}-1}\cdot 
  \left(\left(1 + e^{\frac{s}{2}}\right) \cdot \sigma\left(s\cdot\left(I(a, c) - \frac{1}{2}\right)\right) - 1\right). 
\end{equation}
Next, we give the derivative of $\sigma_I$. Substituting $d=\frac{1 + e^{-s\cdot(1 + b_0)}}{e^{-s\cdot b_0}-e^{-s\cdot(1 + b_0)}}$ and $h=\left(1 + e^{-s\cdot b_0}\right)$, we find
\begin{equation}
\label{eq:deriv_sigm}
    \frac{\partial \sigma_I(a, c)}{\partial I(a, c)}=d\cdot h\cdot s\cdot  \sigma\left(s\cdot\left(I(a, c) + b_0\right)\right)\cdot(1 -  \sigma\left(s\cdot\left(I(a, c) + b_0\right)\right)).
\end{equation}
This keeps the properties of the original function but smoothens the gradient for higher values of $s$. 
As the derivative of the sigmoid function is positive, this derivative vanishes only when the derivative of $I$ vanishes.

\dualfigure{\imgT logdMP_reichenbach.pdf}{\imgT logdMP_probsum9.pdf}{The consequent derivatives of the log-Reichenbach and log-Reichenbach-sigmoidal (with $s=9$) implications. The figure is plotted in log scale.}{fig:log_dmp}

\dualfigure{\imgT dmp_probsum9.pdf}{\imgT dmt_probsum9.pdf}{The derivatives of the Reichenbach-sigmoidal implication for $s=9$.}{fig:deriv_rcsigm}

\begin{figure}
    \centering
    \begin{subfigure}[b]{\graphwidth}
    \includegraphics[width=\linewidth]{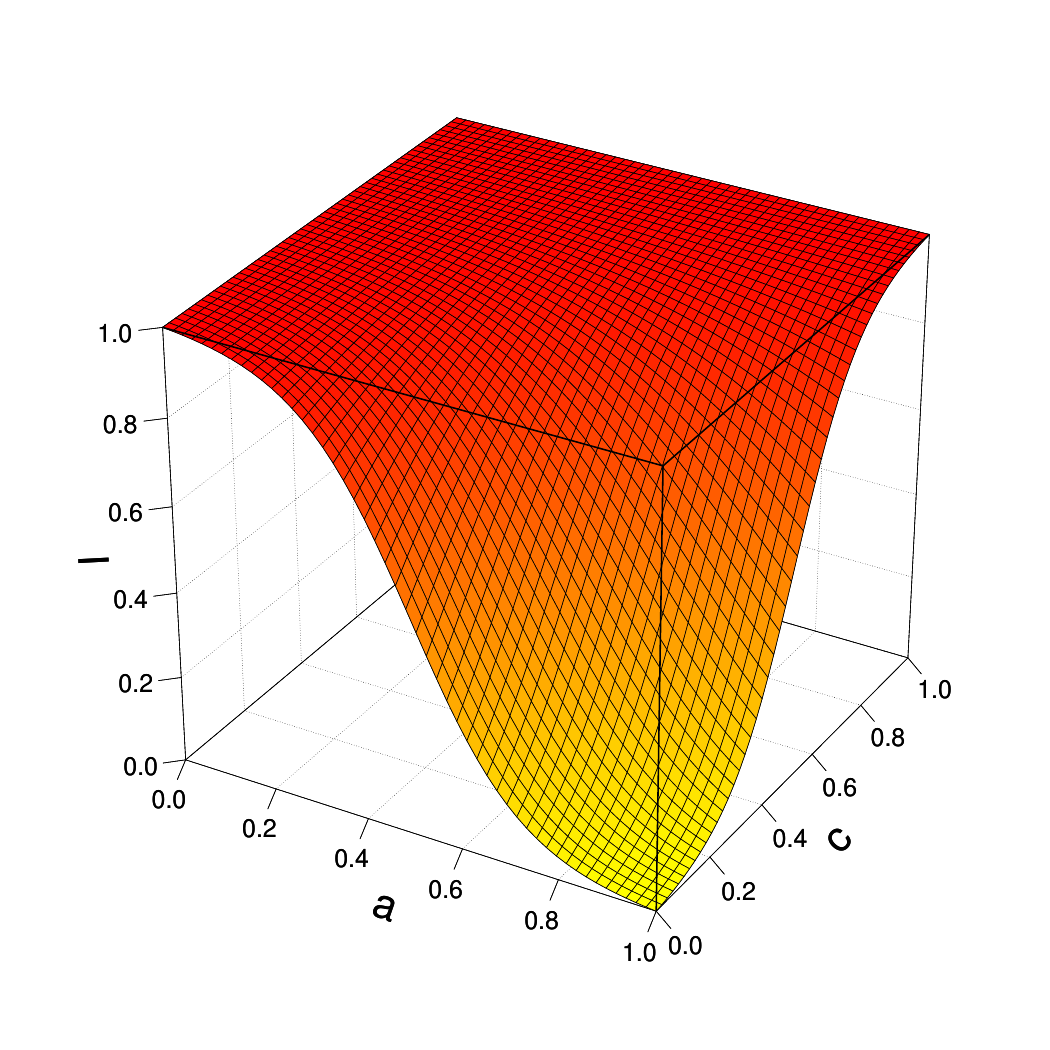}
    \caption{$b_0=-0.5, s=9$}
    \label{fig:probsum_sigm_s_9-0.5}
    \end{subfigure}
    \begin{subfigure}[b]{\graphwidth}
    \includegraphics[width=\linewidth]{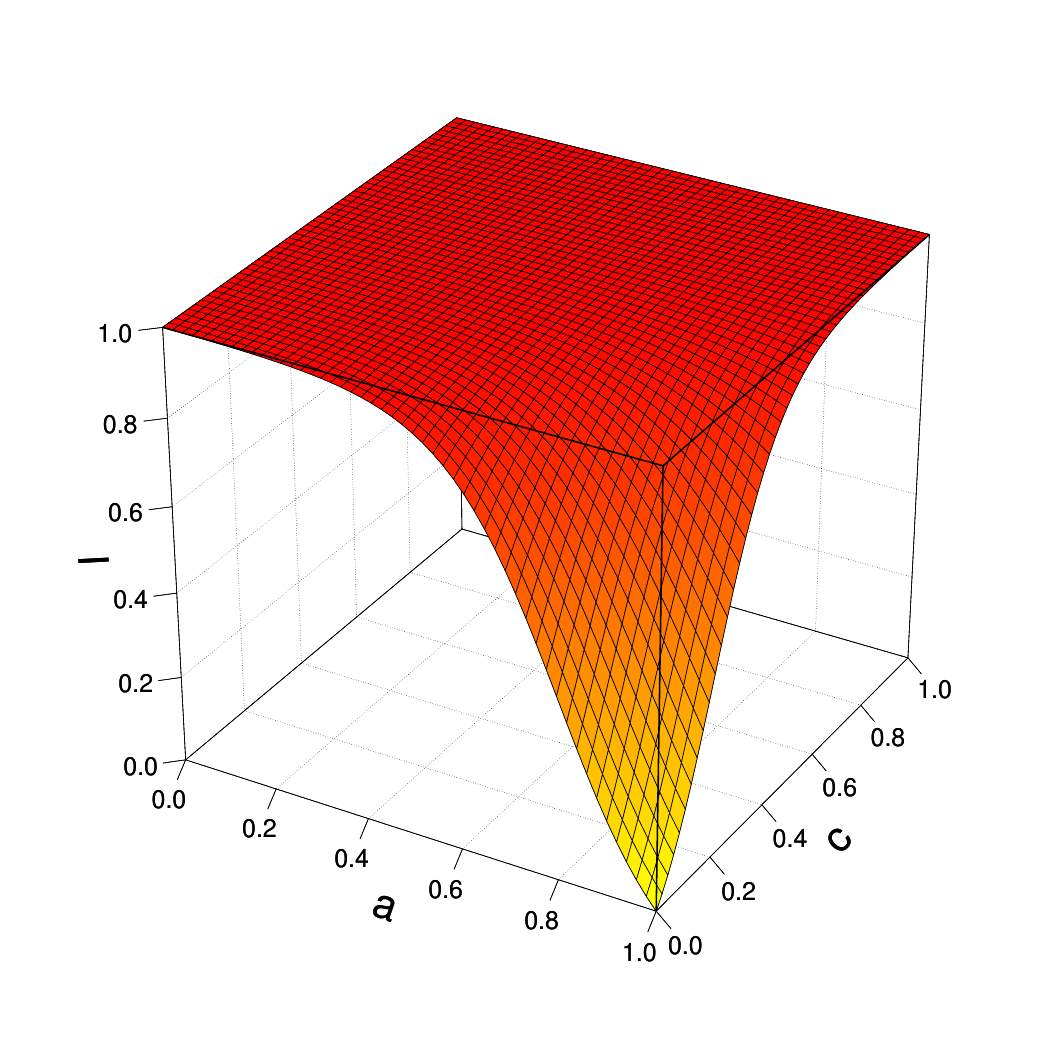}
    \caption{$b_0=-0.2, s=9$}
    \label{fig:probsum_sigm_s_9-0.2}
    \end{subfigure}
    \\
    \begin{subfigure}[b]{\graphwidth}
    \includegraphics[width=\linewidth]{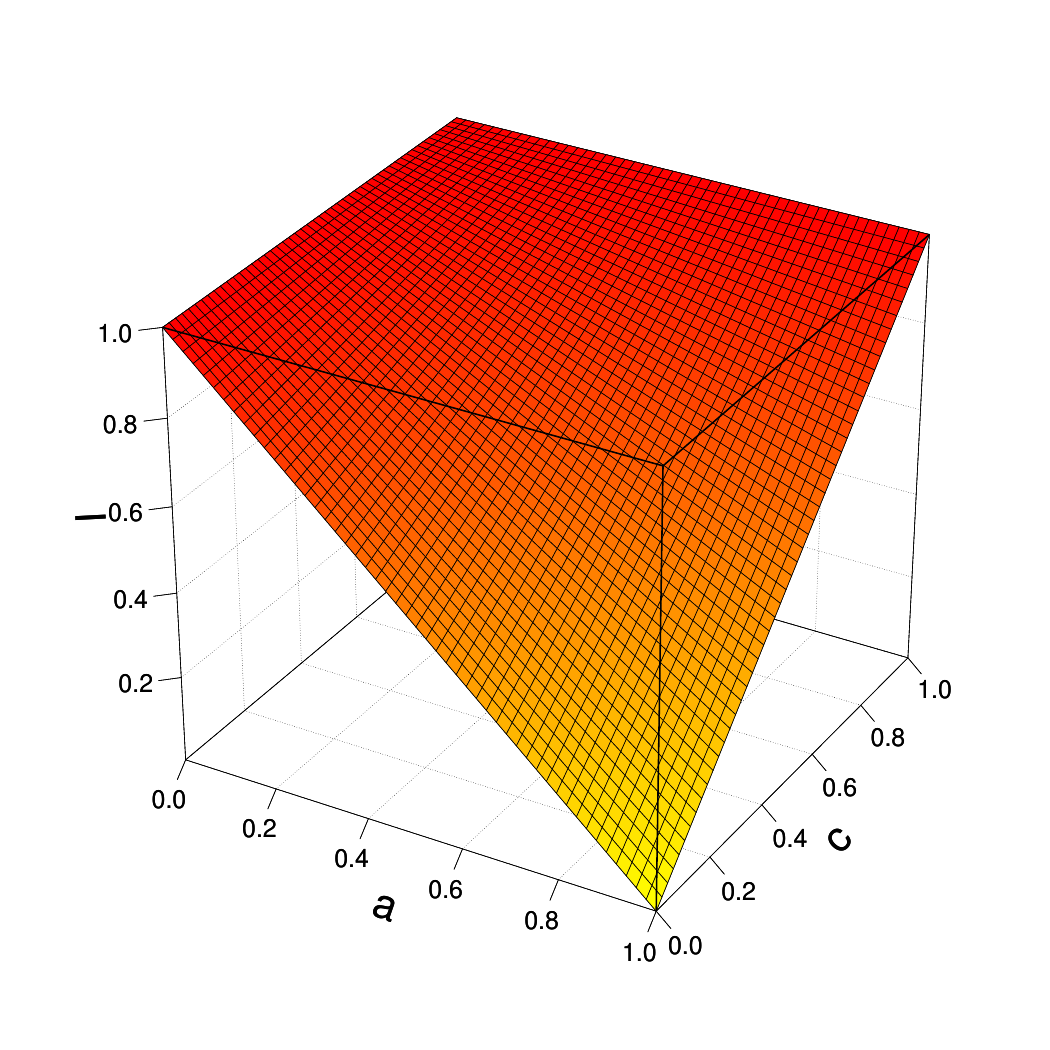}
    \caption{$b_0=-0.5, s=0.01$}
    \label{fig:probsum_sigm_s_0.01}
    \end{subfigure}
    \begin{subfigure}[b]{\graphwidth}
    \includegraphics[width=\linewidth]{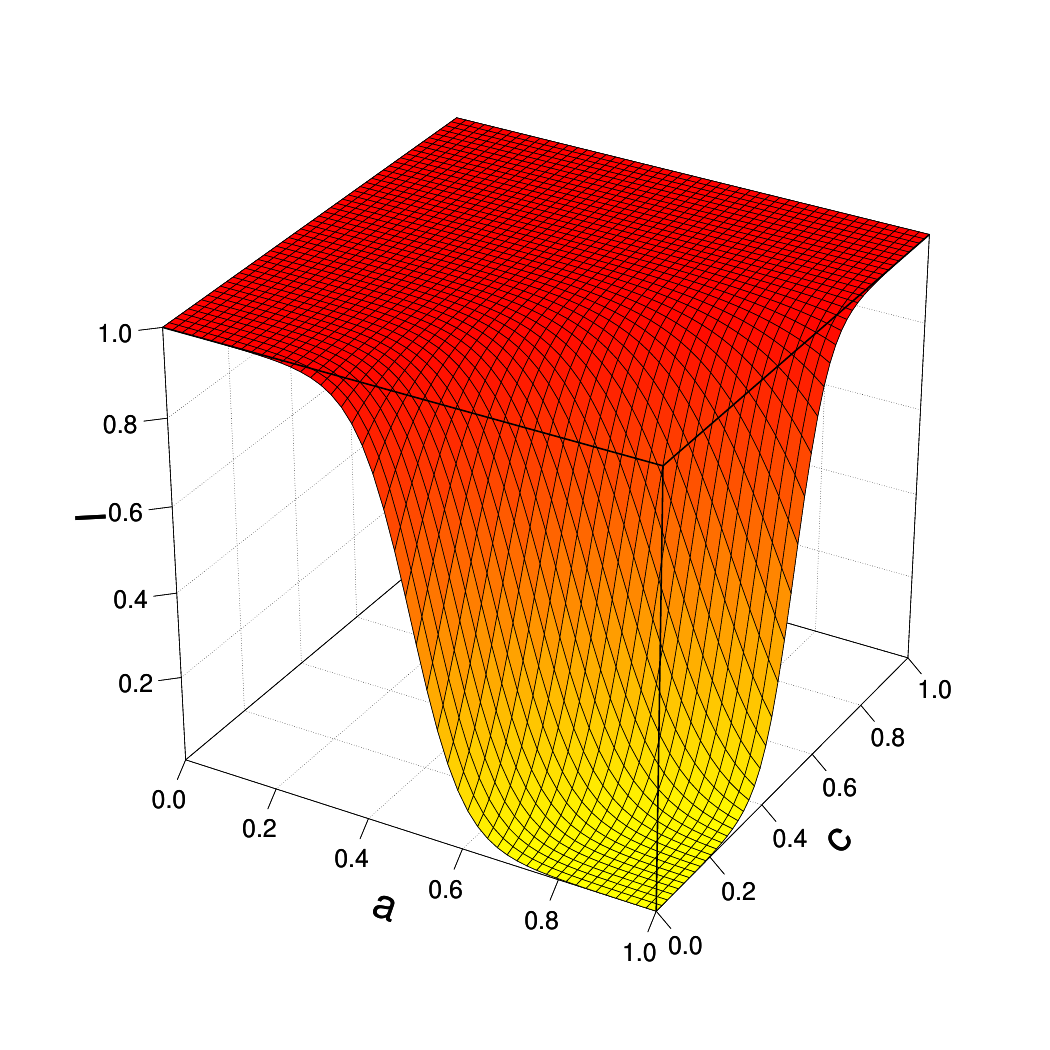}
    \caption{$b_0=-0.5, s=20$}
    \label{fig:probsum_sigm_s_20}
    \end{subfigure}%
    \caption{The Reichenbach-sigmoidal implication for different values of $b_0$ and $s$.}
    \label{fig:probsum_sigm_s}
    \end{figure}

We plot the derivatives for the Reichenbach-sigmoidal implication $\sigma_{I_{RC}}$ in Figure \ref{fig:deriv_rcsigm}. As expected by Proposition \ref{prop:sigm_contrapos}, it is  differentiable contrapositive symmetric. Compared to the derivatives of the Reichenbach implication 
it has a small gradient in all corners. 
When using the log-product aggregator, the derivative of the antecedent with respect to the total valuation is divided by the truth of the implication. In Figure \ref{fig:log_dmp} we compare the consequent derivative of the normal Reichenbach implication with the Reichenbach-sigmoidal implication when using the $\log$ function. Clearly, for both there is a singularity at $a=1,\ c=0$, as then the implication is 0 and so the derivative of the log function becomes infinite. A significant difference is that the sigmoidal variant is less `flat' than the normal Reichenbach implication. This can be useful, as this means there is a larger gradient for values of $c$ that make the implication less true. In particular, the gradient at the modus ponens case ($a=1,\ c=1$) and the modus tollens case ($a=0,\ c=0$) are far smaller, which could help balancing the effective total gradient by solving the `corner' problem of the Reichenbach implication we brought up in Section \ref{sec:prod-implication}. These derivatives are smaller for higher values of $s$. 

In Figure \ref{fig:probsum_sigm_s} we plot the Reichenbach-sigmoidal implication for different values of the hyperparameters $b_0$ and $s$. Comparing \ref{fig:probsum_sigm_s_9-0.5} and \ref{fig:probsum_sigm_s_9-0.2} we see that larger values of $b_0$ move the sigmoidal shape so that its center is at lower input values. Note that for $s=0.01$ in Figure \ref{fig:probsum_sigm_s_0.01}, the plotted function is indiscernible from the plot of the Reichenbach implication in Figure \ref{fig:goguen-i} as the interval on which the sigmoid acts is extremely small and the sigmoidal transformation is almost linear. For very high values of $s$ like in \ref{fig:probsum_sigm_s_20} we see that the `S' shape is much thinner, and a larger part of the domain has a low derivative.

\subsection{Summary}
We analyzed several fuzzy implications from a theoretical perspective, while keeping the challenges caused by the material implication in mind. As a result of this analysis, we find that popular R-implications, in particular the Gödel implication, the Yager R-implication and the Goguen implication, will not work well in a differentiable setting. The other analyzed implications seem to have more intuitive derivatives, but may have other practical issues like non-smoothness.

\section{Experimental setup}
\label{chapter:experiments}


\label{sec:mnist}
To get insights in the behavior of these operators in practice, we next perform a series of simple experiments to analyze them. 
We discuss experiments using the MNIST dataset of handwritten digits \pcite{lecun-mnisthandwrittendigit-2010} to investigate the behavior of different fuzzy operators introduced in this paper. 
The goal of the experiments is not to show that our method is state of the art for the problem of semi-supervised learning on MNIST, but rather to be able to get insights into how fuzzy operators behave in a differentiable setting.\footnote{Code is available at \url{https://github.com/HEmile/differentiable-fuzzy-logics}.}
\subsection{Measures}
\label{sec:mnist_ldr}
To investigate the performance of the different configurations of \dfl, we first introduce several useful metrics. These give us insight into how different operators behave. In this section, we assume we are dealing with formulas of the form $\varphi=\forall x_1, ..., x_m\ \phi\rightarrow \psi$.

\begin{deff}
The \textit{consequent magnitude} $\mpmag$ and the \textit{antecedent magnitude} $\mtmag$ for a knowledge base $\corpus$  is defined as the sum of the partial derivatives of the consequent and antecedent with respect to the \dfl loss:
\begin{equation}
    \mpmag= \sum_{\varphi\in\corpus}\sum_{\instantiation\in\instantiations_\varphi}\frac{\partial \val(\mu, \varphi)}{\partial \val(\mu, \psi)}, \quad \mtmag= \sum_{\varphi\in\corpus}-\sum_{\instantiation\in\instantiations_\varphi}\frac{\partial \val(\mu, \varphi)}{\partial \val(\mu, \phi)},
\end{equation}
where $\instantiations_\varphi$ is the set of instances of the universally quantified formula $\varphi$ and $\psi$ and $\phi$ are evaluated under instantiation $\instantiation$.

The \textit{consequent ratio} $\mpratio$ is the sum of consequent magnitudes divided by the sum of consequent and antecedent magnitudes: $\mpratio = \frac{\mpmag}{\mpmag + \mtmag}$.
\end{deff}


\begin{deff}
Given a \textit{labeling function} $l$ that returns the truth value of a formula according to the data for instance $\instantiation$, the \textit{consequent and antecedent correctly updated magnitudes} are the sum of partial derivatives for which the consequent or the negated antecedent is true:
\begin{equation}
    \mpcorupdate = \sum_{\varphi\in\corpus}\sum_{\instantiation\in\instantiations_\varphi}  l(\psi, \mu) \cdot \frac{\partial \val(\mu, \varphi)}{\partial \val(\mu, \psi)}, \quad \mtcorupdate = \sum_{\varphi\in\corpus}-\sum_{\instantiation\in\instantiations_\varphi}l(\neg \phi, \mu) \cdot \frac{\partial \val(\mu, \varphi)}{\partial \val(\mu, \phi)}.
\end{equation}
\end{deff}
That is, if the consequent is true in the data, we measure the magnitude of the derivative with respect to the consequent. 
To evaluate these quantities, we define ratios similar to a precision metric:
\begin{deff}
The \textit{correctly updated ratio} for consequent and antecedent are defined as
\begin{equation}
    \mpupdateratio = \frac{\sum_{\varphi\in\corpus}\mpcorupdate_\varphi}{\sum_{\varphi\in\corpus}\mpmag_\varphi}, \quad    \mtupdateratio = \frac{\sum_{\varphi\in\corpus}\mtcorupdate_\varphi}{\sum_{\varphi\in\corpus}\mtmag_\varphi}.
\end{equation}
\end{deff}
These quantify what fraction of the updates are going in the right direction. When these ratios approach 1, \dfl will always increase the truth value of the consequent or negated antecedent correctly.\footnote{It can still change the truth value of a ground atom wrongly if $\phi$ or $\psi$ are not atomic formulas.} Otherwise, we are increasing truth values of subformulas that are wrong. 
Ideally, we want these measures to be high.

\subsection{Formulas}
We use a knowledge base $\corpus$ of universally quantified logic formulas. There is a predicate for each digit, that is $\pred{zero},\ \pred{one}, ..., \pred{eight}$ and  $\pred{nine}$. For example, $\pred{zero}(x)$ is true whenever $x$ is a handwritten digit labeled with 0. 
We have two sets of formulas where we learn an additional binary predicate.

\subsubsection{The \pred{same} problem}
The \pred{same} problem is a simple problem to test different operators for implication and universal aggregation.
We use the binary predicate $\pred{same}$ that is true whenever both its arguments are the same digit. We next describe the formulas we use. 
\begin{enumerate}
\item $ \forall x, y\ \pred{zero}(x)\otimes \pred{zero}(y) \rightarrow \pred{same}(x, y), ..., \forall x, y\ \pred{nine}(x)\otimes \pred{nine}(y) \rightarrow \pred{same}(x, y) $. 
If both $x$ and $y$ are handwritten zeros, for example, then they represent the same digit. 
\item $ \forall x, y\ \pred{zero}(x) \otimes \pred{same}(x, y) \rightarrow \pred{zero}(y), ..., \forall x, y\ \pred{nine}(x) \otimes \pred{same}(x, y) \rightarrow \pred{nine}(y) $. 
If $x$ and $y$ represent the same digit and one of them represents zero, then the other one does as well. 
\item $ \forall x, y\ \pred{same}(x, y) \rightarrow \pred{same}(y, x) $. 
This formula encodes the symmetry of the $\pred{same}$ predicate. 
\end{enumerate}
We find in \ref{appendix:formulas} that a set of operators is better than random guessing for the consequent updates if $\mpupdateratio>0.1$,  and that we know with confidence a set of operators to be better than random if $\mtupdateratio > 0.99$.

\subsubsection{The \pred{sum9} problem}
In the second problem we use the binary predicate $\pred{sum9}$ that is true whenever its arguments sum to 9. We use this problem to test existential quantification, conjunction and disjunction. The formulas are
\begin{enumerate}
    \item $\forall x \exists y\  \pred{sum9}(x, y)$. For each digit, there is another such that their sum is 9.\footnote{We sample minibatches of 64 digits, which means there is a negligible probability that there exists a digit in the minibatch for which there is no match (0.0117, to be precise).} 
    \item $\forall x, y\ \pred{sum9}(x, y) \rightarrow (\pred{zero}(x) \otimes \pred{nine}(y)) \oplus (\pred{one}(x) \otimes \pred{eight}(y)) \oplus \dots \oplus (\pred{nine}(x) \otimes \pred{zero}(y))$: This formula defines the $\pred{sum9}$ predicate.
\end{enumerate}
\subsection{Experimental Methodology}

We split the MNIST dataset so that 1\% of it is labeled and 99\% is unlabeled. Given a handwritten digit $\bx$ labeled with digit $y$, $p_\btheta(y|\bx)$ computes the distribution over the 10 possible labels. We use 2 convolutional layers with max pooling, the first with 10 and the second with 20 filters. Then follows two fully connected hidden layers with 320 and 50 nodes and a softmax output layer. The probability that $\pred{same}(\bx_1, \bx_2)$ for two handwritten digits $\bx_1$ and $\bx_2$ holds is modeled by $p_\btheta(\pred{same}|\bx_1, \bx_2)$. This takes the 50-dimensional embeddings of $\bx_1$ and $\bx_2$ of the fully connected hidden layer $e_{\bx_1}$ and $e_{\bx_2}$. These are used in a network architecture called a Neural Tensor Network \pcite{Socher2013}:

\begin{equation}
    p_\btheta(\pred{same}|\bx_1, \bx_2)=\sigma\left(u^\intercal \tanh\left(e_{\bx_1}^\intercal W^{[1:k]}e_{\bx_2} + V \begin{bmatrix} e_{\bx_1} \\ e_{\bx_2} \end{bmatrix} + b\right)\right).
\end{equation}
$W^{[1:k]}\in\mathbb{R}^{d\times d\times k}$ is used for the bilinear tensor product, $V\in \mathbb{R}^{k\times 2d}$ is used for a the concatenated embeddings and $b\in\mathbb{R}^k$ is used as a bias vector. We use $k=50$ for the size of the hidden layer. $u\in\mathbb{R}^{k}$ is used to compute the output logit, which goes through the sigmoid function $\sigma$ to get the confidence value. 

The loss function we use is split up in three parts, the first over the unlabeled dataset $\dataset_u$, and the two others over the labeled dataset $\dataset_l$:
\begin{align}
    \loss(\btheta) = w_{\dfl} \cdot \loss_{\dfl} (\langle \dataset_u, \eta, \btheta \rangle , \corpus)  -\sum_{\bx, y\in \dataset_l} \log p_\btheta(y|\bx) 
    -  \sum_{\substack{\bx_1, y_1, \bx_2, y_2\\\in \dataset_l\times\dataset_l}}
    \log p_\btheta(\pred{same}=\boldsymbol{1}_{y_1=y_2}|\bx_1, \bx_2)
\end{align}
The first term is the \dfl loss which is weighted by the \textit{\dfl weight} $w_{\dfl}$. 
The second is the supervised cross entropy loss with a batch size of 64. 
The third is the supervised binary cross entropy loss used to learn recognize $\pred{same}(x, y)$.\footnote{It is possible to not use this loss term and learn the $\pred{same}$ predicate using just the formulas, although this is more challenging and only works with a good set of fuzzy operators.} 
This loss is $\log p_\btheta(\pred{sum9}=\boldsymbol{1}_{y_1 + y_2=9}|\bx_1, \bx_2)$ for the \pred{sum9} problem. As there are far more negative examples than positive examples, we undersample the negative examples.
Note that the two supervised losses can also be seen as a universal aggregation over logical facts using the log-product aggregator: $-A_{\log T_P}(p_\btheta(y_1|\bx_1), ..., p_\btheta(y_{|\dataset_l|}|\bx_{|\dataset_l|}))$. 
For optimization, we used ADAM \citep{kingmaAdamMethodStochastic2017} with a learning rate of 0.001. 

\section{Results}
We ran experiments for many combinations of operators with the aim of showing that the discussed insights are present in practice.
We report the accuracy of recognizing digits in the test set, the consequent ratio $\mpratio$, and the consequent and antecedent correctly updated ratios $\mpupdateratio$ and $\mtupdateratio$. 
We train for at most 70.000 iterations (or until convergence).  
The purely supervised baseline has a test accuracy of $95.18\%\pm 0.204$, and runs for about 35 minutes. 
Semi-supervised methods should improve upon this baseline to be useful. 
Our implementation including \dfl runs for 1 hour and 52 minutes.

\subsection{Symmetric Configurations}
First, we consider several \textit{symmetric} configurations, where the conjunction is a t-norm $T$, disjunction the dual t-conorm of $T$, universal aggregation the extended t-norm $A_T$, existential aggregation the extended t-conorm $E_S$ and the implication either is the S-implication based on the t-conorm or the R-implication based on the t-norm. 
For example, for $T_P$ we use $T_P$ for conjunction, $S_P$ for disjunction, $\logprod=\log\circ A_{T_P}$ for aggregation and $I_{RC}$ for implication. 
Symmetric configurations will retain many equivalence relations in fuzzy logic, unlike when one would choose arbitrary configuration of operators. 
\subsubsection{Symmetric Configurations on \pred{same} problem}
\begin{table}
    \centering
    \begin{tabular}{l...c|....}
     & \multicolumn{4}{c}{S-Implications} & \multicolumn{4}{c}{R-Implications} \\
    \hline
                       & \mc{Accuracy}   & \mc{$\mpratio$} & \mc{$\mpupdateratio$} & \mc{$\mtupdateratio$}  
                       & \mc{Accuracy}   & \mc{$\mpratio$} & \mc{$\mpupdateratio$} & \mc{$\mtupdateratio$}  \\
    \hline
    $T_G$              & 95.3            & 0.32            & 0.31                   & 0.83                           
                       & 95.0            & 1               & 0.11                  & - \\ 
    $T_P$              & \bft{96.5}      & 0.08            & 0.72                  & \textbf{0.99}  
                       & 94.8            & 0.62            & 0.04                  & 0.96     \\
    $T_{LK}$           & 94.9            & 0.5             & \bft{0.86}            & 0.12                           
                       & 94.9            & 0.5             & \bft{0.86}            & 0.12                \\  
    $T_Y,\ p=1.5$      & 95.2            &                 &                        &                            
                       & 95.2            &                 &                       &  \\
    $T_Y,\ p=2$        & 77.7            & 0.20            & 0.51                  & 0.75                           
                       & 95.0            & 0.62            & 0.61                  & 0.46 \\
    $T_Y,\ p=20$       & 95.6            & 0.02            & 0.54                  & 0.75
                       & 95.5            & 0.53            & 0.01                  & \bft{0.99}        \\
    $T_{Nm}$           & 95.2            &                &                      &  
                       & 95.2            &                &                      &    \\
    \hline
    \end{tabular}
    \caption{Results on the \pred{same} problem for several symmetric configurations with either S-implications or R-implications. For all, $w_{\dfl}=1$ except for $T_P$, for which $w_{\dfl}=10$.}
    \label{table:mnist_symmetric}
\end{table}

All configurations are run with $w_{\dfl}=1$ except for $T_P$ which is run using $w_{\dfl}=10$. The results on the $\pred{same}$ problem can be found in Table \ref{table:mnist_symmetric}.
One general observation that can be made is that S-implications seem to work much better than R-implications. The only configuration with R-implications that outperform the supervised baseline is $T_Y$, $p=20$, but here the S-implication performs similar to it.
We hypothesize this is because the derivatives of R-implications vanish whenever $a \leq c$. 

The Gödel t-norm performs on par with the supervised baseline. This is because the min aggregator is single-passing, like the configuration as a whole. 
The single instance which receives a derivative might just be an exception as argued in Section \ref{sec:aggr_min} and evident from the low values of $\mpupdateratio$ and $\mtupdateratio$. 

The \luk\ t-norm performs worse than the supervised baseline. 
Since $A_{LK}$ either has a derivative of 0 or 1 everywhere, the total gradient is very large when it does not vanish. 
By the definition of $I_{LK}$, $\mpratio=\frac{1}{2}$ as the consequent and negated antecedent derivatives are equal (see Equation \ref{eq:impl_deriv_luk}). $\mpupdateratio$ is very low with only 0.01, which is worse than random guessing. As half of the gradient is MP reasoning, that half is nearly always incorrect.
The performance of the Yager t-norm seems highly dependent on the choice of the parameter $p$. For $p=20$ the top performance is quite a bit higher than the baseline. 
The lower the value of $p$, the more likely it is that the derivative of the universal aggregator vanishes. 
However, for $p=2$, the results are even worse than the \luk\ t-norm, which corresponds to $p=1$, while the derivative with $p=1.5$ simply vanishes throughout the whole run.

The product t-norm performs best and also has the highest values for $\mpupdateratio$ and $\mtupdateratio$. 
To a large extend this is because the log-product aggregator is very effective, as other symmetric configurations also perform much better with it \ref{sec:mnist_symmetric_aggregator}. 
Finally, the Nilpotent t-norm performs exactly like the supervised baseline since the derivative of the universal aggregator vanished during the complete training run. 

\subsubsection{Symmetric configurations on the \pred{sum9} problem}
\begin{table}
    \centering
    \begin{tabular}{l....}
    \hline
                   & \mc{Accuracy}                  & \mc{$\mpratio$} & \mc{$\mpupdateratio$} & \mc{$\mtupdateratio$} \\ \hline
    $T_G$          & 95.2                           & 0.31            & 0.44                  & 0.78             \\
    $T_P$          & \bft{96.1}                     & 0.13            & 0.80                  & 0.95             \\
    $T_{LK}$       & 95.2                           &                 &                       &                 \\
    $T_Y,\ p=1.5$    & 95.2                           &                 &                       &              \\
    $T_Y,\ p=2$    & 95.2                           &                 &                       &              \\
    $T_Y,\ p=20$   & 95.5                          & 0.99            & 0.82                  & 0.71             \\
    $T_{Nm}$       & 95.2                           &                 &                       &   \\
    \hline
    \end{tabular}
    \caption{Results on the \pred{sum9} problem for several symmetric configurations using the S-implication. $w_{\dfl} = 1$ except for $T_P$ with $w_{\dfl}=10$.}
    \label{table:sum9_symmetric}
\end{table}
In addition to the \pred{same} problem, we also run the symmetric configurations on the \pred{sum9} problem to be able to also take into account how existential quantification and disjunction behave. 
The results are in Table \ref{table:sum9_symmetric}.
These closely reflect the results for the $\pred{same}$ problem. 
Again, the only configurations that clearly outperform the supervised baseline are the product t-norm and the Yager t-norm with $p=20$, with the product t-norm being the most promising candidate.
Furthermore, in addition to the Nilpotent t-norm, the derivatives of the \luk\ t-norm and Yager t-norm with $p=2$  vanish throughout the whole run.

\subsection{Individual operators}
\label{sec:experiment_individual}
We also perform several experiments where we investigate the contribution of specific fuzzy operators without regard to whether the resulting configurations are sensible in a logical sense.
We do this to better understand how each operator contributes to the learning process.
Throughout this section, we fix the universal aggregation operator to the log-product aggregator $A_{\log T_P}$, the existential aggregation operator to the generalized mean $E_{GM}$ with $p=1.5$, the conjunction and disjunction to $T_Y$ and $S_Y$, also with $p=1.5$, and the implication to the Reichenbach-sigmoidal implication with $s=9$ and $b=-0.5$. 
We select these because of their promise in initial experiments.  
\subsubsection{Aggregation}
\begin{table}
    \centering
    \begin{tabular}{l...c|l....}
     & \multicolumn{4}{c}{Universal aggregation} & & \multicolumn{4}{c}{Existential aggregation} \\
    \hline
                       & \mc{Accuracy}   & \mc{$\mpratio$} & \mc{$\mpupdateratio$} & \mc{$\mtupdateratio$} & 
                       & \mc{Accuracy}   & \mc{$\mpratio$} & \mc{$\mpupdateratio$} & \mc{$\mtupdateratio$}  \\
    \hline
    $A_{T_G}$          & 86.6            & 0.37            & 0.65                  & 0.45  &                          
    $E_{S_G}$          & 95.3            & 0.38            & 0.60                  & 0.66 \\ 
    $A_{\log T_P}$     & \bft{96.3}      & 0.14            & 0.53                  & 0.96   &
    $E_{S_P}$          & 96.1            & 0.15            & 0.68                  & 0.94     \\
    $A_{T_{LK}}$       & 78.2            & 0.00            & \bft{0.94}            & \textbf{0.99}                           &
    $E_{S_{LK}}$       & 95.9            & 0.17            & \bft{0.76}            & 0.87                \\  
    $A_{T_Y},\ p=1.5$  & 79.5            & 0.00            & 0.77                  & 0.98 &                           
    $E_{S_Y}$          & 95.9            & 0.21            & 0.64                  & 0.85 \\
    $A_{T_Y},\ p=2$    & 83.3            & 0.00            & 0.83                  & 0.98 &                           
    $E_{S_Y}$          & 96.3            & 0.27            & 0.59                  & 0.81 \\
    $A_{T_Y},\ p=20$   & 84.0            & 0.00            & 0.81                  & 0.98 &
    $E_{S_Y}$          & 96.3            & 0.37            & 0.62                  & 0.70        \\
    $A_{GME},\ p=1.5$  & 96.1            & 0.43            & 0.43                  & 0.76 &
    $E_{GM}$           & \bft{96.9}      & 0.29            & 0.13                  & \bft{0.95} \\
    $A_{RMSE}$         & 96.2            & 0.46            & 0.42                  & 0.72 &
    $E_{GM}$           & 96.7            & 0.29            & 0.14                  & 0.94 \\
    $A_{GME},\ p=20$   & 95.5            & 0.45            & 0.38                  & 0.70 &
    $E_{GM}$           & 96.4            & 0.39            & 0.64                  & 0.70 \\
    $A_{T_{Nm}}$       & 79.8            & 0.34            & 0.50                  & 0.58  & 
    $E_{S_{Nm}}$       & 95.5            & 0.31            &0.58                   & 0.78   \\
    \hline
    \end{tabular}
    \caption{Left: Results on the \pred{same} problem, varying the universal aggregator. For all, $w_{\dfl}=10$. Right: Results on the \pred{sum9} problem, varying the existential aggregator.}
    \label{table:aggregation}
\end{table}
Table \ref{table:aggregation} shows the results when varying the universal aggrator in the \pred{same} problem, and when varying the existential aggregator in the \pred{sum9} problem. The log-product operator with $w_{\dfl}=10$ is the best universal aggregator, with $A_{RMSE}$ trailing behind it slightly. 
Other generalized mean errors are also effective. 
We also see that the single-passing aggregators (minimum and Nilpotent minimum aggregators) and aggregators that vanish on a large part of their domain (Yager-based and Nilpotent minimum) all perform poorly. 
However, curiously the Yager-based aggregators have very high $\mpupdateratio$ and $\mtupdateratio$.

The generalized means have the best result for existential aggregation, with $p=1.5$ performing best. 
They manage to properly select the inputs that make the existential quantifier true by softly increasing the largest inputs.
Unlike with universal aggregation, the Yager existential aggregator is also a decent choice. 
The maximum aggregator and Nilpotent maximum aggregator only somewhat outperform the supervised baseline, however, which again is due to them being single-passing. 
\subsubsection{Conjunction and Disjunction}
\begin{table}
    \centering
    \begin{tabular}{l....}
    \hline
                   & \mc{Accuracy}                  & \mc{$\mpratio$} & \mc{$\mpupdateratio$} & \mc{$\mtupdateratio$} \\ \hline
    $T_G$          & 20.4                           & 0.47            & 0.15                  & 0.90             \\
    $T_P$          & 20.3                           & 0.46            & 0.16                  & 0.92             \\
    $T_{LK}$       & \bft{97.0}                     & 0.31            & 0.15                       & 0.93                \\
    $T_Y,\ p=1.5$  & \bft{97.0}                     & 0.29            & 0.11                  & \bft{0.96}             \\
    $T_Y,\ p=2$    & 96.8                           & 0.30                & 0.11              & \bft{0.96}             \\
    $T_Y,\ p=20$   & 94.9                           & 0.66            & 0.16                  & 0.86             \\
    $T_{Nm}$       & 96.9                           & 0.31            & \bft{0.17}                  & 0.93   \\
    \hline
    \end{tabular}
    \caption{Results on the \pred{sum9} problem, varying the t-norm and t-conorm together. }
    \label{table:tnorms}
\end{table}
In Table \ref{table:tnorms}, we compare different t-norms together with their corresponding t-conorms. 
Here, it is the operators that vanish on a large part of their domain that work best, namely Yager t-norms and the Nilpotent minimum. 
These seem to work much better than when used in aggregation since the amount of inputs is much smaller, reducing the probability that the derivative vanishes. 
The product t-norm and Gödel t-norm, which corresponds to weak conjunction and disjunction, seem to do perform very poorly. 
In this table, there is a clear relation where lower $\mpratio$ seem to perform better. 
It is likely lower in the Yager t-norm since the disjunction in the consequent will very often be 1, which happens when the sum in the Yager t-norm hits the boundary. 
\subsubsection{Implications}
\label{sec:experiment_implications}
In Table \ref{table:implications}, we compare different fuzzy implications on the \pred{same} problem. The Reichenbach implication and Yager S-implication work well, both having an accuracy around 97\%. 
The Kleene Dienes and Yager R-implications surpass the baseline as well. 
In these experiments, the sigmoidal-Reichenbach implication, which we run with $s=9$ and $b=-\frac{1}{2}$ performs as well as the normal Reichenbach implication. 
However, we find in \ref{sec:mnist_rcsigmoidal} that with the SGD algorithm, this implication outperforms the Reichenbach implication, reaching 97.3 accuracy. 

\begin{table}
    \centering
    \begin{tabular}{l...c|l....}
     & \multicolumn{4}{c}{S-Implications} & & \multicolumn{4}{c}{R-Implications} \\
    \hline
                       & \mc{Accuracy}   & \mc{$\mpratio$} & \mc{$\mpupdateratio$} & \mc{$\mtupdateratio$}  
                       & & \mc{Accuracy}   & \mc{$\mpratio$} & \mc{$\mpupdateratio$} & \mc{$\mtupdateratio$}  \\
    \hline
    $I_{KD}$           & 95.7            & 0.08            & \bft{0.82}            & 0.98                           &
    $I_G$              & 90.5            & 1               & 0.05                  &  \\ 
    $I_{RC}$           & \bft{96.3}      & 0.09            & 0.73                  & \textbf{0.98}  &
    $I_{GG}$           & 93.6            & 0.99            & 0.00                  & 0.87     \\
    $I_{LK}$           & 96.0            & 0.5             & 0.07                   & 0.93 &                           
    $I_{LK}$             & 96.0            & 0.5             & 0.07                   & 0.93                   \\  
    $I_Y,\ p=1.5$      & 96.1            & 0.14            & 0.77                  & 0.96 &                          
    $I_{RY}$           & 96.1            & 0.14            & 0.14                  & 0.66 \\
    $I_Y,\ p=2$        & 96.2            & 0.13            & 0.82                  & 0.97 &                           
    $I_{RY}$           & 95.7            & 0.13            & 0.64                  & 0.38 \\
    $I_Y,\ p=20$       & 95.1            & 0.57            & 0.48                  & 0.98 &
    $I_{RY}$           & 96.0            & 0.14            & 0.65                  & 0.38        \\
    $I_{FD}$           & 96.1            & 0.19            & 0.68                  & 0.91           &
    $I_{FD}$           & 96.1            & 0.19            & 0.68                  & 0.91   \\
    $\sigma_{I_{RC}}$  & \bft{96.3}      & 0.14           & 0.53                 & 0.96 &&&&& \\
    \hline
    \end{tabular}
    \caption{Results on the \pred{same} problem for several symmetric configurations with either S-implications or R-implications.}
    \label{table:implications}
\end{table}

As argued in Sections \ref{sec:godel_implications} and \ref{sec:prod-implication}
, the Gödel implication and Goguen implication have worse performance than the supervised baseline by making many incorrect modus ponens inferences. 
While the derivatives of $I_{LK}$ and $I_G$ only differ in that $I_G$ disables the derivatives with respect to negated antecedent, $I_{LK}$ performs among the better while $I_G$ is the worst test implication, suggesting that the derivatives with respect to the negated antecedent are required to successfully applying \dfl. 
Note that S-implications tend to perform better than R-implications, in particular for the Gödel t-norm and the product t-norm. 
This could be because they inherently balance derivatives with respect to the consequent and negated antecedent by being contrapositive differentiable symmetric.

%
\subsubsection{Additional experiments}
In \ref{sec:mnist_rcsigmoidal} we investigate the parameters $s$ and $b_0$ of the sigmoidal-Reichenbach implication. 
We find here that it is the best performing implication on the \pred{same} problem when using the vanilla SGD optimizer, reaching 97.3\% accuracy. 
Furthermore, in \ref{sec:mnist_formulas_experiments} we investigate for the \pred{same} problem what the influence of each rule is to the learning process. 

We also ran the \pred{sum9} problem on the vanilla SGD optimizer with the fixed configuration from Section \ref{sec:experiment_individual}, which reaches 97.7\% accuracy.
This is significantly higher than with ADAM, confirming our findings for \pred{same} problem in Section \ref{sec:experiment_implications}.
Finally, we ran with the same settings using both the formulas from the \pred{same} problem and the \pred{sum9} problem. 
This has the best accuracy we find in our experiments with \textbf{98.0\%}, and confirms that adding more background knowledge increases the final performance.
\subsection{Analysis}

We plot the accuracy of the different configurations with respect to $\mpcorupdate$ and $\mtcorupdate$ in Figures \ref{fig:cucons_accuracy} and \ref{fig:cuant_accuracy}. 
The blue dots represent runs on the \pred{same} problem, while the red dots represent runs on the \pred{sum9} problem. Figure \ref{fig:cuant_accuracy} shows a positive correlation, suggesting that it is vital to the learning process that updates going into the antecedent are correct. 
Although there seems to be a slight positive correlation in Figure \ref{fig:cucons_accuracy} for the \pred{same} problem, it is not as pronounced. Furthermore, it seems that for the \pred{sum9}, this correlation is negative instead, as the configurations with the highest accuracy  have low values of $\mpcorupdate$. 

\begin{figure}
\centering
\begin{subfigure}[b]{\graphwidth}
\includegraphics[width=\linewidth]{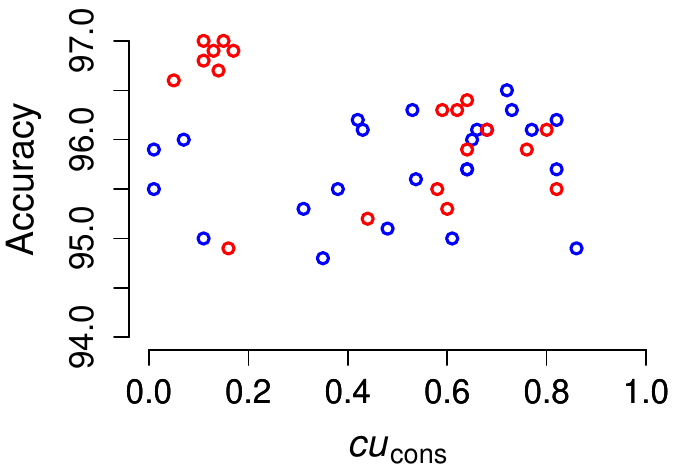}
\caption{Plot of $\mpupdateratio$ to accuracy.}
\label{fig:cucons_accuracy}
\end{subfigure}
\begin{subfigure}[b]{\graphwidth}
\includegraphics[width=\linewidth]{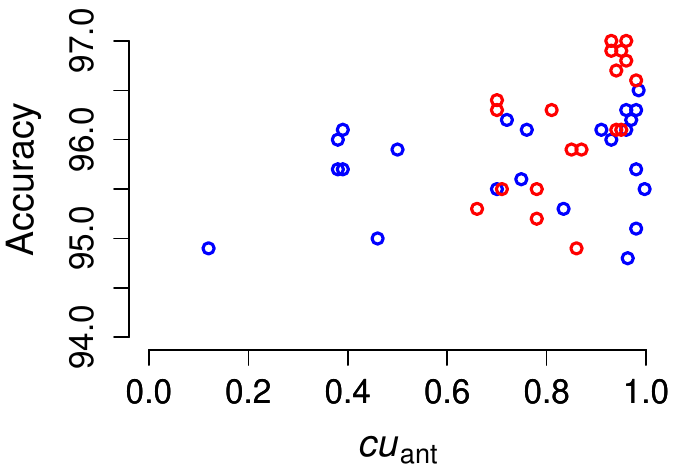}
\caption{Plot of $\mtupdateratio$ to accuracy.}
\label{fig:cuant_accuracy}
\end{subfigure}%
\\
\begin{subfigure}[b]{\graphwidth}
\includegraphics[width=\linewidth]{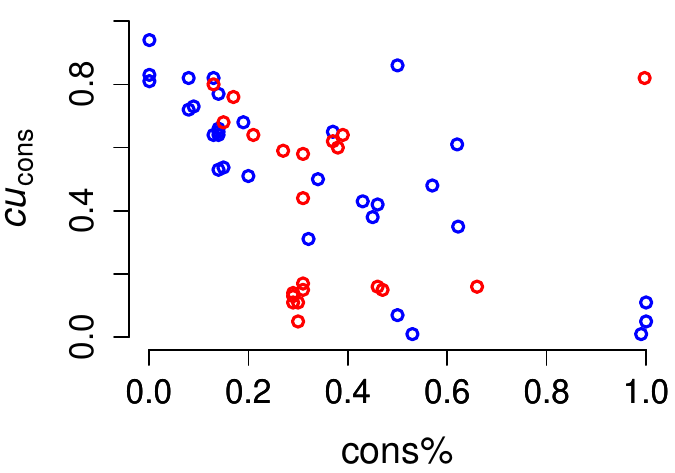}
\caption{Plot of $\mpratio$ to $\mpupdateratio$.}
\label{fig:mpmt_cucons}
\end{subfigure}
\begin{subfigure}[b]{\graphwidth}
\includegraphics[width=\linewidth]{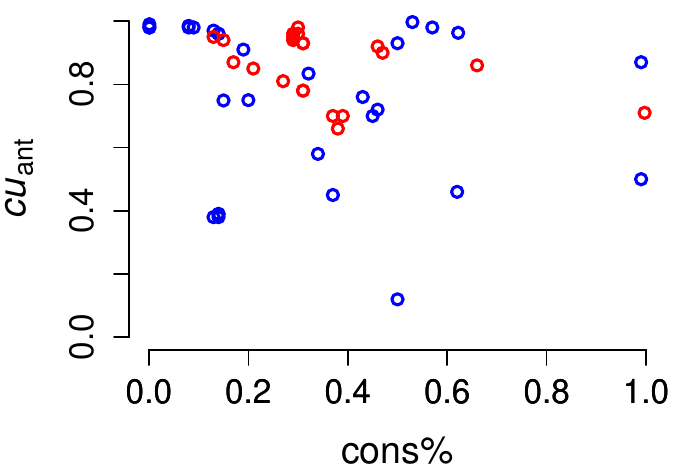}
\caption{Plot of $\mpratio$ to $\mtupdateratio$.}
\label{fig:mpmt_cuant}
\end{subfigure}
\caption{We plot several of the analytical measures to find their relations. Blue dots represent runs on the \pred{same} problem while red dots represent runs on the \pred{sum9} problem.}
\label{fig:analyze_measures}
\end{figure}

We plot all experimental values of $\mpratio$ to the values of $\mpupdateratio$ and $\mtupdateratio$ in Figures \ref{fig:mpmt_cucons} and \ref{fig:mpmt_cuant}. 
For both, there seems to be a negative correlation. 
Apparently, a larger consequent ratio decreases the correctness of the updates. 
In \ref{sec:mnist_rcsigmoidal} we find, when experimenting with the value of $s$, that this could be because for lower values of $\mpratio$, a smaller portion of the reasoning happens in the corners around $a=0,\ c=0$ and $a=1,\ c=1$, and more for instances that the agent is less certain about. 
Since all S-implications have strong derivatives at both these corners (Proposition \ref{prop:diff_left_neutral}), this phenomenon is likely present in other S-implications.

This all suggests we need to properly balance the contribution of updates to the antecedent and consequent. 
Since usually, as reasoned in Section \ref{sec:implication_challenges}, derivatives with respect to the antecedent are more common, this balance should be reflected in the experimental ratio between these updates. 

\subsection{Conclusions}
We have run experiments on many configurations of hyperparameters to explore what works and what does not. 
The only well performing fully symmetric option is the product t-norm with the Reichenbach implication. 
If we are willing to forego symmetry, we find that the choice of the aggregators is the most important factor for performance. 
For universal aggregation, we recommend the log-product aggregator, while for existential quantification we recommend the generalized mean with a value of $p$ of somewhere between 1 and 2. 
We found that it is especially important to choose aggregation operators that do not vanish on a large part of their domain, and that are not single-passing. 
In our experiments, a well tuned sigmoidal-Reichenbach implication coupled with vanilla SGD proved to be the most effective fuzzy implication.
In general, we recommend choosing S-implications above R-implications.
For conjunction and disjunction, we recommend tuning the Yager t-norm, although this value can be dependent on the complexity of the formulas to prevent the derivative from vanishing during the whole run. 

Although \dfuzz significantly improves on the supervised baseline and is thus suited for semi-supervised learning, it is not currently competitive with state-of-the-art methods like Ladder Networks \pcite{rasmus2015semi} which has an accuracy of 98.9\% for 100 labeled pictures and 99.2\% for 1000. 




\section{Related Work}
\label{chapter:related_work}
\dfuzz falls into the discipline of Statistical Relational Learning \pcite{getoor2007}, which concerns models that can reason under uncertainty and learn relational structures like graphs. 

\subsection{\dfuzz}
\label{sec:rel-dfuzz}
Special cases of \dfl have been researched in several papers under different names. Logic Tensor Networks (LTN) \pcite{badreddineLogicTensorNetworks2020,Serafini2016} implements function symbols and uses neural model to interpret predicates. LTN is applied to weakly supervised learning on Scene Graph Parsing \pcite{Donadello2017} and transfer learning in Reinforcement Learning \pcite{badreddine2019injecting}.

Semantic-based regularization (SBR) \pcite{diligenti2017b} applies \dfl to kernel machines. They use R-implications and the mean aggregator. \tcite{sen2008} applies SBR to collective classification by predicting using a trained deep learning model, and then optimizes the \dfl loss to find new truth values. This ensures predictions are consistent with the formulas during test-time. 

\tcite{marra2019learning} 
uses t-norm Fuzzy Logics, where the R-implication is used alongside weak disjunction. By using t-norms based on generator functions, the satisfiability computation can be simplified and generalizations of common loss functions can be found. \tcite{marra2018} applies \dfl to image generation. It uses the product t-norm, the log-product aggregator and the Goguen implication. By using function symbols that represent generator neural networks, they create constraints that are used to create a semantic description of an image generation problem. \tcite{Rocktaschel2015b} uses the product t-norm and Reichenbach implication for relation extraction by using an efficient matrix embedding of the rules. \tcite{guo2016} extends this to link prediction and triple classification by using a margin-based ranking loss for implications. 

\tcite{Demeester2016} uses a regularization technique equivalent to the \luk\ implication. Instead of using existing data, it finds a loss function which does not iterate over objects, yet can guarantee that the rules hold. This is very scalable, but can only model simple implications. A promising approach is using adversarial sets \pcite{minervini2017}, which is a set of objects from the domain that do not satisfy the knowledge base. These are probably the most informative objects. It uses gradient descent to find objects that \textit{minimize} the satisfiability. The parameters of the deep learning model are then updated so that it predicts consistent with the knowledge base on this adversarial set. A benefit of this approach is that it does not have to iterate over instances that already satisfy the constraints. Adversarial sets are applied to natural language interpretation in \pcite{minervini2018}. Both papers use the \luk\ implication and Gödel t-norm and t-conorm. They are not able to infer new labels on existing unlabeled data as they use artificial data, but these methods are not orthogonal and can be used jointly. 


\subsection{Neuro-symbolic methods using Fuzzy Logic Operators}
Posterior regularization \pcite{Ganchev2010,P16-1228} is a framework for weakly-supervised learning on structured data. 
It projects the output of a deep learning model to a `rule-regularized subspace' to make it consistent with the knowledge base. This output is used as a label for the deep learning model to imitate. Unlike this paper, it does not compute derivatives over the computation of the satisfaction of the knowledge base. \tcite{Marra2019} and \tcite{10.1007/978-3-030-29908-8_43} instead use gradient descent for the projection. Therefore, unlike earlier methods for posterior regularization, derivatives with respect to the operators are used. They learn relative formula weights jointly with the parameters of the deep learning model. 

\tcite{arakelyanComplexQueryAnswering2021} uses t-norms, t-conorms and existential quantification to answer queries by finding what entity embedding has the highest truth value of a given query. This search is done using gradient descent. 
By comparing what entity embedding best fits the optimized entity embedding, the authors can answer complex FOL queries. The authors either use the product or the Gödel t-norms. 

Another recent work which employs fuzzy logic operators in a neuro-symbolic setting is Logical Neural Networks \citep{riegel2020logical}. 
This work stands orthogonal to our work, as the foremost distinction is that they employ logics on the low-level (i.e., logical connectives as neurons and neural activation functions) while we employ it on the higher level (i.e., in defining the loss function). 
They limit their work to the propositional level for simplification purposes, although they argue that extending it to relational level is straightforward.

$\partial$ILP \pcite{Evans2018} is a differentiable inductive logic programming that uses the product t-norm and t-conorm to do differentiable inference. The Neural Theorem Prover \pcite{Rocktaschel2017} does differentiable proving of queries and combines different proof paths using the Gödel t-norm and t-conorm.  \tcite{Sourek2015LiftedNetworks} also introduces a method for differentiable query proving, with learnable weights for formulas. They use operators inspired by fuzzy logic and transformed by the sigmoid function. 

There is a vast literature on Fuzzy Neural Networks \pcite{Jang1993a,Jang1997a,Lin1991a} that replace standard neural network neurons with neurons based on fuzzy logic. Some neurons use fuzzy logic operators which are differentiated through if the networks are trained using backpropagation. 

\subsection{\dprob}
\label{sec:related_probabilistic}
Some approaches use probabilistic logics instead of fuzzy logics and interpret predicates probabilistically. 
As deep learning classifiers can model probability distributions, probabilistic logics could be a more natural choice than fuzzy logics. 
DeepProbLog \pcite{DBLP:conf/nips/2018} is a probabilistic logic programming language with neural predicates that compute the probabilities of ground atoms. 
It supports automatic differentiation which can be used to back-propagate from the loss at a query predicate to the deep learning models that implement the neural predicates, similar to \dfl. 
It also supports probabilistic rules which can handle exceptions to rules.
We compare another differentiable probabilistic logic called Semantic Loss \pcite{pmlr-v80-xu18h} in \ref{appendix:prl-sl} and show similarities between it and \dfl using operators based on the product t-norm. This similarity suggests that many practical problems that \dpfl has are also present in Semantic Loss. %
They apply Semantic Loss to MNIST semi-supervised learning with a different knowledge base than ours.  
As inference is exponential in the size of the grounding for probabilistic logics, both approaches use an advanced compilation technique \pcite{Darwiche2011} to make inference feasible for larger problems.

\label{chapter:conclusions}

\section{Discussion}

This paper presented theoretical results of \dfuzz operators and then evaluated their behavior on semi-supervised learning.
We now discuss some problems with deploying solutions using \dfl.

\dfl can be seen as a form of multi-objective optimization \pcite{hwang2012multiple}. In the \dfl loss (Equation \ref{eq:lossrl}) we sum up the valuations of different formulas, each of which is a separate objective. 
Each of these objectives can be weighted differently, resulting in wildly varying loss landscapes. 
Having so many objectives requires significant hyperparameter tuning. A method capable of learning relative formula weights jointly like \pcite{Marra2019, 10.1007/978-3-030-29908-8_43, Sourek2015LiftedNetworks}, could solve this problem.

A second challenge is related to the class imbalance problem \pcite{japkowicz2002class, buda2018systematic}. We argued in Section \ref{sec:implication_challenges} that for a significant portion of common-sense background knowledge, the modus tollens case is by far the most common. 
Our \pred{same} problem indeed showed that most well-performing implications have a far larger derivative with respect to the negated antecedent than to the consequent. 
This imbalance will only increase for more complex problems. 
However, simply removing derivatives with respect to the antecedent does not seem to be the solution. A reason for this could be that those are usually correct, unlike derivatives with respect to the consequent. In fact, we found in \ref{sec:mnist_formulas_experiments} that the formula in which the digits are in the antecedent performs better on its own than the formula in which the digits are in the consequent, even though the model could not learn from any new positive examples. 

Although we have focused on experimenting with the accuracy of the derivatives of the implication, it should be noted that the derivatives of the disjunction operator make a choice as well. For example, if the agent observes a walking object and the supervisor knows that only humans and animals can walk, how is the supervisor supposed to choose whether it is a human or an animal? Here, similar imbalances exist in the different possible classes: There might be more images of humans than of animals.

Further, we pose whether it is more important that we choose operators based on the performance on the task at hand, or based on its logical properties. 
The best configuration uses operators based on both the product and Yager t-norms. 
The product t-norm is the only viable symmetric choice in our experiments. 
The largest benefit of a `symmetric' choice of operators is that the truth value of formulas that are logically equivalent in classical logic will be equal. 
This makes it easier to analyze how the background knowledge will behave and does not require putting it in a particular form.  

As a final remark, noteworthy is the interpretation of truth values. As aforementioned, the logic we use is fuzzy logic which was originally aimed to address logical reasoning in the presence of vagueness rather than probabilistic uncertainty. The truth values derived using fuzzy operators, therefore, are not probabilistic (see, for instance, \cite[p.~4]{hajek1998metamathematics})
\footnote{Indeed, when reasoning about belief, using fuzzy logic semantics instead of probabilistic logic semantics straight out-of-the-box, can yield undesirable results: Consider an event $a$ where $p(a)$ (probability of $a$) is 0.5. Now consider a disjunction, where $p(a \vee a)$ has the value $0.5$. However, in \luk{} logic, $S(a, a)$ would yield 1. }.

However, since a considerably large amount of problems addressed by machine learning literature is probabilistic (as it has mathematical origins in statistics), the classification task used in our running example is also of probabilistic origin. 
With this choice we also aimed to respect the recent literature: Applications of fuzzy operators on a general set of problems which are not necessarily fuzzy is not uncommon in neuro-symbolic AI. 
Examples include \citep{Serafini2016}, \citep{riegel2020logical}, \cite{arakelyanComplexQueryAnswering2021}, among others cited in Section \ref{chapter:related_work}.

\section{Conclusion}
We analyzed \dfuzz in order to understand how reasoning using logical formulas behaves in a differentiable setting. We examined how the properties of a large amount of different operators affect \dfl. 
We have found substantial differences between the properties of a large number of such \dfuzz operators, and we showed that many of them, including some of the most popular operators, are highly unsuitable for use in a differentiable learning setting. 
By analyzing aggregation functions, we found that the log-product aggregator and the RMSE aggregator have convenient connections to both fuzzy logic and machine learning and can deal with outliers. 
Next, we analyzed conjunction and disjunction operators and found several strong candidates. 
In particular, the Gödel t-norm and t-conorm are a simple choice, and that the Yager t-norm and the product t-conorm have intuitive derivatives.

We noted an interesting imbalance between derivatives with respect to the negated antecedent and the consequent of the implication. Because the modus tollens case is much more common, we conclude that a large part of the useful inferences on the MNIST experiments are made by decreasing the antecedent, or by `modus tollens reasoning'. Furthermore, we found that derivatives with respect to the consequent often increase the truth value of something that is false as the consequent is false in the majority of times. Therefore, we argue that `modus tollens reasoning' should be embraced in future research. As a possible solution to problems caused by this imbalance, we introduced a smoothed fuzzy implication called the Reichenbach-sigmoidal implication. 

Experimentally, we found that the product t-norm is the only t-norm that can be used as a base for all choices of operators. The product t-conorm and the Reichenbach implication have intuitive derivatives  that correspond to inference rules from classical logic, and the log-product aggregator is the most effective universal aggregation operator. 

In order to gain the largest improvements over a supervised baseline however, we had to abandon the normal symmetric configurations of norms, where t-norms, t-conorms, implications and the aggregation operators satisfy the usual algebraic relations. Instead, we had to resort to non-symmetric configurations where operators based on different t-norms are combined. 
The Reichenbach-sigmoidal implication performs best in our experiments. 
Its hyperparameters can be tweaked to decrease the imbalance of the derivatives with respect to the negated antecedent and consequent. 
For existential quantification, we found that the general mean error performs best, and for conjunction and disjunction the family of Yager t-norms and the Nilpotent minimum has the highest final accuracy. 

We believe a proper empirical comparison of different methods that introduce background knowledge through logic could be useful to properly understand the details, performance, possible applications and challenges of each method. 
Secondly, we believe more work is required in using background knowledge to help deep models train on real-world problems. One research direction would be to develop methods that can properly deal with exceptions. An approach in which formula importance weights can be learned could be used to distinguish between relevant and irrelevant formulas in the background knowledge, and probabilistic instead of fuzzy logics could be a more natural fit. 
Lastly, additional research on the vast space of fuzzy logic operators might find more properties that are useful in \dfl.

\medskip
\section*{Declaration of competing interest}
The authors declare that they have no known competing financial interests or personal relationships that could have appeared to influence the work reported in this paper.
\section*{Acknowledgements}
We sincerely thank all the anonymous reviewers whose comments substantially improved the content and the quality of this manuscript.
This work is partly funded by the MaestroGraph research programme with project number 612.001.552, which is financed by the Dutch Research Council (NWO). Erman Acar is generously funded by the Hybrid Intelligence Project which is financed by the Dutch Ministry of Education, Culture and Science. This work is also supported by the DAS-5 distributed supercomputer \pcite{bal2016medium}.
\bibliographystyle{elsarticle-harv}


\bibliography{references}

\appendix
\section{Background on Fuzzy Logic operators}
\label{sec:background-operators}

In this section, we will introduce the semantics of the fuzzy operators $\otimes$ (t-norm), $\oplus$ (t-conorm) and $\neg$ (negation) that are used to connect truth values of fuzzy predicates, and the semantics of the $\forall$ quantifier. We follow \pcite{Jayaram2008} in this section and refer to it for proofs and additional results.



\subsection{Fuzzy Negation}
\label{sec:negation}
The functions that are used to compute the negation of a truth value of a formula are called \textit{fuzzy negations}.
 \begin{deff}
 A \textit{fuzzy negation} is a decreasing function $N: [0, 1]\rightarrow [0, 1]$ so that $N(1) = 0$ and for all $x$, $N(N(x)) \geq x$ \pcite{cignoliClassLeftcontinuousTnorms2002}. $N$ is called \textit{strict} if it is strictly decreasing and continuous, and \textit{strong} if for all $a\in [0,1]$, $N(N(a)) = a$.
\end{deff}
A consequence of these conditions is that $N(0)=1$. Throughout the paper we also use $N$ to refer to the classical negation $N(a) = 1-a$.

\subsection{Triangular Norms}
 The functions that are used to compute the conjunction of two truth values are called \textit{t-norms}. For a rigorous overview, see \tcite{klement2013triangular}.

\begin{deff}
\label{deff:tnorm}
A \textit{t-norm} (triangular norm) is a function $T: [0,1]^2\rightarrow [0, 1]$ that is commutative and associative, and
\begin{enumerate}
    \item \textit{Monotonicity}: For all $a\in [0, 1]$, $T(a, \cdot)$ is increasing and
    \item \textit{Neutrality}: For all $a\in [0,1]$, $T(1, a) = a$.
\end{enumerate}
\end{deff}
The phrase `$T(a, \cdot)$ is increasing' means that whenever $0\leq b_1\leq b_2\leq 1$, then $T(a, b_1) \leq T(a, b_2)$.

\begin{deff}
\label{deff:tnormprops}
A t-norm $T$ can have the following properties:
\begin{enumerate}[a)]
    \item \textit{Continuity}: A continuous t-norm is continuous in both arguments.
    \item \textit{Left-continuity}: A left-continuous t-norm is left-continuous in both arguments. That is, for all $a, b\in [0,1]$, $\lim_{x\rightarrow a^-} T(x, b) = T(a, b)$ (the limit of $T(x, b)$ as $x$ increases and approaches $a$ is $a$).
    \item \textit{Idempotency}: An idempotent t-norm has the property that for all  $a\in [0,1]$, $T(a, a) = a$.
    \item \textit{Strict-monotony}: A strictly monotone t-norm  has the property that for all $a\in (0, 1]$, $T(a, \cdot)$ is strictly increasing.
    \item \textit{Strict}: A strict t-norm is continuous and strictly monotone.
\end{enumerate}
\end{deff}


Table $\ref{tab:tnorms}$ shows the four basic t-norms and two other t-norms of interest alongside their properties. 

\subsection{Triangular Conorms}
\label{appendix:t-norms}
The functions that are used to compute the disjunction of two truth values are called \textit{t-conorms} or \textit{s-norms}.
\begin{deff}
\label{deff:snorm}
A \textit{t-conorm} (triangular conorm, also known as s-norm) is a function $S: [0,1]^2\rightarrow [0, 1]$ that is commutative and associative, and
\begin{enumerate} 
    \item \textit{Monotonicity:} For all $a\in [0, 1]$, $S(a, \cdot)$ is increasing and
    \item \textit{Neutrality}: For all $a\in [0,1]$, $S(0, a) = a$.
\end{enumerate}
\end{deff}
T-conorms are obtained from t-norms using De Morgan's laws from classical logic, i.e. $p\vee q = \neg(\neg p \wedge \neg q)$. Therefore, if $T$ is a t-norm and $N_C$ the strong negation, $T$'s \textit{$N_C$-dual} $S$ is calculated using
\begin{equation}
\label{eq:tconorm}
    S(a, b) =  1 - T(1 - a, 1 - b)
\end{equation}

Table \ref{tab:snorms} shows several common t-conorms derived using Equation \ref{eq:tconorm} and the t-norms from Table \ref{tab:tnorms}, alongside the same optional properties as those for t-norms in Definition \ref{deff:tnormprops}. 

\subsection{Aggregation operators}
\label{appendix:aggregation}
The functions that are used to compute quantifiers like $\forall$ and $\exists$ are aggregation operators \pcite{Liu1998}. 

\begin{deff}
\label{deff:aggr}
An \textit{aggregation operator} \pcite{calvoAggregationOperatorsProperties2002} is a function $A: \bigcup_{n\in \mathbb{N}} [0, 1]^n\rightarrow [0, 1]$ that is non-decreasing with respect to each argument, and for which $A(0, ..., 0)=0$ and $A(1, ..., 1) = 1$. 
\end{deff}
Aggregation operators are \textit{variadic functions} which are functions that are defined for any sequence of arguments. 
For this reason we will often use the notation $\aggregate_{i=1}^n x_i:= A(x_1, ..., x_n)$. 
Table \ref{tab:aggregation} shows some common aggregation operators that we will talk about.
Furthermore, we will only consider \emph{symmetric} aggregation operators, that are invariant to permutation of the sequence. 

The $\forall$ quantifier is interpreted as the conjunction over all arguments $x$. Therefore, we can extend a t-norm $T$ from 2-dimensional inputs to $n$-dimensional inputs as they are commutative and associative \pcite{klement2013triangular}:
\begin{equation}
\label{eq:aggtnorm}
\begin{aligned}
    A_T() &= 0\\
    A_T(x_1, x_2, ..., x_n) &= T(x_1, A_T(x_2, ..., x_n))
\end{aligned}
\end{equation}
These operators are a straightforward choice for modelling the $\forall$ quantifier, as they can be seen as a series of conjunctions. All operators constructed in this way are \textit{symmetric} aggregation operators, for which the output value is the same for every ordering of its arguments. This generalizes commutativity. 

We can do the same for a t-conorm $S$ to model the $\exists$ quantifier:
\begin{equation}
\begin{aligned}
    E_S()&= 0\\
    E_S(x_1, x_2, ..., x_n) &= S(x_1, A_S(x_2, ..., x_n))
\end{aligned}
\end{equation}

\subsection{Fuzzy Implications}
\label{sec:fuzz_imp}
The functions that are used to compute the truth value of $p\rightarrow q$ are called fuzzy implications. $p$ is called the \textit{antecedent} and $q$ the \textit{consequent} of the implication. We follow \tcite{Jayaram2008} and refer to it for details and proofs.
\begin{deff}
\label{def:implication}
A \textit{fuzzy implication} is a function $I: [0, 1]^2\rightarrow [0, 1]$ so that for all $a, c\in [0, 1]$, $I(\cdot, c)$ is decreasing, $I(a, \cdot)$ is increasing and for which $I(0, 0) = 1$,  $I(1, 1) = 1$ and $I(1, 0) = 0$.
\end{deff}
From this definition follows that $I(0, 1) = 1$. 
\begin{deff}
\label{deff:implications_optional}
Let $N$ be a fuzzy negation. A fuzzy implication $I$ satisfies
\begin{enumerate}[a)]
    \item \textit{left-neutrality (LN)} if for all $c\in [0,1]$, $I(1, c) = c$;
    \item the \textit{exchange principle (EP)} if for all $ a,b,c\in[0,1]$,  $I(a, I(b, c)) = I(b, I(a, c))$;
    \item the \textit{identity principle (IP)} if for all $a\in[0,1]$, $I(a, a) = 1$;
    \item \textit{$N$-contrapositive symmetry (CP)} if for all $a, c\in [0,1]$, $I(a, c)=I(N(c), N(a))$;
    \item \textit{$N$-left-contrapositive symmetry (L-CP)} if for all $a, c\in [0,1]$, $I(N(a), c) = I(N(c), a)$;
    \item \textit{$N$-right-contrapositive symmetry (R-CP)} if for all $a,c\in[0,1]$, $I(a, N(c)) = I(c, N(a))$.
\end{enumerate}
\end{deff}
All these statements generalize a law from classical logic. \textit{Left neutrality} generalizes  $(1\rightarrow p) \equiv p$, the \textit{exchange principle} generalizes $p\rightarrow(q\rightarrow r) \equiv q\rightarrow(p\rightarrow r)$, and the \textit{identity principle} generalizes that $p\rightarrow p$ is a tautology. Furthermore, $N$\textit{-contrapositive symmetry} generalizes $p\rightarrow q \equiv \neg q \rightarrow \neg p$, $N$\textit{-left-contrapositive symmetry} generalizes $\neg p \rightarrow q \equiv \neg q \rightarrow p$ and $N$\textit{-right-contrapositive symmetry} generalizes $p\rightarrow \neg q \equiv q \rightarrow \neg p$. 

\subsubsection{S-Implications}
\label{appendix:s-implications}
In classical logic, the (material) implication is defined as follows:
\begin{equation*}
    p\rightarrow q = \neg p \vee q
\end{equation*}
Using this definition, we can use a t-conorm $S$ and a fuzzy negation $N$ to construct a fuzzy implication.
\begin{deff}
Let $S$ be a t-conorm and $N$ a fuzzy negation. The function $I_{S, N}: [0, 1]^2\rightarrow[0,1]$ is called an \textit{(S, N)-implication} and is defined for all $a, c\in [0, 1]$ as
\begin{equation}
\label{eq:s-impl}
    I_{S, N}(a, c) =  S(N(a), c).
\end{equation}
If N is a strong fuzzy negation, then $I_{S, N}$ is called an \textit{S-implication} (or strong implication).
\end{deff}
As we only consider the strong negation $N_C$, we omit the $N$ and use $I_S$ to refer to $I_{S, N_C}$

All S-implications $I_{S}$ are fuzzy implications and satisfy LN, EP and R-CP. Additionally, if the negation  $N$ is strong, it satisfies CP and if, in addition, it is strict, it also satisfies L-CP. 
In Table \ref{tab:simplications} we show several S-implications that use the strong fuzzy negation $N_C$ and the common t-conorms (Table \ref{tab:snorms}). Note that S-implications are rotations of the t-conorms.

\subsubsection{R-Implications}
\label{appendix:r-implications}
R-implications are another way of constructing implication operators. They are the standard choice in t-norm fuzzy logics. 



\begin{deff}
\label{deff:r-implication}
Let $T$ be a t-norm. The function $I_T: [0,1]^2\rightarrow [0, 1]$ is called an \textit{R-implication} and defined as
\begin{equation}
\label{eq:r-implication}
    I_T(a, c) = \sup\{b\in [0, 1]|T(a, b) \leq c\}
\end{equation}
\end{deff}
The \textit{supremum} of a set $A$, denoted $\sup\{A\}$, is the lowest upper bound of $A$. All R-implications are fuzzy implications, and all satisfy LN, IP and EP. $T$ is a left-continuous t-norm if and only if the supremum can be replaced with the maximum function.
Note that if $a\leq c$ then $I_T(a, c) = 1$. We can see this by looking at Equation \ref{eq:r-implication}. The largest value for $b$ possible is 1, since then, using the \textit{neutrality} property of t-norms, $T(a, 1) = a\leq c$.

Table \ref{tab:rimplications} shows the R-implications created from the common T-norms. Note that $I_{LK}$ and $I_{FD}$ appear in both tables: They are both S-implications and R-implications.

\section{Implementation of \dfuzz}
\label{sec:implementation}
\begin{algorithm}
    \caption{Computation of the \dfuzz loss. First it computes the fuzzy Herbrand interpretation $g$ given the current embedded interpretation $\interpretation$. This performs a forward pass through the neural networks that are used to interpret the predicates. Then it computes the valuation of each formula $\varphi$ in the knowledge base $\corpus$, implementing Equations \ref{eq:rlpred}-\ref{eq:rlaggr}.  }
    \label{alg:real_logic}
    \begin{algorithmic}[1] 
        \Function{$e$}{$\varphi, g, \constants, \mu$} \Comment{The valuation function computes the Fuzzy truth value of $\varphi$.}
            \If{$\varphi=\pred{P}(x_1, ..., x_m)$} 
                \State \textbf{return} $g[\pred{P}, (\mu(x_1), ..., \mu(x_m)]$ \Comment{Find the truth value of a ground atom using the dictionary $g$.}
            \ElsIf{$\varphi=\neg\phi$}
                \State \textbf{return} $N(e(\phi, g, \constants, \mu))$
            \ElsIf{$\varphi=\phi\otimes\psi$}
                \State \textbf{return} $T(e(\phi, g, \constants, \mu), e(\psi, g, \constants, \mu))$
            \ElsIf{$\varphi=\phi\oplus\psi$}
                \State \textbf{return} $S(e(\phi, g, \constants, \mu), e(\psi, g, \constants, \mu))$
            \ElsIf{$\varphi=\phi\rightarrow\psi$}
                \State \textbf{return} $I(e(\phi, g, \constants, \mu), e(\psi, g, \constants, \mu))$
            \ElsIf{$\varphi=\forall x\ \phi$} \Comment{Apply the universal aggregation operator.}
                \label{alg:quantifier}
                \State \textbf{return} $\aggregate_{o\in\constants}e(\phi, g, \constants, \mu\cup\{(x,o)\})$ \Comment{Each assignment can be seen as an instance of $\varphi$.}
            \ElsIf{$\varphi=\exists x\ \phi$} 
                \label{alg:quantifier}
                \State \textbf{return} $\Eaggregate_{o\in\constants}e(\phi, g, \constants, \mu\cup\{(x, o)\})$ 
            \EndIf
        \EndFunction
        \State
        \Procedure{\dfl}{$\interpretation, \predicates, \corpus, \objects, N, T, S, I, A, E$} \Comment{Computes the \dfuzz loss.}
            \State $\constants\gets o_1, ..., o_b \text{ sampled from } \objects$ \Comment{Sample $\batch$ constants to use this pass.}
            \label{alg:sample}
            \State $g\gets dict()$ \Comment{Collects truth values for ground atoms.}
            \For{$\pred{P}\in \predicates$}
                \For{$o_1, ..., o_{\alpha(\pred{P})} \in \constants$}
                    \State $g[\pred{P}, (o_1, ..., o_{\alpha(\pred{P})})]\gets \interpretation(\pred{P})(o_1, ..., o_{\alpha(\pred{P})})$ \Comment{Calculate the truth values of the ground atoms.}
                \EndFor
            \EndFor
            \label{alg:satisfaction}
            \State \textbf{return} $\aggregate_{\varphi\in\corpus} w_{\varphi}\cdot\val(\varphi, g, \constants, \emptyset)$ \Comment{Calculate valuation of the formulas $\varphi$. Start with an empty variable assignment. This implements Equation \ref{eq:lossrl}.}
        \EndProcedure
    \end{algorithmic}
\end{algorithm}


The computation of the satisfaction is shown in pseudocode form in Algorithm \ref{alg:real_logic}. By first computing the dictionary $g$ that contains truth values for all ground atoms,\footnote{The dictionary $g$ could be seen as a `fuzzy Herbrand interpretation', in that it assigns a truth value to all ground atoms.} we can reduce the amount of forward passes through the computations of the truth values of the ground atoms that are required to compute the satisfaction.

This algorithm can fairly easily be parallelized for efficient computation on a GPU by noting that the individual terms that are aggregated over in line \ref{alg:quantifier} (the different \textit{instances} of the universal quantifier) are not dependent on each other. By noting that formulas are in prenex normal form, we can set up the dictionary $g$ using tensor operations so that the recursion has to be done only once for each formula. This can be done by applying the fuzzy operators elementwise over vectors of truth values instead of a single truth value, where each element of the vector represents a variable assignment. 

The complexity of this computation then is $O(|\corpus| \cdot P\cdot \batch^{d})$, where $\corpus$ is the set of formulas, $P$ is the amount of predicates used in each formula and $d$ is the maximum depth of nesting of universal quantifiers in the formulas in $\corpus$ (known as the \textit{quantifier rank}). This is exponential in the amount of quantifiers, as every object from the constants $\constants$ has to be iterated over in line \ref{alg:quantifier}, although as mentioned earlier this can be mitigated somewhat using efficient parallelization. Still, computing the valuation for transitive rules (such as. $\forall x\, y, z \ \pred{Q}(x, z) \otimes \pred{R}(z, y) \rightarrow \pred{P}(x, y)$) will for example be far more demanding than for antisymmetry formulas (such as $\forall x, y \ \pred{P}(x, y) \rightarrow \neg \pred{P}(y, x)$).

\section{Proofs}
\subsection{Single-Passing}
\label{appendix:singlepassing}
\begin{prop}
Any composition of single-passing functions is also single-passing.
\end{prop}
\begin{proof}
We will prove this by structural induction. Let $f: \mathbb{R}^n\rightarrow \mathbb{R}$ be a single-passing function and let $x_1, ..., x_n\in \mathbb{R}$. Then clearly $f(x_1, ..., x_n)$ is single-passing.

Next, assume by induction that $g: \mathbb{R}^n\rightarrow \mathbb{R}$ is a composition of single-passing functions that we assume is single-passing. Let $y: \mathbb{R}^m \rightarrow \mathbb{R}$ be a single-passing function. Let $Z$ be the set of inputs to $y$ and define $x_i=y(Z)$. We show that the composition $g\left(x_1, ..., y(Z), ..., x_n\right)$ is also single-passing. 
For any $z\in Z$ holds that
\begin{equation}
\label{eq:proof_chain}
    \frac{\partial g\left(x_1, ..., x_n\right)}{\partial z}=\frac{\partial g\left(x_1, ..., x_n\right)}{\partial x_i}\frac{\partial y_i(Z_i)}{\partial z}.
\end{equation} As $g$ is single-passing, there is at most 1 number $j\in 1, ..., n$ so that $\frac{\partial g(x_1, ..., x_n)}{\partial x_j}\neq 0$. If there are 0, then there can also be no $k\in 1, ..., m$ such that $\frac{\partial g(x_1, ..., x_n)}{\partial z_k}\neq 0$ as $\frac{\partial g\left(x_1, ..., x_n\right)}{\partial x_i}=0$. 
If there is 1, then either $j\neq i$, which is the direct input $x_j$. If $j=i$, then by the assumption of $y(Z)$ being single-passing, there is at most 1 $k\in 1,...,m$ so that $\frac{\partial y(Z)}{\partial z_k}\neq 0$ and by Equation \ref{eq:proof_chain} there is at most 1 input such that $\frac{\partial g(x_1, ..., x_n)}{\partial x}\neq 0$. We conclude that the composition $g\left(x_1, ..., y(Z), ..., x_n\right)$ is single-passing.
\end{proof}

\subsection{Nonvanishing Fractions}
\subsubsection{\luk\ Aggregator}
\label{appendix:lukfrac}
\begin{prop}
The fraction of inputs $x_1, ..., x_n\in [0,1]$ for which the derivative of $A_{T_{LU}}$ is nonvanishing is $\frac{1}{n!}$.
\end{prop}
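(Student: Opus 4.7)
The plan is to show that the nonvanishing condition cuts out a region of $[0,1]^n$ whose Lebesgue measure is exactly $1/n!$, which equals the fraction of inputs under the uniform distribution on $[0,1]^n$.

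First I would read off the nonvanishing condition directly from the formula stated just above the proposition: the partial derivatives of $A_{T_{LU}}$ are either $1$ or $0$, and they are nonzero precisely when $\sum_{i=1}^n x_i > n-1$. So the fraction of interest is
\begin{equation*}
    \mathrm{vol}\bigl(\{(x_1,\ldots,x_n)\in[0,1]^n : x_1 + \cdots + x_n > n-1\}\bigr).
\end{equation*}

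Next I would apply the change of variables $y_i = 1 - x_i$, which is a measure-preserving bijection of $[0,1]^n$ onto itself (Jacobian determinant $\pm 1$). Under this substitution the condition $\sum_{i=1}^n x_i > n-1$ becomes $n - \sum_{i=1}^n y_i > n-1$, i.e.\ $\sum_{i=1}^n y_i < 1$. Hence the region in question has the same volume as the open standard $n$-simplex
\begin{equation*}
    \Delta_n = \Bigl\{(y_1,\ldots,y_n)\in[0,1]^n : \sum_{i=1}^n y_i < 1\Bigr\}.
\end{equation*}

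Finally I would invoke the classical fact that $\mathrm{vol}(\Delta_n)=1/n!$. This is a textbook computation, provable by induction on $n$ via Fubini: integrating out $y_n$ first, $\mathrm{vol}(\Delta_n) = \int_0^1 \mathrm{vol}(\Delta_{n-1}(1-y_n))\, dy_n$, where $\Delta_{n-1}(t)$ is the scaled $(n-1)$-simplex of side $t$ with volume $t^{n-1}/(n-1)!$ by the inductive hypothesis; this yields $\int_0^1 (1-y_n)^{n-1}/(n-1)!\, dy_n = 1/n!$. Combining the two steps gives the claimed fraction of $1/n!$. There is no real obstacle here; the only thing to be careful about is that the boundary set $\{\sum x_i = n-1\}$ has measure zero, so the choice of strict versus non-strict inequality does not affect the volume.
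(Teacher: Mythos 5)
Your proof is correct, but it takes a different route from the paper's. The paper treats $x_1,\ldots,x_n$ as independent standard uniform random variables, observes that $Y=\sum_i x_i$ is Irwin--Hall distributed, uses the symmetry of that distribution to convert $P(Y>n-1)$ into $P(Y<1)$, and then evaluates the (cited) closed-form Irwin--Hall cumulative distribution function at $1$, which collapses to $\frac{1}{n!}$. You instead perform the reflection $y_i=1-x_i$ explicitly (which plays the same role as the paper's appeal to distributional symmetry), identify the resulting region as the standard $n$-simplex, and compute its volume $\frac{1}{n!}$ by a self-contained Fubini induction. The two arguments are essentially reflections of one another: the paper's is shorter but leans on an external formula for the Irwin--Hall CDF, while yours is fully elementary and makes the geometric content (a corner simplex of the cube) transparent. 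Your remark that the boundary $\{\sum_i x_i = n-1\}$ has measure zero, so strict versus non-strict inequality is immaterial, is a small point of care the paper glosses over. No gaps.
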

\begin{proof}
Consider standard uniformly distributed random variables $x_1, ..., x_n\sim U(0, 1)$. The sum $Y=\sum_{i=1}^n x_i$ is Irwin-Hall distributed \pcite{irwin1927frequency,10.2307/2331961}. The cumulative density function of this distribution is 
\begin{equation}
    F_Y(y) = \frac{1}{n!}\sum_{k=0}^{\lfloor y \rfloor}(-1)^k {\binom{n}{k} } (y-k)^{n-1},
\end{equation}
where $\lfloor  \rfloor$ is the floor function. The derivative of $A_{T_{LU}}$ is nonvanishing when $Y > n-1$, or equivalently as the Irwin-Hall distribution is symmetric, when $Y < 1$. Using $F_Y$ gives $F_Y(1) = \frac{1}{n!}\left((-1)^0 {\binom{n}{0}} (1-0)^n + (-1)^1{\binom{n}{1} 1}(1-1)^n\right)=\frac{1}{n!}$.
\end{proof}

This result is the same for the bounded sum aggregator, as for that the condition is that $Y<1$. 

\subsubsection{Yager Aggregator}
\label{appendix:yagerfrac}
 \begin{prop}
 The fraction of inputs $x_1, ..., x_n\in[0, 1]$ for which the derivative of $A_{T_Y}$ with $p=2$ is nonvanishing is $\frac{\pi^{\frac{n}{2}}}{2^{n}\cdot \Gamma(\frac{1}{2}n + \frac{1}{2})}$, where $\Gamma$ is the Gamma function.
 \end{prop}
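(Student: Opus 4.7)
The plan is to reinterpret the quantity as the volume of a region inside the unit cube $[0,1]^n$, then reduce it to the classical volume formula for the Euclidean unit ball.

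First, I would substitute $z_i = 1 - x_i$. Because $x \mapsto 1-x$ is a measure-preserving bijection of $[0,1]$, uniformly sampled points in $[0,1]^n$ are mapped to uniformly sampled points in $[0,1]^n$, and the nonvanishing condition from Equation \ref{eq:aggr_yager_deriv} with $p=2$, namely $\sum_i (1-x_i)^2 < 1$, becomes $\sum_i z_i^2 < 1$. So the quantity of interest equals $\mathrm{Vol}(R_n)$, where $R_n = \{z \in [0,1]^n : \|z\|_2 < 1\}$.

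Next, I would identify $R_n$ geometrically. The open Euclidean unit ball $B_n = \{z \in \mathbb{R}^n : \|z\|_2 < 1\}$ is contained in $[-1,1]^n$, so $B_n \cap [0,1]^n$ coincides with $B_n \cap \mathbb{R}_{\geq 0}^n$. In other words, $R_n$ is precisely the non-negative orthant of the open unit ball. Since $B_n$ is invariant under each of the $2^n$ coordinate sign flips, every orthant contributes the same volume, and consequently $\mathrm{Vol}(R_n) = \mathrm{Vol}(B_n)/2^n$. Plugging in the classical formula $\mathrm{Vol}(B_n) = \pi^{n/2}/\Gamma(n/2+1)$ then yields the stated fraction.

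The argument is entirely routine once the change of variables is made; the only subtlety worth flagging is that the boundary sphere $\{\|z\|_2 = 1\}$ has Lebesgue measure zero in $\mathbb{R}^n$, so the fraction is unaffected by whether the inequality defining $R_n$ is strict. There is no real obstacle — the main bit of care is simply confirming that restricting $B_n$ to $[0,1]^n$ really does capture exactly one orthant rather than being clipped by the cube faces, which is immediate from $B_n \subset [-1,1]^n$.
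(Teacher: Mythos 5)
Your proof is correct and follows essentially the same route as the paper's: identify the nonvanishing region as one orthant of the unit $n$-ball (the paper keeps the ball centred at $(1,\dots,1)$ where you translate it to the origin via $z_i = 1-x_i$, an immaterial difference) and divide the standard ball-volume formula by $2^n$. One small note: both your argument and the paper's own computation actually produce $\pi^{n/2}/\bigl(2^n\,\Gamma(\tfrac{n}{2}+1)\bigr)$, so the $\Gamma(\tfrac{n}{2}+\tfrac{1}{2})$ appearing in the proposition statement is a typo in the paper rather than something your final plug-in step genuinely yields.
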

 \begin{proof}
The points $x_1, .., x_n\in \mathbb{R}$ for which $\sum_{i=1}^n(1-x_i)^2 < 1$ holds describes the volume of an $n$-ball\footnote{An $n$-ball is the generalization of the concept of a ball to any dimension and is the region enclosed by a $n-1$ hypersphere. For example, the 3-ball (or ball) is surrounded by a sphere (or 2-sphere). Similarly, the 2-ball (or disk) is surrounded by a circle (or 1-sphere). A hypersphere with radius 1 is the set of points which are at a distance of 1 from its center.} with radius 1. This volume is found by \pcite{ball1997elementary}(p.5):
 \begin{equation}
     V(n) = \frac{\pi^{\frac{n}{2}}}{\Gamma(\frac{1}{2} n + 1)}.
 \end{equation}
We are interested in the part of this volume where $x_1, ..., x_n\in [0, 1]$, that is, those in a single orthant\footnote{An orthant in $n$ dimensions is a generalization of the quadrant in two dimensions and the octant in three dimensions. } of the $n$-ball. The amount of orthants in which an $n$-ball lies is $2^n$.\footnote{To help understand this, consider $n=2$. The $1$-ball is the circle with center $(0, 0)$. The area of this circle is evenly distributed over the four quadrants.} Thus, the volume of the part of the $n$-ball where $x_1, ..., x_n\in [0,1]$ is $\frac{V(n)}{2^{n}}=\frac{\pi^{\frac{n}{2}}}{2^{n}\cdot \Gamma(\frac{1}{2}n + \frac{1}{2})}$. As the total volume of points lying in $[0, 1]^n$ is 1, this is also the fraction of points for which the derivative of $A_{T_Y}$ with $p=2$ is nonvanishing.
 \end{proof}

\subsubsection{Nilpotent Aggregator}
\label{appendix:nilpfrac}
\begin{prop}
The fraction of inputs $x_1, ..., x_n\in [0,1]$ for which the derivative of $A_{T_{nM}}$ is nonvanishing is $\frac{1}{2^{n-1}}$.
\end{prop}
\begin{proof}
Consider $n$ standard uniformly distributed random variables $x_1, ..., x_n\sim U(0, 1)$. We are interested in the probability that $x_{(1)}+x_{(2)} > 1$, where $x_{(k)}$ is the $k$-th smallest sample (known as the $k$-th order statistic \pcite{david2004order}). \tcite{weisberg1971} derives the cumulative density function for linear combinations of standard uniform order statistics. Let $k_0=0<k_1 < ... < k_S \leq k_n$ be $S$ integers indicating the coefficients $d_i > 0$. We aim to calculate the probability that $\sum_{i=1}^S d_i x_{(i)} > v$. Let $r_s=k_s-k_{s-1}$ for all $1, ..., S$ and let $r_{S+1}=n-k_{S}$. Finally, let $c_{S+1}=0$ and $c_{(s)}=c_{(s+1)} + d_i$. $m$ is the largest integer so that $v\leq c_{(m)}$. Then, \tcite{weisberg1971} finds that 
\begin{equation}
    P\left\{\sum_{s=1}^{S} d_{s} x_{\left(k_{s}\right)} > v\right\}=\sum_{s=1}^{m} \frac{g_{s}^{\left(r_{s}-1\right)}\left(c_{(s)}\right)}{\left(r_{s}-1\right) !}
\end{equation}
where $g^{(i)}_s$ is the $i$-th order derivative of 
\begin{equation}
g_s(c) = \frac{(c-v)^n}{c\prod_{i=1, i\neq s}^{S+1}(c - c_{(i)})^{r_i}}
\end{equation}

Filling this in for our case, we find that $S=2$, where $k_1=1$, $k_2=2$ as $d_1 = d_2 = 1$. Therefore, $r_1=r_2=1$ and $r_3=n-2$ and $c_{(1)}=2$, $c_{(2)}=1$ and $c_{(3)}=0$. The largest integer $m$ so that $1 \leq c_{(m)}$ is 2. Filling this in, we find that
\begin{align}
    P\left\{x_{(1)} + x){(2)} > 1\right\} &=\frac{g_1^{(1-1)}(2)}{(1-1)!}+\frac{g_2^{(1-1)}(1)}{(1-1)!} \\
    &= \frac{(2-1)^n}{2(2-1)^1(2-0)^{n-2}}+\frac{(1-1)^n}{1(1-2)(1-0)^{n-2}} = \frac{1}{2^{n-1}}
\end{align}
\end{proof}

\subsection{Implications}
\label{appendix:implication-proofs}
\begin{prop}
    If a fuzzy implication $I$ is $N_C$-contrapositive symmetric, where $N_C$ is the strong negation, it is also contrapositive differentiable symmetric. 
\end{prop}
\begin{proof}
    Say we have an implication $I$ that is $N_C$-contrapositive symmetric. 
    Because $I$ is $N_C$-contrapositive symmetric, $I(1-c, 1-a) = I(a, c)$. 
    Thus, $\dmtp{1-c}{1-a} = -\frac{\partial I(a, c)}{\partial 1-c}=\frac{\partial I(a, c)}{\partial c}$. 
\end{proof}
\begin{prop}
    \label{prop:diff_left_neutral}
    If an implication $I$ is left-neutral, then $\dmp{1}{c} = 1$. If, in addition, $I$ is contrapositive differentiable symmetric, then $\dmt{a}{0} = 1$.
    \end{prop}
\begin{proof}
    First, assume $I$ is left-neutral. Then for all $c\in[0,1]$, $I(1, c) = c$. Taking the derivative with respect to $c$, it is clear that $\dmp{1}{c} = 1$. Next, assume $I$ is contrapositive differentiable symmetric. 
    Then, $\dmp{1}{c} = \dmtp{1 - c}{1-1} = \dmt{1-c}{0} = 1$. As $1-c\in[0, 1]$, $\dmt{a}{0}=1$.
\end{proof}
\section{Derivations of Used Functions}
\subsection{$p$-Error Aggregators}
\label{appendix:yager}
The unbounded Yager aggregator is
\begin{equation}
    A_{UY}(x_1, ..., x_n) = 1 - \left(\sum_{i=1}^n(1 - x_i)^p\right)^{\frac{1}{p}}, \quad p\geq 0.
\end{equation}
We can do an affine transformation $w\cdot A_{UY}(x_1, ..., x_n) - h$ on this function to ensure the boundary conditions in Definition \ref{deff:aggr} hold.
\begin{align}
\label{eq:yageragg1}
    w\cdot A_{UY}(0, ..., 0) - h = 0 \\ 
\label{eq:yageragg2}
    w\cdot A_{UY}(1, ..., 1) - h = 1 
\end{align}
Solving Equation \ref{eq:yageragg1} and \ref{eq:yageragg2} for $h$, we find
\begin{align}
    w\cdot \left(1 - \left(\sum_{i=1}^n (1 - 0)^p\right)^{\frac{1}{p}}\right) - h &= 0 \notag \\
    w\cdot\left(1 -  n^{\frac{1}{p}}\right) - h &= 0 \notag \\
    h &= w - w\cdot \sqrt[p]{n},
     \label{eq:yageragg3}\\
     w\cdot \left(1 - \left(\sum_{i=1}^n (1 - 1)^p\right)^{\frac{1}{p}}\right) - h &= 1 \notag \\
     w\cdot \left(1 - 0\right) - h &= 1 \notag \\
     h &= w - 1.
     \label{eq:yageragg4}
\end{align}
Equating \ref{eq:yageragg3} and \ref{eq:yageragg4} and solving for $h$, we find
\begin{align}
    w - w\cdot \sqrt[p]{n} &=  w - 1 \notag \\
    w &= \frac{1}{\sqrt[p]{n}}. 
\end{align}
And so $h = \frac{1}{\sqrt[p]{n}} - 1$. Filling in and simplifying we find
\begin{align}
    A_{pE}(x_1, ..., x_n) &= \frac{1}{\sqrt[p]{n}}\cdot \left(1 -  \left(\sum_{i=1}^n(1 - x_i)^p\right)^{\frac{1}{p}} \right) - \left( \frac{1}{\sqrt[p]{n}} - 1 \right) \notag \\
    &= 1 -  \frac{1}{\sqrt[p]{n}}\cdot\left(\sum_{i=1}^n(1 - x_i)^p\right)^{\frac{1}{p}} \notag \\
    &= 1 -  \left(\frac{1}{n}\sum_{i=1}^n(1 - x_i)^p\right)^{\frac{1}{p}}
\end{align}
Similarly for the t-conorm $A_{UYS}(x_1, ..., x_n) = \left(\sum_{i=1}^n x_i^p\right)^{\frac{1}{p}},  p\geq 0.$

\begin{align}
    w\cdot \left(\sum_{i=1}^n 0^p\right)^{\frac{1}{p}} - h &= 0 \notag \\
    w\cdot 0 - h &= 0 \notag \\
    h &= 0 \\
    w\cdot \left(\sum_{i=1}^n 1^p\right)^{\frac{1}{p}} - h &= 1 \notag \\
    w\cdot n^{\frac{1}{p}} - h &= 1 \notag \\
    w &= \frac{1}{\sqrt[p]{n}}\\
    A_{p-MEAN}(x_1, ..., x_n) &= \left(\frac{1}{n}\sum_{i=1}^n x_i^p\right)^{\frac{1}{p}},  p\geq 0.
\end{align}

\subsection{Sigmoidal Functions}
\label{appendix:sigm}
In Machine Learning, the logistic function or sigmoid function $\sigma(x) = \frac{1}{1 + e^{-x}}$ is a common activation function \pcite{goodfellow2016deep}(p.65-66). This inspired \pcite{Sourek2018} to introduce parameterized families of aggregation functions they call Max-Sigmoid activation functions:

\begin{equation}
    A'_{\sigma +\wedge}(x_1, ..., x_n) = \sigma\left(s\cdot\left(\sum_{i=1}^n x_i - n + 1 + b_0\right)\right), \quad
    A'_{\sigma +\vee}(x_1, ..., x_n) = \sigma\left(s\cdot\left(\sum_{i=1}^n x_i + b_0\right)\right)
\end{equation}
We generalize this transformation for any function $f: [0, 1]^n\rightarrow \mathbb{R}$ that is symmetric and increasing: 
\begin{equation}
    A'_{\sigma f}(x_1, ..., x_n) = \sigma(s\cdot(f(x_1, ..., x_n) + b_0))
\end{equation}
This cannot be an aggregation function according to Definition \ref{deff:aggr} as $\sigma\in(0, 1)$ and so the boundary conditions $A'_\sigma(1, ..., 1)=1$ and $A'_\sigma(0, ..., 0)$ do not hold. We can solve this by adding two linear parameters $w$ and $h$, redefining $\sigma_f$ as
\begin{equation}
    \sigma_f(x_1, ..., x_n) = w\cdot\sigma(s\cdot(f(x_1, ..., x_n) + b_0)) - h
\end{equation}

For this, we need to make sure the lowest value of $f$ on the domain $[0, 1]^n$ maps to 0 and the highest to 1. For this, we define $inf_f=\inf\{f(x_1, ..., x_n)|x_1, ..., x_n\in[0, 1]\}$ and $sup_f=\sup\{f(x_1, ..., x_n)|x_1, ..., x_n\in[0, 1]\}$. This gives the following system of equations
\begin{align}
   \label{eq:sigmaggis0}
   A_\sigma(0, ..., 0) &= w\cdot\sigma(s\cdot(inf_f + b_0)) - h = 0 \\
   \label{eq:sigmaggis1}
   A_\sigma(1, ..., 1) &= w\cdot\sigma(s\cdot(sup_f + b_0)) - h = 1
\end{align}
First solve both equations for $w$, starting with Equation \ref{eq:sigmaggis0}:

\begin{align}
    w\cdot\sigma(s\cdot(inf_f + b_0)) - h &= 0 \notag \\
    \frac{1}{1+e^{-s\cdot(inf_f + b_0)}} &= \frac{h}{w} \notag \\
    w &=  h\cdot (1 + e^{-s\cdot(inf_f + b_0)})
    \label{eq:sigmeq2}
\end{align}
Likewise for Equation \ref{eq:sigmaggis1}:
\begin{align}
    w\cdot\sigma(s\cdot(sup_f + b_0)) - h &= 1 \notag \\
    \frac{1}{1+e^{-s\cdot(sup_f + b_0)}} &= \frac{1 + h}{w} \notag \\
    w &= (1 + h)\cdot (1 + e^{-s\cdot(sup_f + b_0)})
    \label{eq:sigmeq1}
\end{align}
Now we can solve for $h$ by equating Equations \ref{eq:sigmeq1} and \ref{eq:sigmeq2}. Then
\begin{align*}
    (1 + h)\cdot 1 + e^{-s\cdot(sup_f + b_0)} &=  h\cdot 1 + e^{-s\cdot(inf_f + b_0)}\\
    h &= \frac{1 + e^{-s\cdot(sup_f + b_0)}}{1 + e^{-s\cdot(inf_f + b_0)}-1 + e^{-s\cdot(sup_f + b_0)}}
\end{align*}
We thus get the following formula:
\begin{align}
    A_\sigma(x_1, ..., x_n)
    &= \frac{1 + e^{-s\cdot(sup_f + b_0)}}{e^{-s\cdot(inf_f + b_0)}-e^{-s\cdot(sup_f + b_0)}}\cdot \left(\left(1 + e^{-s\cdot(inf_f + b_0)}\right) \cdot \sigma\left(s\cdot\left(f(x_1, ..., x_n) + b_0\right)\right) - 1\right)
\end{align}

If $f$ is a fuzzy logic operator of which the outputs are all in $[0,1]$, the most straightforward choice of $b_0$ is $-\frac{1}{2}$. This translates the outputs of $f$ to $[-\frac{1}{2}, \frac{1}{2}]$ and so it uses a symmetric part of the sigmoid function. For some function $f\in [0, 1]^n\rightarrow [0,1]$, we find the following simplification, noting that the supremum of $f$ is $1$ and the infimum is $0$:
\begin{align}
        \sigma_{f}(x_1, ..., x_n) &= \frac{1 + e^{-s (1-\frac{1}{2})}}{e^{s(0-\frac{1}{2})}-e^{-s(1-\frac{1}{2})}}\cdot \notag \\
  &\left(\left(1 + e^{-s(0-\frac{1}{2})}\right) \cdot \sigma\left(s\cdot\left(f(x_1, ..., x_n) - \frac{1}{2}\right)\right) - 1\right) \\
  &= \frac{1 + e^{-\frac{s}{2}}}{e^{\frac{s}{2}}-e^{-\frac{s}{2}}}\cdot \frac{e^{\frac{s}{2}} - 1}{e^{\frac{s}{2}} - 1}\cdot 
  \left(\left(1 + e^{\frac{s}{2}}\right) \cdot \sigma\left(s\cdot\left(f(x_1, ..., x_n) - \frac{1}{2}\right)\right) - 1\right) \\
  &= \frac{e^{\frac{s}{2}} - e^{-\frac{s}{2}}}{(e^{\frac{s}{2}}-e^{-\frac{s}{2}})(e^{\frac{s}{2}} - 1)}\cdot 
  \left(\left(1 + e^{\frac{s}{2}}\right) \cdot \sigma\left(s\cdot\left(f(x_1, ..., x_n) - \frac{1}{2}\right)\right) - 1\right) \\
  &= \frac{1}{e^{\frac{s}{2}}-1}\cdot 
  \left(\left(1 + e^{\frac{s}{2}}\right) \cdot \sigma\left(s\cdot\left(f(x_1, ..., x_n) - \frac{1}{2}\right)\right) - 1\right) \\
  \label{eq:deriv_sigmoid_tnorms}
\end{align}

Next, we proof several properties of the sigmoidal implication. 

\begin{prop}
\label{prop:sigm_mono_increase_f}
For all $a_1, c_1, a_2, c_2\in[0,1]$, 
\begin{enumerate}
    \item if $I(a_1, c_1) < I(a_2, c_2)$, then also $\sigma_I(a_1, c_1)< \sigma_I(a_2, c_2)$;
    \item if $I(a_1, c_1) = I(a_2, c_2)$, then also $\sigma_I(a_1, c_1)=\sigma_I(a_2, c_2)$.
\end{enumerate}
\end{prop}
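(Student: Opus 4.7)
The key observation is that $\sigma_I(a,c)$ depends on $(a,c)$ \emph{only through} the value $I(a,c)$: writing $t = I(a,c)$, we have $\sigma_I(a,c) = g(t)$ for the single-variable function
\begin{equation*}
    g(t) \;=\; \frac{1 + e^{-s(1+b_0)}}{e^{-b_0 s}-e^{-s(1+b_0)}}\cdot \left(\left(1 + e^{-b_0 s}\right)\cdot \sigma\!\left(s(t+b_0)\right) - 1\right).
\end{equation*}
Once this is observed, part (2) is immediate: if $I(a_1,c_1)=I(a_2,c_2)$, then plugging the common value into $g$ yields the same output, so $\sigma_I(a_1,c_1)=\sigma_I(a_2,c_2)$.

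For part (1), the plan is to show that $g$ is strictly increasing on $[0,1]$. This reduces to verifying that the multiplicative constant in front of the bracket is positive, since the sigmoid $\sigma$ is strictly increasing and composing with the affine map $t\mapsto s(t+b_0)$ preserves strict monotonicity when $s>0$. The leading constant is
\begin{equation*}
    \frac{(1+e^{-s(1+b_0)})\bigl(1+e^{-b_0 s}\bigr)}{e^{-b_0 s}-e^{-s(1+b_0)}}.
\end{equation*}
The numerator is a product of two strictly positive terms. For the denominator, I would factor $e^{-b_0 s}-e^{-s(1+b_0)} = e^{-b_0 s}\bigl(1-e^{-s}\bigr)$, which is strictly positive because $s>0$ forces $e^{-s}<1$. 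So the constant is positive, and therefore $g$ is strictly increasing. Consequently, $I(a_1,c_1)<I(a_2,c_2)$ implies $g(I(a_1,c_1))<g(I(a_2,c_2))$, which is exactly $\sigma_I(a_1,c_1)<\sigma_I(a_2,c_2)$.

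The main obstacle, if any, is simply the sign check on the denominator, since it is easy to mis-sign the exponentials; factoring out $e^{-b_0 s}$ makes the inequality transparent and avoids a case split on the sign of $b_0$. No properties of $I$ beyond its range being in $[0,1]$ (so that $t+b_0$ lies in the domain where the sigmoid is well-defined and increasing) are needed.
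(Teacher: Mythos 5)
Your proposal is correct and follows essentially the same route as the paper's proof: both reduce the claim to observing that $\sigma_I$ factors through a single-variable map $t\mapsto w\cdot\sigma(s(t+b_0))-h$ and then verify that the leading coefficient is positive (you via the factorization $e^{-b_0 s}-e^{-s(1+b_0)}=e^{-b_0 s}(1-e^{-s})>0$, the paper via monotonicity of the exponential), after which strict monotonicity of the sigmoid gives part (1) and part (2) is immediate. The only difference is cosmetic, and your factorization of the denominator is arguably the cleaner sign check.
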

\begin{proof}
\begin{enumerate}
\item We note that $\sigma_I$ can be written as $\sigma_I(a, c) = w\cdot \sigma\left(s\cdot\left(I(a, c) +b_0\right)\right) - h$ for constants $w=\frac{\left(1 + e^{-s\cdot(1 + b_0)}\right)^2}{e^{-s\cdot b_0}-e^{-s\cdot(1 + b_0)}}$ and $h=\frac{1 + e^{-s\cdot(1 + b_0)}}{e^{-s\cdot b_0}-e^{-s\cdot(1 + b_0)}}$. As $s>0$, $-s\cdot b_0>-s\cdot(1 + b_0)$. Therefore, $e^{-s\cdot b_0}-e^{-s\cdot(1 + b_0)}>0$. Furthermore, as $e^{-\frac{s}{2}} > 0$ then certainly $\left(1 + e^{-\frac{s}{2}}\right)^2 > 0$. As both $e^{-s\cdot b_0}-e^{-s\cdot(1 + b_0)}>0$ and $\left(1 + e^{-\frac{s}{2}}\right)^2>0$, then also $w>0$. As $s>0$, $s\cdot\left(I(a_1, c_1) + b_0\right) < s\cdot\left(I(a_2, c_2) + b_0\right)$ as $I(a_1, c_1) < I(a_2, c_2)$. Next, note that the sigmoid function $\sigma$ is a monotonically increasing function. Using $w>0$ we find that $\sigma_I(a_1, c_1) = w\cdot\sigma(s\cdot\left(I(a_1, c_1) + b_0\right)) < w\cdot\sigma(s\cdot\left(I(a_2, c_2) + b_0\right))=\sigma_I(a_2, c_2)$.
\item $$
    \sigma_I(a_1, c_1) = w\cdot \sigma\left(s\cdot\left(I(a_1, c_1) +b_0\right)\right) - h =  w\cdot \sigma\left(s\cdot\left(I(a_2, c_2) +b_0\right)\right) - h = \sigma_I(a_2, c_2) 
$$
\end{enumerate}
\end{proof}

\begin{prop}
\label{prop:sigm_supthenone}
$\sigma_I(a, c)$ is 1 if and only if $I(a, c) = 1$. Similarly, $\sigma_I(a, c)$ is 0 if and only if $I(a, c) = 0$. 
\end{prop}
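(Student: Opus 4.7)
The plan is to leverage the linear-interpolation construction of $\sigma_I$ together with the strict-monotonicity result already established in Proposition~\ref{prop:sigm_mono_increase_f}. Concretely, $\sigma_I$ was defined precisely so that it can be written as $\sigma_I(a,c) = w\cdot\sigma\bigl(s\cdot(I(a,c)+b_0)\bigr)-h$ with the constants $w,h$ chosen (see Equations~\ref{eq:sigmaggis0} and \ref{eq:sigmaggis1} in \ref{appendix:sigm}) so that $w\cdot\sigma(s\cdot(0+b_0))-h=0$ and $w\cdot\sigma(s\cdot(1+b_0))-h=1$. Hence substituting $I(a,c)=0$ into the formula immediately yields $\sigma_I(a,c)=0$, and substituting $I(a,c)=1$ yields $\sigma_I(a,c)=1$. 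That handles both ``if'' directions by direct calculation.

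For the ``only if'' directions I would argue by contradiction using strict monotonicity. Recall that a fuzzy implication $I$ satisfies $I(1,1)=1$ and $I(1,0)=0$ by Definition~\ref{def:implication}, so there exist concrete witness pairs $(a^\top,c^\top)=(1,1)$ and $(a^\bot,c^\bot)=(1,0)$ at which $I$ attains the values $1$ and $0$ respectively. By the ``if'' direction we already have $\sigma_I(1,1)=1$ and $\sigma_I(1,0)=0$. Now suppose $\sigma_I(a,c)=1$ but $I(a,c)<1$. Then $I(a,c)<I(1,1)$, and part~1 of Proposition~\ref{prop:sigm_mono_increase_f} gives $\sigma_I(a,c)<\sigma_I(1,1)=1$, contradicting the assumption. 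An entirely symmetric argument handles the zero case: if $\sigma_I(a,c)=0$ but $I(a,c)>0$, then $I(1,0)<I(a,c)$, so $0=\sigma_I(1,0)<\sigma_I(a,c)$, again a contradiction.

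I expect the proof to be essentially mechanical once Proposition~\ref{prop:sigm_mono_increase_f} is in hand, because strict monotonicity immediately forces any preimage of $0$ or $1$ under $\sigma_I$ to coincide with the corresponding preimage under $I$. The only subtlety worth spelling out is that we must actually exhibit inputs at which $I$ takes the extreme values $0$ and $1$; this is where the boundary conditions $I(1,1)=1$ and $I(1,0)=0$ from Definition~\ref{def:implication} are used. No delicate calculation or additional machinery about the sigmoid itself is required beyond what Proposition~\ref{prop:sigm_mono_increase_f} already packages up.
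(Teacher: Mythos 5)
Your proof is correct and follows essentially the same route as the paper: the ``if'' directions by the construction of the linear transformation (the boundary conditions built into $w$ and $h$), and the ``only if'' directions by contradiction via part~1 of Proposition~\ref{prop:sigm_mono_increase_f}. The only cosmetic difference is that you explicitly name the witness points $(1,1)$ and $(1,0)$ from Definition~\ref{def:implication}, whereas the paper just posits ``some $a_2, c_2$ so that $I(a_2,c_2)=1$''; your version is marginally more self-contained.
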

\begin{proof}
Assume there is some $a, c\in[0, 1]$ so that $I(a, c) = 1$. By construction, $\sigma_I(a, c)$ is 1 (see \ref{appendix:sigm}).

Now assume there is some $a_1, c_1\in[0, 1]$ so that $\sigma_I(a_1,c_1) = 1$. Now consider some $a_2, c_2$ so that $I(a_2, c_2) = 1$. By the construction of $\sigma_I$, $\sigma_I(a_2, c_2) = 1$. For the sake of contradiction assume $I(a_1, c_1) < 1 $. However, by Proposition \ref{prop:sigm_mono_increase_f} as $I(a_1,c_1) < I(a_2,c_2)$ then $\sigma_I(a_1,c_1) < \sigma_I(a_2,c_2)$ has to hold. This is in contradiction with $\sigma_I(a_1, c_1) = \sigma_I(a_2, c_2) = 1$ so the assumption that $I(a_1, c_1)< 1$ has to be wrong and $I(a_1, c_1)=1$.

The proof for $I(a,c)=0$ is analogous.
\end{proof}

\begin{prop}
For all fuzzy implications $I$, $\sigma_I$ is also a fuzzy implication.
\end{prop}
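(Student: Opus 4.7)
The plan is to show that $\sigma_I$ inherits each defining property of a fuzzy implication from $I$ via the monotone transformation used to build it. The key observation is that $\sigma_I$ factors as $\sigma_I(a,c) = T(I(a,c))$, where $T(x) = w\cdot\sigma(s(x+b_0)) - h$ with $w$ and $h$ the constants derived in \ref{appendix:sigm}. Because $\sigma$ is strictly increasing and $s>0$, the map $T$ is strictly increasing on $\mathbb{R}$, and by construction $T(0)=0$ and $T(1)=1$ (this is exactly the content of Proposition \ref{prop:sigm_supthenone}, or equivalently the linear system that determined $w$ and $h$).

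First I would verify the codomain. Since $I$ is a fuzzy implication, $I(a,c)\in[0,1]$ for all $a,c\in[0,1]$. Strict monotonicity of $T$ together with $T(0)=0$ and $T(1)=1$ yields $T([0,1])=[0,1]$, so $\sigma_I:[0,1]^2\to[0,1]$.

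Next I would verify the boundary conditions. Applying $T$ to the three required values of $I$ gives $\sigma_I(0,0)=T(I(0,0))=T(1)=1$, $\sigma_I(1,1)=T(I(1,1))=T(1)=1$, and $\sigma_I(1,0)=T(I(1,0))=T(0)=0$.

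Finally I would verify monotonicity in each argument. Fix $c$ and take $a_1\le a_2$; then $I(a_1,c)\ge I(a_2,c)$ because $I(\cdot,c)$ is decreasing, and applying the increasing map $T$ preserves this inequality, giving $\sigma_I(a_1,c)\ge \sigma_I(a_2,c)$, so $\sigma_I(\cdot,c)$ is decreasing. The analogous argument with $I(a,\cdot)$ increasing shows $\sigma_I(a,\cdot)$ is increasing; this step is exactly Proposition \ref{prop:sigm_mono_increase_f} applied with one argument varying. There is no genuine obstacle here — the entire proof reduces to the two facts that $T$ is a strictly increasing bijection $[0,1]\to[0,1]$ and that $I$ already satisfies the fuzzy implication axioms — so the only care needed is bookkeeping the direction of monotonicity and reading off the three boundary evaluations.
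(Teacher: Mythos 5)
Your proof is correct and follows essentially the same route as the paper: monotonicity of $\sigma_I$ is inherited from $I$ because the outer map $x\mapsto w\cdot\sigma(s(x+b_0))-h$ is increasing (the content of Proposition \ref{prop:sigm_mono_increase_f}), and the boundary conditions follow because that map sends $0$ to $0$ and $1$ to $1$ (the content of Proposition \ref{prop:sigm_supthenone}). Making the transformation $T$ explicit and checking the codomain are harmless elaborations, not a different argument.
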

\begin{proof}
By Definition \ref{def:implication} $I(\cdot, c)$ is decreasing and $I(a, \cdot)$ is increasing. Therefore, by Proposition \ref{prop:sigm_mono_increase_f}.1, $\sigma_I(\cdot, c)$ is also decreasing and $\sigma_I(a, \cdot)$ is also increasing. Furthermore, $I(0, 0) = 1$, $I(1, 1) = 1$ and $I(1, 0)=0$. We find by Proposition \ref{prop:sigm_supthenone} that then also $\sigma_I(0, 0) = 1$, $\sigma_I(1, 1) = 1$ and $\sigma_I(1, 0) = 0$. 
\end{proof}

$I$-sigmoidal implications only satisfy left-neutrality if $I$ is left-neutral and $s$ approaches 0.

\begin{prop}
\label{prop:sigm_contrapos}
If a fuzzy implication $I$ is contrapositive symmetric with respect to $N$, then $\sigma_I$ also is.
\end{prop}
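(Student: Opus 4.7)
The plan is to exploit the fact that $\sigma_I$ depends on its arguments $(a,c)$ only through the value $I(a,c)$. Concretely, from Definition \ref{deff:sigmoidal} we can write
\begin{equation*}
    \sigma_I(a,c) = w \cdot \sigma\bigl(s\cdot(I(a,c)+b_0)\bigr) - h,
\end{equation*}
where $w = \frac{(1+e^{-s(1+b_0)})^2}{e^{-sb_0}-e^{-s(1+b_0)}}$ and $h = \frac{1+e^{-s(1+b_0)}}{e^{-sb_0}-e^{-s(1+b_0)}}$ are constants that do not depend on $(a,c)$. Thus $\sigma_I = \Phi \circ I$ for the single-variable function $\Phi(t) = w\cdot\sigma(s(t+b_0)) - h$.

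Given $N$-contrapositive symmetry of $I$, we have $I(a,c) = I(N(c), N(a))$ for all $a,c\in[0,1]$. Applying $\Phi$ to both sides gives
\begin{equation*}
    \sigma_I(a,c) = \Phi(I(a,c)) = \Phi(I(N(c),N(a))) = \sigma_I(N(c), N(a)),
\end{equation*}
which is exactly the $N$-contrapositive symmetry of $\sigma_I$.

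There is no real obstacle here: the proof is essentially one line once one observes that the sigmoidal transformation acts only on the output of $I$ and that composition with any fixed function preserves equalities of the form $I(a,c) = I(N(c),N(a))$. The only thing worth pointing out in the writeup is the explicit constants $w$ and $h$, to make clear that they depend on $s$ and $b_0$ but not on $(a,c)$, so the substitution $I(a,c) \mapsto I(N(c),N(a))$ genuinely leaves $\sigma_I$ unchanged.
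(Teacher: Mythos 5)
Your proof is correct and is essentially the paper's own argument: the paper routes the same observation through Proposition \ref{prop:sigm_mono_increase_f}.2 (equal values of $I$ give equal values of $\sigma_I$), which is precisely your point that $\sigma_I = \Phi\circ I$ for a fixed function $\Phi$ independent of $(a,c)$. The exact form of the constants $w$ and $h$ is immaterial to the argument, so nothing further is needed.
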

\begin{proof}
Assume we have an implication $I$ that is contrapositive symmetric and so for all $a, c\in[0, 1]$, $I(a, c) = I(N(c), N(a))$. By Proposition \ref{prop:sigm_mono_increase_f}.2, $\sigma_I(a, c) = \sigma_I(N(c), N(a))$. Thus, $\sigma_I$ is also contrapositive symmetric with respect to $N$.
\end{proof}
By this proposition, if $I$ is an S-implication, $\sigma_I$ is contrapositive symmetric and thus also contrapositive differentiable symmetric. 

\begin{prop}
If $I$ satisfies the identity principle, then $\sigma_I$ also satisfies the identity principle.
\end{prop}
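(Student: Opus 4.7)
The plan is to invoke Proposition \ref{prop:sigm_supthenone} directly. Recall that the identity principle for a fuzzy implication $I$ says $I(a,a) = 1$ for every $a\in[0,1]$. The goal is then to show $\sigma_I(a,a) = 1$ for every $a\in[0,1]$.

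First I would fix an arbitrary $a\in[0,1]$ and use the hypothesis that $I$ satisfies (IP) to conclude $I(a,a)=1$. Then I would apply Proposition \ref{prop:sigm_supthenone}, which states that $\sigma_I(a,c)=1$ if and only if $I(a,c)=1$, in the easy direction (right to left) with $c=a$, yielding $\sigma_I(a,a)=1$. Since $a$ was arbitrary, this establishes the identity principle for $\sigma_I$.

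There is essentially no obstacle here: the entire content of the claim has already been encapsulated in Proposition \ref{prop:sigm_supthenone}, which in turn depends on the monotonicity transfer result Proposition \ref{prop:sigm_mono_increase_f} and the construction of the normalising constants $w$ and $h$ that force $\sigma_I$ to attain $1$ precisely where $I$ attains $1$. So the proof is a one-line corollary and requires no further calculation.
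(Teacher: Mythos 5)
Your proof is correct and follows exactly the same route as the paper's: fix $a$, use (IP) to get $I(a,a)=1$, and apply Proposition \ref{prop:sigm_supthenone} to conclude $\sigma_I(a,a)=1$. Nothing is missing.
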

\begin{proof}
Assume we have a fuzzy implication $I$ that satisfies the identity principle. Then $I(a, a) = 1$ for all $a$. By Proposition \ref{prop:sigm_supthenone} it holds that $\sigma_I(a, a)$ is also 1.
\end{proof}

\subsection{Nilpotent Aggregator}
\label{appendix:nilpotent}
\begin{prop}
Equation \ref{eq:aggtnorm} is equal for the Nilpotent t-norm to
\begin{equation}
\label{eq:aggrnilp}
    A_{T_{nM}}(x_1, ..., x_n) = \begin{cases}
        \min(x_1, ..., x_n), &\text{if } x_i + x_j > 1;\ x_i \text{ and } x_j \text{ are the two lowest values in } x_1, ..., x_n   \\
        0, &\text{otherwise.}
    \end{cases}
\end{equation}
\end{prop}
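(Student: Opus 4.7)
My plan is to prove the identity by induction on $n$, exploiting the commutativity and associativity of $T_{nM}$ to reorder inputs. First I would fix notation: since any t-norm is commutative and associative, the value of $A_{T_{nM}}(x_1,\dots,x_n)$ is invariant under permutations of its arguments, so I may assume without loss of generality that the inputs are sorted as $x_1\le x_2\le \dots\le x_n$. Under this convention, the claim to be proved simplifies to: $A_{T_{nM}}(x_1,\dots,x_n)=x_1$ if $x_1+x_2>1$, and $0$ otherwise.

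The base case $n=2$ is immediate from the definition of $T_{nM}$ in Section~\ref{sec:background}. For the inductive step, I would expand via Equation~\ref{eq:aggtnorm} as
\begin{equation*}
A_{T_{nM}}(x_1,\dots,x_n)=T_{nM}\bigl(x_1,\,A_{T_{nM}}(x_2,\dots,x_n)\bigr),
\end{equation*}
and then apply the induction hypothesis to the inner aggregation $A_{T_{nM}}(x_2,\dots,x_n)$, whose two smallest arguments are $x_2$ and $x_3$. This splits the analysis into two cases. In the case $x_2+x_3>1$, the inner term equals $x_2$, and the outer step reduces to $T_{nM}(x_1,x_2)$, which by definition equals $x_1=\min$ when $x_1+x_2>1$ and $0$ otherwise, matching the claimed formula exactly. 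In the case $x_2+x_3\le 1$, the inner term is $0$; then the outer step is $T_{nM}(x_1,0)$, which is $0$ since $x_1+0\le 1$, and this also agrees with the formula provided I can show the right-hand side is $0$ in this regime.

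The only step requiring a brief argument is this last consistency check: I need $x_1+x_2\le 1$ whenever $x_2+x_3\le 1$, so that the ``two lowest sum to more than $1$'' condition indeed fails for the full tuple. This follows immediately from the sort order $x_1\le x_3$, which gives $x_1+x_2\le x_3+x_2\le 1$. I do not expect any genuine obstacle here; the key observation is simply that under sorting, the two smallest values of the tuple $(x_1,\dots,x_n)$ coincide with $(x_1,x_2)$ and the recursion reduces cleanly after one unfolding. The piecewise cases of $T_{nM}$ interact well with the induction because ``inner output $0$'' is absorbed correctly by the outer t-norm, and ``inner output $\min$'' feeds the outer application exactly the second-smallest value needed to re-invoke the $T_{nM}$ threshold.
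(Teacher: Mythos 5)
Your proof is correct and follows essentially the same route as the paper's: induction on $n$ via one unfolding of the recursive definition of $A_{T_{nM}}$, with a case split on whether the inner aggregate has already collapsed to $0$ or still equals the running minimum. Your WLOG-sorting step merely streamlines the paper's three-way case analysis (on the rank of the newly adjoined argument) into two cases, and your consistency check $x_1+x_2\le x_2+x_3\le 1$ is the same observation the paper makes when it notes that a zero inner aggregate forces the two lowest values of the full tuple to sum to at most $1$.
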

\begin{proof}

We will proof this by induction. Base case: Assume $n=2$. Then $A_{T_{nM}}(x_1, x_2) = T_{nM}(x_1, x_2)$. $x_1$ and $x_2$ are the two lowest values of $x_1, x_2$, so the condition in Equation \ref{eq:aggrnilp} would change to $x_1 + x_2 > 1$.

Inductive step: We assume Equation \ref{eq:aggrnilp} holds for some $n \geq 2$. Then by Equation \ref{eq:aggtnorm} \\$A_{T_{nM}}(x_1, ..., x_{n+1}) = T_{nM}(A_{T_{nM}}(x_1, ..., x_n), x_{n+1})$. Note that if $A_{T_{nM}}(x_1, ..., x_n)=0$ then $A_{T_{nM}}(x_1, ..., x_{n+1})$ is also 0 as $x_{n+1}\in[0, 1]$ and so $0 + x_{n+1} > 1$ can never hold. We identify three cases: 
\begin{enumerate}
\item If $x_{n+1}$ is the lowest value in $x_1, ..., x_{n+1}$, then $A_{T_{nM}}(x_1, ..., x_n)$ is either the second lowest value in $x_1, ..., x_{n+1}$ or 0. If it is 0, the sum of the second and third lowest values is not greater than 1, and so the sum of the two lowest values can neither be. If it is not, then $T_{nM}(A_{T_{nM}}(x_1, ..., x_n), x_{n+1})$ first compares if $A_{T_{nM}}(x_1, ..., x_n) + x_{n+1} > 1$, that is, if the sum of the two lowest values in $x_1, ..., x_{n+1}$ is higher than 1, and returns $x_{n+1}$ if this holds and 0 otherwise.

\item If $x_{n+1}$ is the second lowest value in $x_1, ..., x_{n+1}$, then $A_{T_{nM}}(x_1, ..., x_n)$ is either the lowest value in $x_1, ..., x_{n+1}$ or 0. If it is 0, the sum of the first and third lowest values is not greater than 1, and so the sum of the two lowest values can neither be. If it is not, then $T_{nM}(A_{T_{nM}}(x_1, ..., x_n), x_{n+1})$ first compares if $A_{T_{nM}}(x_1, ..., x_n) + x_{n+1} > 1$, that is, if sum of the two lowest values in $x_1, ..., x_{n+1}$ is higher than 1, and returns $A_{T_{nM}}(x_1, ..., x_n)$ if this holds and 0 otherwise.

\item If $x_{n+1}$ is neither the lowest nor second lowest value in $x_1, ..., x_{n+1}$, then the sum $s$ of the two lowest values in $x_1, ..., x_n$ is also the sum of the two lowest values in $x_1, ..., x_{n+1}$. If $A_{T_{nM}}(x_1, ..., x_n)$ is 0, then $s$ can not have been greater than 1 and so $A_{T_{nM}}(x_1, ..., x_{n+1})$ is also 0. If it is not, then $s>1$ and  $A_{T_{nM}}(x_1, ..., x_n)$ is the lowest value and surely $A_{T_{nM}}(x_1, ..., x_n) + x_{n+1} > 1$ as $x_{n+1}$ is at least as large as the second lowest value. 
\end{enumerate}
\end{proof}

By considering $E_{S_{nM}}(x_1, ..., x_n) = 1-A_{T_{nM}}(1-x_1, ..., 1-x_n)$, it is easy to see that 
\begin{equation}
E_{S_{nM}}(x_1, ..., x_n) = \begin{cases}
    \max(x_1, ..., x_n), &\text{if } x_i + x_j < 1;\ x_i \text{ and } x_j \text{ are the two largest values in } x_1, ..., x_n   \\
    1, &\text{otherwise.}
\end{cases}
\end{equation}
\subsection{Yager R-Implication}
\label{appendix:yagerrimpl}
The Yager t-norm is defined as $T_Y(a, b) = 1 - \left((1 - a)^p + (1 - b)^p\right)^{\frac{1}{p}}$. The Yager R-implication then is defined (see Definition \ref{deff:r-implication}) as 
\begin{equation}
    I_{T_Y}(a, c) = \sup\{b\in [0, 1]|T_Y(a, b)\leq c\}
\end{equation}

When $a\leq c$, $I_{T_Y}=1$ as $T_Y(a, 1) = a\leq c$. Assuming $a>c$, we find by filling in 
\begin{equation}
    I_{T_Y}(a, c) = \sup\{b\in [0, 1]|1 - \left((1 - a)^p + (1 - b)^p\right)^{\frac{1}{p}}\leq c\}, \quad a > c
\end{equation}
To get a closed-form solution of $I_{T_Y}$ we have to find the largest $b$ for which $1 - \left((1 - a)^p + (1 - b)^p\right)^{\frac{1}{p}}\leq c$. Solving this inequality for $b$, we find 
\begin{align}
    c &\geq 1 - \left((1 - a)^p + (1 - b)^p\right)^{\frac{1}{p}} \notag\\ 
    (1 - c)^p &\leq (1 - a)^p + (1 - b)^p \notag\\
    1 - b &\geq \left((1 - c)^p - (1-a)^p\right)^{\frac{1}{p}} \notag\\
    b &\leq 1 - \left((1 - c)^p - (1 - a)^p\right)^{\frac{1}{p}}.
\end{align}
If $a>c$, then $(1-c)^p > (1-a)^p$ and thus $(1 - c)^p - (1 - a)^p>0$. Furthermore, as $a, c\in [0, 1]$, $(1 - c)^p - (1 - a)^p\leq 1$. Therefore, it has to be true that $1 - \left((1 - c)^p - (1 - a)^p\right)^{\frac{1}{p}}\in[0, 1]$. The largest value $b\in[0, 1]$ for which the condition holds is then equal to $1 - \left((1 - c)^p - (1 - a)^p\right)^{\frac{1}{p}}$ as it is in $[0, 1]$ and satisfies the inequality. 

Combining this with the earlier observation that when $a\leq c$, $I_{T_Y} = 1$, we find the following R-implication:
\begin{equation}
    I_{T_Y}(a, c) = \begin{cases}
        1, & \text{if } a \leq c \\
        1 - \left((1 - c)^p - (1 - a)^p\right)^{\frac{1}{p}}, & \text{otherwise.}
      \end{cases}
\end{equation}

We plot $I_{T_Y}$ for $p=2$ in Figure \ref{fig:yager-r-s-impl}. 
As expected, $p=1$ reduces to the \luk\ implication. The derivatives of this implication are 
\begin{align}
    \dmpa{I_{T_Y}}(a, c) &= \begin{cases}
    ((1 - c)^p - (1 - a)^p)^{\frac{1}{p}-1}\cdot (1 - c) , & \text{if } a > c \\
    0, &\text{otherwise,}
    \end{cases}\\
    \dmta{I_{T_Y}}(a, c)&= \begin{cases}
    ((1 - c)^p - (1 - a)^p)^{\frac{1}{p}-1}\cdot (1 - a) , & \text{if } a > c \\
    0, &\text{otherwise.}
    \end{cases}
\end{align}

\section{\productlogic}
\label{appendix:prl-sl}
We compare \productlogic (\dpfl), which uses the product t- and t-conorm $T_P,\ S_P$, the Reichenbach implication $I_{RC}$ and the log-product aggregator $\logprod$, and a probabilistic logic method called Semantic Loss \pcite{pmlr-v80-xu18h}. 
\begin{deff}
Let $\predicates$ be a set of predicates, $\objects$ the domain of discourse, $\interpretation$ an embedded interpretation of $\fol$ and $\corpus$ a knowledge base of background knowledge. The \textit{Semantic Loss} is defined as
\begin{equation}
\label{eq:semantic_loss}
    \loss_S(\btheta;\corpus) = -\log \sum_{\world\models \corpus} \prod_{\world \models \predP(o_1, ..., o_m)} \interpretation(\predP)(o_1, ..., o_m) \prod_{\world \models \neg \predP(o_1, ..., o_m)} \left( 1- \interpretation(\predP)(o_1, ..., o_m)\right),
\end{equation}
where $\world$ is a \textit{world} (also known as a \textit{Herbrand interpretation}) that assigns a binary truth value to every ground atom and
where $\interpretation(\predP)(o_1, ..., o_m)$ is the probability of a ground atom.
\end{deff}
This computes the logarithm of the sum of probabilities of all worlds for which $\corpus$ holds, or the probability of sampling a world that is consistent with $\corpus$. The probability that a ground atom $\predP(o_1, ..., o_m)$ is true is $\interpretation(\predP)(o_1, ..., o_m)$. The different ground atoms are assumed to be independent. By marginalizing out the world $\world$, it can be used for injecting background knowledge in unsupervised or semi-supervised learning. Compared to \dpfl, Semantic Loss is exponential in the size of the amount of ground atoms as the sum over valid worlds has to be computed. Equivalent formulas have equal Semantic Loss, and a knowledge base consisting of a conjunction of facts is equal to the cross-entropy loss function.  

\dpfl is connected in an interesting way to Semantic Loss. It corresponds to a single iteration of the \textit{loopy belief propagation} algorithm \pcite{murphy1999}. 

\begin{prop}
\label{prop:dpfl}
Let $\varphi$ be a closed formula so that if $\predP_1(o_{11}, ..., o_{1m})$ and $\predP_2(o_{21}, ..., o_{2m})$ are both ground atoms appearing in $\varphi$, then $\predP_1(o_{11}, ..., o_{1m})=\predP_2(o_{21}, ..., o_{2m})$. Then it holds that $\loss_S(\btheta; \varphi)=-\val(\{\}, \varphi)$ when using $T=T_P$, $S=S_P$, $I=I_{RC}$ and $A=\logprod$.
\end{prop}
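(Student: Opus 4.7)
The plan is to prove $\loss_S(\btheta;\varphi) = -\val(\varphi;\emptyset)$ by first reducing from the quantified case to a quantifier-free propositional identity, and then carrying out a structural induction on that propositional formula. Throughout I read the hypothesis as ensuring that every ground atom of $\varphi$ occurs at most once (so in particular distinct instantiations of the outer $\forall$ yield disjoint atom sets); this is the tree condition under which the marginal probabilities composed from the product operators agree with exact probabilistic inference.

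First I would reduce to the quantifier-free case. Writing $\varphi$ in prenex normal form as $\forall x_1,\ldots,x_k\ \phi(x_1,\ldots,x_k)$ with $\phi$ quantifier-free, the log-product aggregator gives
\[
\val(\varphi) \;=\; \sum_{o_1,\ldots,o_k \in \objects} \log \val(\phi(o_1,\ldots,o_k)).
\]
On the semantic-loss side, since the ground atoms appearing in distinct instantiations $\phi(o_1,\ldots,o_k)$ are disjoint by hypothesis, worlds of $\varphi$ factor as products of per-instance worlds, so
\[
\loss_S(\btheta;\varphi) \;=\; -\log \prod_{o_1,\ldots,o_k} P(\phi(o_1,\ldots,o_k)) \;=\; -\sum_{o_1,\ldots,o_k} \log P(\phi(o_1,\ldots,o_k)),
\]
where $P(\phi)$ denotes the independent-atom marginal probability from the inner sum of Equation~\ref{eq:semantic_loss}. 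It therefore suffices to show $\val(\phi) = P(\phi)$ for every quantifier-free $\phi$ whose ground atoms are pairwise distinct.

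Second I would prove this quantifier-free identity by structural induction on $\phi$. For an atomic $\phi = \predP(o_1,\ldots,o_m)$, both sides evaluate to $\interpretation(\predP)(o_1,\ldots,o_m)$. For negation, $\val(\neg\phi) = 1 - \val(\phi) = 1 - P(\phi) = P(\neg\phi)$. For $\phi\wedge\psi$, the disjointness of atoms between $\phi$ and $\psi$ yields independence, giving $P(\phi\wedge\psi) = P(\phi)P(\psi) = T_P(\val(\phi),\val(\psi))$. For $\phi\vee\psi$, the product t-conorm $S_P(a,b) = a + b - ab$ is exactly the inclusion-exclusion formula for two independent events, matching $P(\phi\vee\psi)$. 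For $\phi\to\psi$, the Reichenbach implication $I_{RC}(a,c) = 1 - a + ac = 1 - a(1-c)$ agrees with $1 - P(\phi)(1 - P(\psi)) = P(\neg\phi\vee\psi) = P(\phi\to\psi)$. Each step closes by the inductive hypothesis, and combining with the reduction above yields the claim.

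The main obstacle is making the disjointness argument airtight: the hypothesis must cover both sub-formulas of the same connective and distinct instantiations of the outer quantifier, since the identity is highly sensitive to repeated atoms — e.g.\ for $\varphi = \predP\wedge\predP$ one would have $\val(\varphi) = p^2$ but $\loss_S(\btheta;\varphi) = -\log p$, which do not agree. This is precisely why the proposition stipulates a structural condition on the ground atoms of $\varphi$, and the proof is in essence the classical observation that loopy belief propagation is exact on tree-structured factor graphs, specialized to the product-based fuzzy operators of \dpfl.
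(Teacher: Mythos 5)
Your proof is correct, but it takes a genuinely different route from the paper's. You argue directly by structural induction that, under the no-repeated-atoms hypothesis, the product-based operators $T_P$, $S_P$, $N_C$ and $I_{RC}$ compute exactly the marginal probability $P(\phi)$ of each subformula under independent ground atoms (product for independent conjunction, inclusion--exclusion for disjunction, $P(\neg\phi\vee\psi)$ for implication), and then factor the world distribution across the disjoint instantiations of the outer quantifier so that $\logprod$ matches the outer $-\log$ of the Semantic Loss. The paper instead encodes $p(\varphi\mid\interpretation)$ as a factor graph built from the parse tree of $\varphi$ plus one variable node per ground atom, writes out the (loopy) belief-propagation messages, and shows by induction that a single upward sweep of messages reproduces the \dpfl valuation; the hypothesis on ground atoms is then invoked only to conclude that the factor graph is a tree, where belief propagation is exact. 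For the proposition as stated, your argument is shorter and more self-contained; the paper's heavier machinery is there to support the surrounding narrative that \dpfl still equals one iteration of \emph{loopy} belief propagation when atoms do repeat, i.e.\ it doubles as an approximation result outside the tree case. Two small points to tighten: the identity $\loss_S(\btheta;\varphi)=-\val(\varphi;\emptyset)$ needs $\varphi$ to be universally quantified at the top so that $\logprod$ actually supplies the logarithm matching the $-\log$ in the Semantic Loss (the paper likewise assumes $\varphi=\forall x_1,\ldots,x_n\ \phi$ without loss of generality); and your closing counterexample $\predP\wedge\predP$ compares $\val(\varphi)=p^2$ against $\loss_S=-\log p$, quantities that differ for the unrelated reason that no aggregator, hence no logarithm, is applied to an unquantified formula --- a cleaner witness for the necessity of the hypothesis is $\forall x\ \predP(x)\wedge\predP(x)$ over a one-element domain, where $-\val=-2\log p$ while $\loss_S=-\log p$.
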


As there are no loops when each ground atom appears uniquely in $\varphi$, the factor graph over which loopy belief propagation is done is a tree. As $\val(\{\}, \varphi)$ corresponds to a single iteration of loopy belief propagation, this is equal to regular belief propagation which is an exact method for computing queries on probabilistic models \pcite{pearl1988}. Clearly, this condition on $\varphi$ is very strong. Although loopy belief propagation is known to often be a good approximation empirically \pcite{murphy1999}, the degree to which \dpfl approximates Semantic Loss requires further research as this is not a guarantee. However, if \dpfl approximates Semantic Loss well, it can be a strong alternative as it is not an exponential computation. However, it also means that most problems of \dpfl will also be present in Semantic Loss. For example, if we just have the formula $\forall\pred{raven}(x)\rightarrow\pred{black}(x)$, the grounding of the knowledge base will not contain repeated ground atoms, and thus Semantic Loss and \dpfl are equivalent and share difficulties related to the imbalance of modus ponens and modus tollens.

\subsection{Proof}
In this section, we proof Proposition \ref{prop:dpfl}. Without loss of generality we assume that $\corpus=\varphi=\forall\ x_1, ..., x_n\ \phi$. Slightly rewriting Equation \ref{eq:semantic_loss}, we find that the probability distribution of Semantic Loss is 
\begin{equation}
p(\varphi|\interpretation) = \sum_{\world} p(\varphi|\world) p(\world|\interpretation)
\end{equation}
where we define the valuation probability $p(\varphi|\world) = I[\world\models\varphi]$ and the world probability $$p(\world|\interpretation) = \prod_{\world \models \predP(o_1, ..., o_k)} \interpretation(\predP)(o_1, ..., o_k) \prod_{\world \models \neg \predP(o_1, ..., o_k)} \left( 1- \interpretation(\predP)(o_1, ..., o_k)\right).$$
A Bayesian network is a joint distribution factorized as $p(x) = \prod_{i=1}^n p(x_i|x_{\{pa(i)\}})$ where $x_{\{pa(i)\}}$ is the set of random variables that are parents of $x_i$. In particular, we are interested in the joint distribution $p(\varphi, \world|\interpretation)$. We use the compositional structure of $\varphi$ to expand $p(\varphi|\world)$. 

Let $p(\phi|ch(\phi))$ be the probability that $\phi$, a subformula of $\varphi$, is true according to the binary truth table conditioned on the truth value of the direct subformulas $ch(\phi)$ of $\phi$. For example, if $\phi=\alpha\otimes \beta$, then $p(\phi|\alpha, \beta)=1$ if $\alpha$ and $\beta$ are 1, and otherwise $p(\phi|\alpha, \beta)=0$. For an atomic formula $\predP(o_1, ..., o_k)$, we do a lookup in the world $\world$, $p(\predP(o_1, ..., o_k|\world_{\predP(o_1, ..., o_k)})=\world_{\predP(o_1, ..., o_k)}$. Let $\Phi$ be the set of all subformulas of $\varphi$. We express the joint distribution as

\begin{equation}
    p(\Phi, \world|\interpretation) = p(\world|\interpretation)\prod_{\phi\in \Phi}p(\phi|ch(\phi)). 
\end{equation}

A specific world $\world$ uniquely determines a single $\Phi$ so that $\prod_{\phi\in \Phi}p(\phi|ch(\phi))=1$. Note that the distribution $p(\varphi|\world)=\prod_{\phi\in \Phi}p(\phi|ch(\phi))$ forms a polytree (or directed tree), as a logical expression is formed as a tree. From this Bayesian network, we define the factor graph over which we do the belief propagation. For brevity, we denote a specific ground atom $\predP(o_1, ..., o_k)$ as $\predP_O$.

We note that we use $m$ to denote an instantiation instead of $\mu$ in this section. $\mu$ is used to denote the messages as customary in belief propagation. 

\begin{itemize}
    \item There is a variable node $w_{\predP_O}$ for every ground atom $\predP_O$ appearing in the grounding of $\varphi$. Additionally, there is a factor node \\ $f_{w_{\predP(o_1, ..., o_k)}}(w_{\predP_O})=\interpretation(\predP)(o_1, ..., o_k)^{w_{\predP_O}} \cdot \left(1 - \interpretation(\predP)(o_1, ..., o_k)\right)^{1-w_{\predP_O}}$.
    \item There is a variable node $\phi$ and a factor node $f_\phi$ for every subformula $\phi\in \Phi$.
    \item For every $\phi=\predP_O,\ \phi\in\Phi$, $f_\phi(\phi, w_{\predP_O})=I[\phi = w_{\predP_O}]$. 
    \item For every $\phi=\neg \alpha,\ \phi\in\Phi$, $f_\phi(\phi, \alpha) = I[\phi=1 - \alpha]$.
    \item For every $\phi=\alpha\otimes \beta,\ \phi\in\Phi$, $f_\phi(\phi, \alpha, \beta) = I[\phi = \alpha\cdot \beta]$.
    \item Let  $\varphi=\forall\ x_1, ..., x_n\ \phi$ be the top node. Denote the set of all instances of $\varphi$ is $M$. Then $f_\varphi(\varphi, m_1, ..., m_{|M|}) = I[\varphi=\prod_{m\in M}\alpha_m]$ where $e_\mu$ is the random variable corresponding to the instantiation of $m$ in $\phi$.
\end{itemize}

We ignore the other connectives as they can be formed from $\neg$ and $\otimes$, both in classical logic as in \dpfl. Next, we compute the messages in belief propagation. We start from the world variable nodes $w_{\predP_O}$ and move up through the computation tree to $\varphi$.  The messages for factors to variables are given as \pcite{Bishop2006PatternLearning} $\mu_{f_s\rightarrow x}(x)=\sum_{X} f_s(x, X) \prod_{y \in ne(f_s) \setminus x}\mu_{y\rightarrow f_s}(y)$ where $ne(x)$ is the set of neighbours of node $x$. The messages for variables to factors are given as $\mu_{x\rightarrow f_s}(x) = \prod_{l\in ne(x)\setminus f_s}\mu_{f_l\rightarrow x}(x)$.

\begin{enumerate}[a.]
    \item $\mu_{f_{\world_{\predP_O}}\rightarrow \world_{\predP_O}}(\world_{\predP_O})=\interpretation(\predP)(o_1, ..., o_k)^{\world_{\predP_O}} \cdot \left(1 - \interpretation(\predP)(o_1, ..., o_k)\right)^{1-w_{\predP_O}}$, factor to variable for ground atom.
    \item $\mu_{\world_{\predP_O}\rightarrow f_\phi}(\world_{\predP_O})=\mu_{f_{\world_{\predP_O}} \rightarrow \world_{\predP_O}}(\world_{\predP_O})\prod_{\psi:\psi=\predP_O, \phi\neq\psi}\mu_{f_\psi \rightarrow \world_{\predP_O}}=\mu_{f_{\world_{\predP_O}}\rightarrow \world_{\predP_O}}(\world_{\predP_O})$, ground atom variable to atomic formula factor $\phi=\predP(o_1, ..., o_k)$. We here assume that the incoming messages $\mu_{f_\psi \rightarrow \world_{\predP_O}}$ from other atomic formulas using ground atom $\predP_O$ are initialized with 1. We have not yet, and are not able to, compute these as the graph might contain loops. 
    \item $\mu_{f_\phi\rightarrow \phi}(\phi) = I[\phi=1]\mu_{\world_{\predP_O}\rightarrow f_\phi}(1) + I[\phi=0]\mu_{\world_{\predP_O}\rightarrow f_\phi}(0)=\interpretation(\predP)(o_1, ..., o_k)^{\phi} \cdot \left(1 - \interpretation(\predP)(o_1, ..., o_k)\right)^{1-\phi}$, factor to variable for atomic formulas.
    \item $\mu_{\phi \rightarrow f_\alpha}(\alpha) = \mu_{f_\alpha \rightarrow \alpha}(\alpha)$ for subformula variables to factors of other subformulas $\alpha$. As $\phi$ is only used in two factors, this simply passes the downstream message through.
    \item $\mu_{f_\phi\rightarrow \phi}(\phi)=\mu_{f_\alpha\rightarrow \alpha}(1)^{1-\phi}\cdot \mu_{f_\alpha\rightarrow \alpha}(0)^{\phi}$ factor to variable for negated subformulas $\phi=\neg \alpha$.
    \item $\mu_{f_\phi \rightarrow \phi}(\phi)=\left(\mu_{f_\alpha\rightarrow \alpha}(1) \cdot \mu_{f_\beta\rightarrow \beta}(1)\right)^\phi \cdot \left(\mu_{f_\alpha\rightarrow \alpha}(0) \cdot \mu_{f_\beta\rightarrow \beta}(0) + \mu_{f_\alpha\rightarrow \alpha}(1) \cdot \mu_{f_\beta\rightarrow \beta}(0) + \mu_{f_\alpha\rightarrow \alpha}(0) \cdot \mu_{f_\beta\rightarrow \beta}(1)\right)^{1-\phi}$ factor to variable for conjunctions $\phi=\alpha\otimes \beta$.
    \item $\mu_{f_\varphi \rightarrow \varphi}(\varphi)=\left(\prod_{m\in M} \mu_{f_{\alpha_m}\rightarrow \alpha_m}(1)\right)^\varphi\cdot \left(\sum_{\alpha_1, ..., \alpha_{|M|}:\exists i: \alpha_i=0}\prod_{m\in M} \mu_{f_{\alpha_m}\rightarrow \alpha_m}(\alpha_m)\right)^{1-\varphi}$ for the factor of the universally quantified formula $\varphi=\forall x_1, ..., x_n\ \phi$ where $\alpha_m$ is the subformula corresponding to the instantiation $m$. The second term is the sum over all cases so that there is a subformula with truth value 0. 
\end{enumerate}

We wish to know what the marginal probability $p(\varphi=1|\interpretation)$ is. A marginal of a variable $\phi$ in a factor graph is found as $p(\phi) = \prod_{s\in ne(\phi)}\mu_{f_s\rightarrow x}(\phi)$. The variable node $\varphi$ only has the factor node $f_\varphi$ as a neighbor, so using (g.) we find\footnote{We use approximation equality for simplification. Note that this is not necessarily true in loopy belief propagation.}
\begin{equation}
\label{eq:marginalvarphi}
    p(\varphi=1|\interpretation) \approx \prod_{m\in M} \mu_{\alpha_m\rightarrow f_\varphi}(1).
\end{equation}
Next, we use induction to proof that the computation of $\mu_{f_{\alpha_m}\rightarrow \alpha_m}(1)$ is equal to \productlogic. 

\begin{itemize}
    \item Let $\phi$ be any subformula of $\alpha_m$ for some instantiation $m$ of $\varphi$. Let $\val$ be the valuation function (Definition \ref{deff:val}) with $T=T_P$ and $N=N_C$. We proof that $\mu_{f_\phi\rightarrow \phi}(1) = \val(m, \phi)$ and $\mu_{f_\phi\rightarrow \phi}(0)=1-\val(m, \phi)$. 
    \item Base case $\phi=\predP_O$. By (c.), $\mu_{f_\phi\rightarrow\phi}(1)=\interpretation(\predP)(o_1, ..., o_k)$ and $\mu_{f_\phi\rightarrow\phi}(0)=1-\interpretation(\predP)(o_1, ..., o_k)$. By Equation \ref{eq:rlpred}\footnote{Equation \ref{eq:rlpred} refers to the ungrounded case, but here the full grounding is already done so the lookup function $l$ is not required.}, $\val(m, \phi) = \interpretation(\predP)(o_1, ..., o_k)$.
    \item Inductive step $\phi=\neg \alpha$. By (e.) and using the inductive hypothesis, $\mu_{f_\phi\rightarrow \phi}(1)=\mu_{f_\alpha\rightarrow \alpha}(0)=1 - \val(\alpha, m)$ and $\mu_{f_\phi\rightarrow \phi}(0)=\mu_{f_\alpha\rightarrow \alpha}(1)=\val(m, \alpha)$. By Equation \ref{eq:rlneg}, $\val(m, \phi) = 1 - \val(m, \alpha)$.
    \item Inductive step $\phi=\alpha \otimes \beta$. By (f.) and using the inductive hypothesis, $\mu_{f_\phi\rightarrow \phi}(1) = \mu_{f_\alpha\rightarrow \alpha}(1)\cdot \mu_{f_\beta\rightarrow \beta}(1) = \val(m, \alpha)\cdot \val(m, \beta)$ and $\mu_{f_\phi\rightarrow \phi}(0) = \mu_{f_\alpha\rightarrow \alpha}(0)\cdot \mu_{f_\beta\rightarrow \beta}(0) + \mu_{f_\alpha\rightarrow \alpha}(1)\cdot \mu_{f_\beta\rightarrow \beta}(0) + \mu_{f_\alpha\rightarrow \alpha}(0)\cdot \mu_{f_\beta\rightarrow \beta}(1) = (1-\val(m, \alpha)) (1-\val(m, \beta)) + \val(m, \alpha)(1-\val(m, \beta)) + (1-\val(m, \alpha))\val(m, \beta) = 1-\val(m, \alpha)\cdot \val(m, \beta)$. By Equation \ref{eq:rlconj} and $T_P(a, c)=a\cdot c$, $\val(m, \phi) = \val(m, \alpha)\cdot \val(m, \beta)$.
\end{itemize}

Using Equation \ref{eq:marginalvarphi} we then find that $p(\varphi=1|\interpretation)\approx \prod_{m\in M}\val(m, \phi_m)$, which is equal to the \productlogic computation of the universal quantifier in Equation \ref{eq:rlaggr}.

Importantly, as the computation of the logic is itself a tree, the only loops are caused through ground atoms appearing in multiple subformulas. Therefore, when each ground atom only appears in a single formula, \productlogic computes the same probability as Semantic Loss. 

\subsection{Example}
For example, a formula $\predP$ corresponds to a variable node $\predP$ with two possible values 1 and 0, along with the factor node with factor $\interpretation(\predP)()$ if $\predP$ is $true$ and $1-\interpretation(\predP)()$ otherwise. If we consider the formula $\gamma=\predP \otimes \neg(\predP\otimes \predQ)$ we find the following factor graph:

\begin{tikzpicture}
\begin{scope}[]
    \node[Factor, label={$I[\predP1=\world_\predP]$}] (f1) at (8, 3) {$f_{\predP1}$};
    \node[Factor, label={$I[\predP2=\world_\predP]$}] (f4) at (12, 3) {$f_{\predP2}$};
    \node[Factor, label={$I[\predQ=\world_\predQ]$}] (f2) at (0, 3) {$f_\predQ$};
    \node[Variable, label={$\predP$}] (v1) at (6, 3) {$v_{\predP1}$};
    \node[Variable, label={right:{$\predP$}}] (v4) at (12, 1.5) {$v_{\predP2}$};
    \node[Variable, label={$\predQ$}] (v2) at (2, 3) {$v_\predQ$};
    \node[Factor, label={$I[\alpha=\predP\cdot\predQ]$}] (f3) at (4, 3) {$f_\alpha$};
    \node[Variable, label={below:{$\predP\otimes\predQ$}}] (v3) at (4, 0) {$v_\alpha$};
    
    \node[Variable, label={below:{$\neg(\predP\otimes\predQ)$}}] (v5) at (8, 0) {$v_\beta$};
    \node[Factor, label={below:{$I[\beta=1-\alpha]$}}] (f5) at (6, 0) {$v_\beta$};
    \node[Variable, label={below:{$\predP\otimes\neg(\predP\otimes\predQ)$}}] (v6) at (12, 0) {$v_\gamma$};
    \node[Factor, label={below:{$I[\gamma=\predP \cdot \beta]$}}] (f6) at (10, 0) {$f_\gamma$};
    
    \node[Variable, label={right:{$\world_Q$}}] (vq) at (0, 1.5) {$v_{\world_\predQ}$};
    \node[Factor, label={right:{$\interpretation(\predQ)()$}}] (fq) at (0, 0) {$f_{\world_\predQ}$};
    
    \node[Variable, label={$\world_P$}] (vp) at (10, 3) {$v_{\world_\predP}$};
    \node[Factor, label={left:{$\interpretation(\predP)()$}}] (fp) at (10, 1.5) {$f_{\world_\predP}$};
    
\end{scope}

\begin{scope}[]
    \path (f1) edge (v1);
    \path (f2) edge (v2);
    \path (v1) edge (f3);
    \path (v2) edge (f3);
    \path (v3) edge (f3);
    \path (fp) edge (vp);
    \path (vp) edge (f1);
    \path (vp) edge (f4);
    
    \path (v5) edge (f6);
    \path (f4) edge (v4);
    \path (v6) edge (f6);
    \path (v4) edge (f6);
    \path (v5) edge (f5);
    \path (v3) edge (f5);
    \path (fq) edge (vq);
    \path (vq) edge (f2);
\end{scope}
\end{tikzpicture}

Here, box nodes correspond to factor nodes and circle nodes correspond to variable nodes. As $\gamma$ is the top formula, this is where the messages get passed to. Note that there is a single loop, which is present because the atom $\predP$ is used twice in the formula. This causes two incorrect messages: $\mu_{v_{\world_\predP}\rightarrow f_{\predP 1}}$ and $\mu_{v_{\world_\predP}\rightarrow f_{\predP 2}}$. The first is incorrect as it does not have access to the incoming message $\mu_{f_{\predP 2}\rightarrow v_{\world_\predP}}$ and puts it to 1. 

With \productlogic, we find the expression $\interpretation(\predP)()\cdot\left(1 - \interpretation(\predP)()\cdot \interpretation(\predQ)()\right)$, while the correct probability is where $\predP$ is 1 and $\predQ$ is 0, that is $\interpretation(\predP)()\cdot\left(1 - \interpretation(\predQ)()\right)$.

\section{Additional experiments}
\label{appendix:experiments}
In this Appendix, we report additional experiments on the \pred{same} problem. 
Throughout this Appendix we use vanilla stochastic gradient descent with a learning rate of 0.01 and 0.5 momentum instead of ADAM. 
The reason for this is that the best configurations seem to perform better with this optimizer, while the other configurations seem to perform better with ADAM. 
\subsection{Comparing different formulas of the \pred{same} problem}
\label{appendix:formulas}
We note what the values of $\mpupdateratio$ and $\mtupdateratio$ roughly are for each of the used formulas if we were to pick at random.
\begin{enumerate}
\item $ \forall x, y\ \pred{zero}(x)\otimes \pred{zero}(y) \rightarrow \pred{same}(x, y), ..., \forall x, y\ \pred{nine}(x)\otimes \pred{nine}(y) \rightarrow \pred{same}(x, y) $. 
For this formula, $\mpupdateratio\geq \frac{1}{10}$ as it is the distribution of $\pred{same}(x, y)$\footnote{It is slightly more than $\frac{1}{10}$ because we are using a minibatch of examples. Therefore, the reflexive pairs (i.e., $\pred{same}(x, x)$) are common.} and $\mtupdateratio\leq \frac{99}{100}$ as it is 1 minus the probability that both $x$ and $y$ are $\pred{zero}$. The modus ponens case is true in more than $\frac{1}{100}$ cases, the modus tollens casein less than $\frac{9}{10}$ cases and the `distrust' option in more than $\frac{9}{100}$ cases.

\item $ \forall x, y\ \pred{zero}(x) \otimes \pred{same}(x, y) \rightarrow \pred{zero}(y), ..., \forall x, y\ \pred{nine}(x) \otimes \pred{same}(x, y) \rightarrow \pred{nine}(y) $. 
For this formula, $\mpupdateratio = \frac{1}{10}$ as it is the probability that a digit represents zero and $\mtupdateratio\leq \frac{99}{100}$. The modus ponens cases is true in more than $\frac{1}{100}$ cases, the modus tollens in $\frac{9}{10}$ cases and the `distrust' option in $\frac{9}{100}$ cases.

\item $ \forall x, y\ \pred{same}(x, y) \rightarrow \pred{same}(y, x) $. 
As this is a bi-implication, $\mpupdateratio\geq\frac{1}{10}$ and $\mtupdateratio\leq \frac{9}{10}$. The `distrust' option is not possible in this formula.
\end{enumerate}

From this, we can see that a set of operators is better than random guessing for the consequent updates if $\mpupdateratio>0.1$. It is more difficult to say what the value of $\mtupdateratio$ should be to be as good as random guessing, as the probabilities are upper bounded with the lowest bound at $0.9$. We can only say that we know a set of operators to be better than random if $\mtupdateratio > 0.99$.
\subsection{Varying the Aggregators}
\label{sec:mnist_symmetric_aggregator}
In this section, we analyze symmetric configurations in the \pred{same} problem, except that we use aggregators other than the one formed by extending the t-norm. In particular, we will consider the RMSE aggregator ($A_{GME}$ with $p=2$) and the log-product aggregator $\logprod$.

\begin{table}[h]
\centering
\begin{tabular}{l....|....}
    &  \multicolumn{4}{c}{$\logprod, w_{\dfl}=10$} & \multicolumn{4}{c}{$A_{RMSE}, w_{\dfl}=1$} \\
\hline 
\multicolumn{1}{l}{}     & \mc{Accuracy} & \mc{$\mpratio$} & \mc{$\mpupdateratio$} & \mc{$\mtupdateratio$} & \mc{Accuracy} & \mc{$\mpratio$} & \mc{$\mpupdateratio$} & \mc{$\mtupdateratio$}               \\
\hline
 $T_G$        & 96.3          & 0.10            & \bft{0.89}            & 0.97 \
                                    & 96.2          & 0.10            & \bft{0.89}            & 0.97                           \\
 $T_{LK}$     & 96.6          & 0.5             & 0.33                  & 0.67                           
                                    & 96.9          & 0.5             & 0.06                  & 0.95                           \\
                     $T_P$        & \bft{96.7}    & 0.44            & 0.48                  & 0.69                           
                     & \bft{97.0}    & 0.08            & 0.81                  & 0.98 \\
                     $T_Y,\ p=2$  & 96.4          & 0.40            & 0.54                  & 0.74                           
                     & 96.6          & 0.12            & 0.87                  & 0.97                           \\
                     $T_Y,\ p=20$ & 96.0          & 0.29            & 0.51                  & 0.78                           
                     & 95.9          & 0.18            & 0.82                  & 0.98                           \\
                     $T_{Nm}$     & 95.4          & 0.29            & 0.44                  & 0.84                            
                     & 95.4          & 0.03            & 0.62                  & \bft{0.99}                           \\

\hline
\end{tabular}
\caption{Configurations using the RMSE aggregator with $w_{\dfl}=1$ and the log product aggregator with $w_{dfl}=10$.}
\label{table:mnist_symmetric_rmse}
\end{table}

Table \ref{table:mnist_symmetric_rmse} shows the results when using the RMSE aggregator and a \dfl weight of 1 and the log product aggregator and a \dfl weight of 10. 
Nearly all configurations perform significantly better using these aggregators than when using their `symmetric' aggregator. In particular, the Gödel, \luk\ and Yager t-norms all outperform the baseline with both aggregators as they are differentiable everywhere and can handle outliers.

The product t-norm seems to do slightly worse with the RMSE aggregator than with the log-product aggregator. Like we discussed in Section \ref{sec:prod-implication}, $\mpratio$ is higher using this aggregator because the corners $a_i=0,\ c_i=0$ and $a_i=1,\ c_i=1$ will have no gradient when using the RMSE aggregator.
However, the values of $\mpupdateratio$ and $\mtupdateratio$ are much lower than when using the log-product aggregator. This could have to do with the previously made point: As it no longer has a gradient of 1 at the corners $a=0,\ c=0$ and $a=1,\ c=1$, the large gradients are only when the agent is not yet confident about some prediction. This case is inherently `riskier', but also contributes more information. It is not as informative to increase the confidence of $a=0$ if $a$ is already very low.


The \luk\ t-norm has a particularly high accuracy of 96.9\% with the log product and is on the level of performance of the product t-norm. However, it has a very low value for $\mpupdateratio$ of 0.06 and a relatively low value for $\mtupdateratio$. Interestingly, it is also the only configuration for which $\mpupdateratio$ is higher when using the RMSE aggregator than the log-product aggregator. 

\subsection{Reichenbach-Sigmoidal Implication}
\label{sec:mnist_rcsigmoidal}
The newly introduced Reichenbach-sigmoidal implication $\sigma_{I_{RC}}$ is a promising candidate for the choice of implication as we have argued in Section \ref{sec:sigm_implication}. 
To get a better understanding of this implication, we investigate the effect of its parameters in the \pred{same} problem. 
We fix the aggregator to the log-product, the conjunction operator to the Yager t-norm with $p=2$, and use a \dfl weight of $w_{\dfl}=10$.

\begin{figure}
\centering
\begin{subfigure}[b]{0.49\linewidth}
\includegraphics[width=\linewidth]{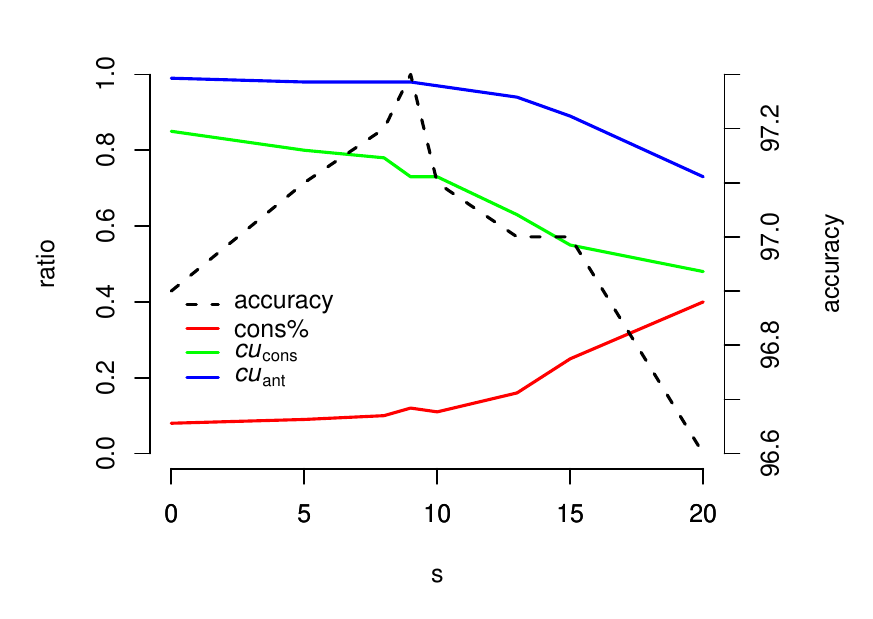}
\end{subfigure}
\begin{subfigure}[b]{0.49\linewidth}
\includegraphics[width=\linewidth]{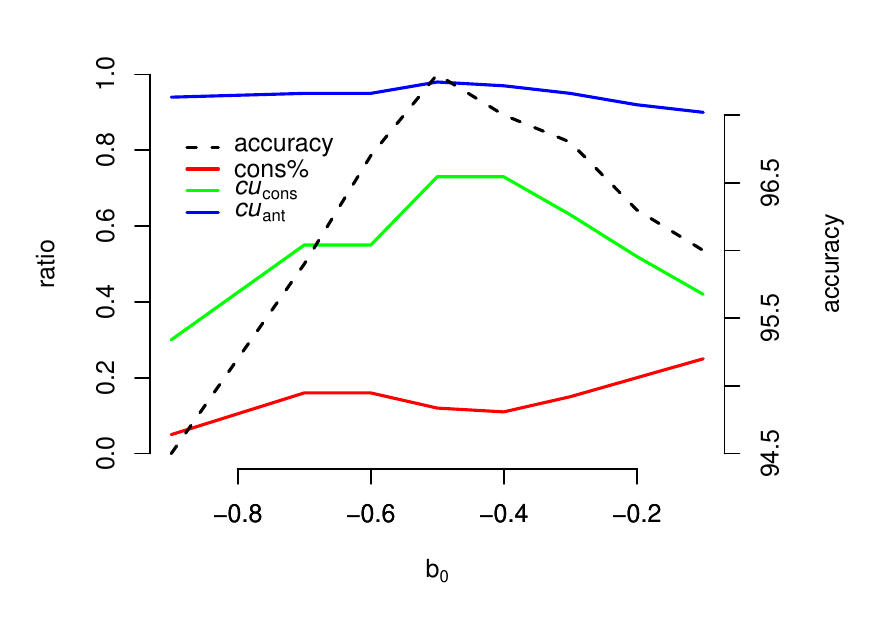}
\end{subfigure}%
\caption{The results using the Reichenbach-sigmoidal implication $\sigma_{I_{RC}}$, the $\logprod$ aggregator, $T_Y$ with $p=2$ and $w_{\dfl}=10$. Left shows the results for various values of $s$, keeping $b_0$ fixed to -0.5, and right shows the results for various values of $b_0$, keeping $s$ fixed to 9.}
\label{fig:mnist_s_experiments}
\end{figure}


On the left plot of Figure \ref{fig:mnist_s_experiments} we find the results when we experiment with the parameter $s$, keeping $b_0$ fixed to $-\frac{1}{2}$. Note that when $s$ approaches 0 the Reichenbach-sigmoidal implication is $I_{RC}$. The value of 9 gives the best results, with 97.3\% accuracy. 
This outperforms the other implications, also when using the vanilla SGD optimizer, as displayed in Table \ref{table:mnist_S-implication_experiments}.
Interestingly enough, there seem to be clear trends in the values of $\mpratio$,\ $\mpupdateratio$ and $\mtupdateratio$. Increasing $s$ seems to increase  $\mpratio$. This is because the antecedent derivative around the corner $a=0,\ c=0$ will be low, as argued in Section \ref{sec:sigm_implication}. When $s$ increases, the corners will be more smoothed out. 
Furthermore, both $\mpupdateratio$ and $\mtupdateratio$ decrease when $s$ increases. This could again be because around the corners the derivatives become small. Updates in the corner will likely be correct as the model is already confident about those. For a higher value of $s$, most of the gradient magnitude is at instances on which the model is less confident. We note that the same happened when using the RMSE aggregator and the product t-norm. Regardless, the best parameter value clearly is not the one for which the values of $\mpupdateratio$ and $\mtupdateratio$ are highest, namely the Reichenbach implication itself.

On the right plot of Figure \ref{fig:mnist_s_experiments} we experiment with the value of $b_0$. Clearly, $-\frac{1}{2}$ works best, having the highest accuracy and $\mpupdateratio$. 

\begin{table}
    \centering
    \begin{tabular}{l...c|l....}
    \hline
                                    & \mc{Accuracy} & \mc{$\mpratio$} & \mc{$\mpupdateratio$} & \mc{$\mtupdateratio$}   &  & \mc{Accuracy} & \mc{$\mpratio$} & \mc{$\mpupdateratio$} & \mc{$\mtupdateratio$}             \\
    \hline                    
    $I_{KD}$                        & 96.1          & 0.10            & \bft{0.88}            & 0.97             &              
    $I_G$                           & 90.6          & 1               & 0.07                     &                              \\
    $I_{LK}$                        & \bft{97.0}    & 0.5             & 0.03                  & 0.97                 &          
    $I_{LK}$                        & \bft{97.0}    & 0.5             & 0.03                  & 0.97                           \\
    $I_{RC}$                        & 96.9          & 0.08            & 0.85                  & \textbf{0.99} &
    $I_{GG}$                        & 94.0          & 0.86            & 0.01                  & 0.97                           \\
    $I_Y,\ p=2$                     & 96.6          & 0.12            & 0.87                  & 0.97                   &        
    $I_{T_Y}$   & 95.4          & 0.58            & 0.03                  & \bft{0.99}                              \\
    $I_{FD}$ & 96.3    & 0.07            & 0.65            & 0.96  &
    $I_{FD}$ & 96.3    & 0.07            & 0.65            & 0.96 \\ 

    & & & & & $I_{T_Y},\ p=0.5$               & 96.3          & 0.18            & 0.65                  & 0.37                              \\
    \hline
    \end{tabular}
    \caption{The results using $T_Y,\ p=2$ for the conjunction, $\logprod$ for the aggregator with $w_1=10$ and several S-implications and R-implications. These are run using vanilla SGD instead of ADAM.}
    \label{table:mnist_S-implication_experiments}
\end{table}

\subsection{Influence of Individual Formulas}
\label{sec:mnist_formulas_experiments}
\begin{table}

\centering
\begin{tabular}{l....}
\hline
Formulas                         & \mc{Accuracy} & \mc{$\mpratio$} & \mc{$\mpupdateratio$} & \mc{$\mtupdateratio$}               \\
\hline
(1) \& (2) & \bft{97.1}    & 0.05            & 0.54                  & \bft{0.99} \\
(2) \& (3)  & 95.9          & 0.12            & \bft{0.75}            & 0.95                           \\
(1) \& (3)  & 96.3          & 0.15            & 0.52                  & 0.98                           \\
(1)                     & 95.6          & 0.05            & 0.59                  & \bft{1.00} \\
(2)                     & 95.2          & 0.03            & \bft{0.78}            & 0.99                           \\
(3)                      & \bft{95.8}    & 0.19            & 0.64                  & 0.95               \\
\hline
\end{tabular}
\caption{The results using $\sigma_{I_{RC}}$ for the implication with $s=9$ and $b_0=-\frac{1}{2}$, $T_Y, p=2$ for the conjunction and $\logprod$ for the aggregator with $w_{\dfl}=10$, leaving some formulas of the \pred{same} problem out. The numbers indicated the formulas that are present during training.  }
\label{table:mnist_formula_experiments}
\end{table}
Finally, we compare what the influence of the different formulas of the \pred{same} problem are in Table \ref{table:mnist_formula_experiments}. Removing the reflexivity formula (3) does not largely impact the performance. The biggest drop in performance is by removing formula (1) that defines the $\pred{same}$ predicate. Using only formula (1) gets slightly better performance than only using formula (2), despite the fact that no positive labeled examples can be found using formula (1) as the predicates $\pred{zero}$ to $\pred{nine}$ are not in its consequent. Since 95\% of the derivatives are with respect to the negated antecedent, this formula contributes by finding additional counterexamples. Furthermore, improving the accuracy of the $\pred{same}$ predicate improves the accuracy on digit recognition: Just using the reflexivity formula (3) has the highest accuracy when used individually, even though it does not use the digit predicates.

\end{document}